\documentclass[accepted]{uai2026} %

    \usepackage[american]{babel}
    
    \usepackage{natbib}
    \usepackage{mathtools}
    \usepackage{amsthm}
    \usepackage{booktabs}
    \usepackage{tikz}
    \usepackage{amsfonts}
    \usepackage{hyphenat}
    \usepackage{algorithm}
    \usepackage{algorithmicx}
    \usepackage{algpseudocode}
    \usepackage{multirow}
    \usepackage{placeins}
    \usepackage{graphicx}  
    \usepackage{enumitem}  
    \usepackage{titletoc}
    \usepackage{changepage}
    
    \newcommand{\printappendixtoc}{%
    \begingroup
    \section*{Table of Contents}
    
    \begin{adjustwidth}{0.06\textwidth}{0.008\textwidth}
    \hrule
    
    \setcounter{tocdepth}{2}
    
    \titlecontents{section}
      [0pt]
      {\addvspace{0.45em}\bfseries}
      {\contentslabel{2.8em}}
      {}
      {\titlerule*[0.6pc]{.}\contentspage}
    
    \titlecontents{subsection}
      [1.5em]
      {}
      {\contentslabel{3.8em}}
      {}
      {\titlerule*[0.6pc]{.}\contentspage}
    
    \printcontents[appendix]{}{1}{\setcounter{tocdepth}{2}}
    \end{adjustwidth}
    
    \endgroup
    }

    \definecolor{darkblue}{RGB}{0,0,100}
    \hypersetup{
      colorlinks=true,
      linkcolor=darkblue,
      citecolor=darkblue,
      urlcolor=blue
    }
    
    \newtheorem{theorem}{Theorem}
    
    \newtheorem{corollary}{Corollary}

    \newcommand{\pbterm}{\mathrm{KL}(\rho\Vert\pi)+\ln\tfrac{C_0\sqrt{m}}{\delta}}

    \title{Model-Agnostic Online Certificate-Driven Calibration \\ for Time Series Forecasting Under Distribution Shift}
    
    \author[1]{Chenfeng Huang}
    \author[1]{Zixuan Ma}
    \author[1]{George Michailidis}

    \affil[1]{%
        Department of Statistics and Data Science\\
        University of California, Los Angeles\\
        Los Angeles, California, USA
    }
    
\begin{document}
    \maketitle
    
    \begin{abstract}
    Time series out-of-distribution generalization requires forecasters to remain reliable when deployment dynamics differ
    from training conditions due to covariate shift, concept shift, and temporal dependence. Probably Approximately Correct
    Bayesian domain adaptation provides computable certificates by decomposing target risk into a source risk term, a
    source-to-target mismatch term, and a complexity term, but standard analyses rely on independent sampling and
    distributional stability, assumptions that are violated in time series by serial dependence and nonstationary shift. We
    propose a model-agnostic online martingale Probably Approximately Correct Bayesian framework that yields finite-sample
    certificates under temporal dependence and distribution shift. The certificate replaces independent-sample
    concentration with martingale concentration that adapts to loss scale and predictable variation. We use the certificate
    as a surrogate regularizer for online calibration by training a gated residual Bayesian head on top of a fixed
    forecasting backbone, producing a corrective update that reverts to the backbone prediction when the gate is closed.
    Online calibration combines a source risk anchor, a posterior-shift penalty, and a time-adaptive mismatch term computed
    from target windows observed before forecasting. It follows a predict-then-update protocol in which outcomes become
    available only after forecasting and are used to update subsequent predictions. Experiments across convolutional,
    attention-based, and large language model-based forecasters show improved stability and accuracy under covariate and
    concept shift.
    
    \end{abstract}

    \section{Introduction}\label{sec:intro}
    Time series forecasting models are frequently deployed in dynamic environments where the data distribution at inference diverges from the historical distribution used for training. 
    Time series out-of-distribution (OOD) research focuses on maintaining robustness under covariate and concept shifts, particularly when high serial correlation limits the effective information available to distinguish genuine distributional drift from noise \citep{wu2025tsoog,kuznetsov2015learning}. In practice, these effects are often inextricably linked: seasonality, operational fluctuations, and sensor noise can shift the input distribution even when the underlying mechanisms remain stable, whereas regime changes and interventions can fundamentally alter the predictive mapping from historical data to future data.
    
    
    A common operational response to nonstationarity is online time series forecasting, where a deployed model is updated
    continually as new observations arrive. Recent methods move beyond naive retraining and design specialized adaptation
    mechanisms. OneNet improves robustness to concept shift by maintaining an ensemble and dynamically adjusting
    combination weights as the stream evolves \citep{zhang2023onenet}. SOLID targets context-driven distribution shift by
    detecting context changes at test time and calibrating a pretrained forecaster through selective retrieval and
    lightweight updates to a prediction layer before producing a forecast \citep{chen2024solid}. However, these adaptive strategies face a fundamental structural constraint in multi-step settings: outcome feedback is inevitably delayed by the prediction horizon, causing the available labeled data to lag behind the current regime. As formalized by PROCEED, ignoring this delay leads to infeasible information leakage, whereas realistic ``predict-then-observe" protocols often degrade performance, as heavy online tuning can lead to overfitting when feedback is lagged \citep{zhao2025proceed}. This creates a critical tension: while frequent updates are necessary to catch up with distribution shifts, the latency of reliable supervision limits the effectiveness of purely heuristic online adaptation.
    
    These findings underscore two critical prerequisites for online forecasting under distribution shift. First, the method must detect distribution changes relying exclusively on inputs available at the \textit{moment of prediction}, as recent outcomes are often latent. Second, when feedback eventually arrives, updates must be conservative; because delayed feedback may reflect a regime that has since passed, aggressive tuning risks overfitting or destabilizing the model trajectory. This necessitates a \textit{certificate-driven} approach that quantifies mismatch using real-time inputs to regularize online updates against the noise of delayed supervision.
    
    To instantiate this certificate-driven strategy, Probably Approximately Correct Bayesian (PAC-Bayes) Domain Adaptation offers a rigorous framework that provides computable certificates by decomposing target risk into three components: source risk, complexity, and a source-to-target mismatch term \citep{mcallester1999pac,germain2016pac}.
    This mismatch term is particularly important for online forecasting because it relies solely on target inputs, enabling shift-aware regularization even when ground-truth outcomes are delayed. However, classical PAC-Bayes analyses assume independent sampling and distributional stability—conditions that are violated in time series settings by serial dependence and nonstationary shifts. Consequently, naive application of these bounds yields certificates that are either invalid or overly optimistic \citep{ralaivola2010chromatic}.
    
    To address these limitations, we propose Online Martingale PAC-Bayes (OMPB)\footnote{Code implementation is available at \url{https://github.com/chenfeng-huang/OMPB-UAI-2026}}, a certificate-driven calibration framework grounded in martingale theory for time series OOD generalization. OMPB substitutes standard independent-sample concentration with variance-adaptive martingale concentration tailored for dependent streams \citep{seldin2012pac}, leveraging the resulting certificate as a surrogate regularizer for online adaptation. At each deployment step, the method computes a time-adaptive mismatch signal based on posterior-weighted prediction disagreement over strictly antecedent windows, providing an immediate shift indicator that functions without instantaneous outcome feedback. Subsequently, as delayed outcomes materialize, OMPB calibrates a lightweight gated residual Bayesian head atop a fixed backbone. This \textit{gating mechanism} enforces conservative updates, automatically reverting to the backbone’s baseline prediction whenever adaptation is uncertain—yielding a practical, model-agnostic solution for robust forecasting across diverse architectures.
    
    The key contributions of the paper are:
    \begin{itemize}[leftmargin=*]
    \item We introduce OMPB, an online certificate-driven calibration framework that leverages variance-adaptive, martingale-style PAC-Bayes regularization to provide finite-sample guarantees for time series forecasting under distribution shift.
    \item We derive a time-adaptive mismatch signal computed strictly from antecedent target windows, enabling immediate shift detection that circumvents the latency of delayed outcome supervision.
    \item We develop an online calibration procedure for a gated residual Bayesian head that optimizes the certificate as a surrogate objective, improving adaptation stability while maintaining the fixed backbone as a safe fallback.
    \end{itemize}

    \section{Related Work}\label{sec:related}
    
    \paragraph{Time Series OOD Generalization Bounds:}
    Generalization bounds characteristically upper-bound the true population risk using the empirical risk incurred on a sample, augmented by a complexity penalty \citep{dong2023kernelized-renyi}.
    
    Consider a hypothesis $h=\mathcal{A}(z)$ learned from an independent and identically distributed (i.i.d.) sample $z\sim P^{n}$. A standard high-probability guarantee takes the form:
    \begin{equation}
    \begin{aligned}
    \mathbb{P}_{z\sim P^{n}}\!\left[L_{P}(h)\le \widehat{L}_{z}(h)+\varepsilon\right]
    \ \ge\ 1-\delta,
    \end{aligned}
    \label{eq:gen-iid}
    \end{equation}
    where $L_{P}(h)=\mathbb{E}_{(x,y)\sim P}[\ell(h(x),y)]$ denotes the population risk and $\widehat{L}_{z}(h)=\frac{1}{n}\sum_{i=1}^{n}\ell(h(x_i),y_i)$ denotes the empirical risk.
    
    Time series data inherently violate the independent and identically distributed (i.i.d.) assumption due to the presence of temporal dependencies. To rigorously address this, mixing conditions—such as $\beta$-mixing and $\phi$-mixing—are employed to quantify the rate at which dependence decays, thereby enabling the extension of learning theory to dependent streams. For non-stationary mixing processes, \citet{kuznetsov2017generalization} derived non-asymptotic bounds that explicitly incorporate mixing coefficients and distribution shift. These typically take the form of an average-case bound:
    \begin{equation}
    \begin{aligned}
    L_{P}(h)
    \ \le\ \widehat{L}_{z}(h)
    \ +\ \Psi\!\big(\beta,\ \mathcal{D}_{\mathrm{src}\to\mathrm{tgt}},\ m\big),
    \end{aligned}
    \label{eq:avg-path-bound}
    \end{equation}
    and a stronger, path-dependent form:
    \begin{equation}
    \begin{aligned}
    L_{P}(h)
    \ \le\ \widehat{L}_{z}(h)
    \ +\ C\cdot \Phi\!\big(\phi,\ \mathcal{C}_{\mathrm{seq}},\ m\big),
    \end{aligned}
    \label{eq:path-dependent-bound}
    \end{equation}
    where the first bound Equation~\eqref{eq:avg-path-bound} relies on an average dependence measure and source-to-target discrepancy, while the second Equation~\eqref{eq:path-dependent-bound} holds uniformly over sample paths, governed by a worst-case dependence measure and the realized sequence complexity. In practice, however, utilizing these bounds is challenging due to the difficulty of estimating mixing coefficients and the looseness of constants under regime shifts.

    \paragraph{PAC-Bayes Generalization and Domain Adaptation:}
    PAC-Bayes theory provides data-dependent generalization guarantees for randomized predictors by bounding the population risk of a Gibbs predictor in terms of its empirical risk and a complexity penalty, defined by the Kullback\textendash Leibler (KL) divergence from a prior distribution \citep{mcallester1999pac}. A classical i.i.d. formulation is stated next.
    
    \begin{theorem}[PAC-Bayes Bound for i.i.d. Data]\footnote{Proof is included in Appendix~\ref{sec:proof_pacbayes}.}
    \label{th:PAC_Bayes}
    We restate the classical PAC-Bayes generalization bound due to \citet{mcallester1999pac}, with related refinements discussed in \citet{seeger2002,catoni2007}. For any distribution $\mathcal{D}$ over $\mathcal{X}\times\mathcal{Y}$, any hypothesis class $\mathcal{H}$, any prior $\pi$ over $\mathcal{H}$, and any $\delta\in(0,1]$, with probability at least $1-\delta$ over $S\sim \mathcal{D}^{m}$, the following holds for every posterior distribution $\rho$ on $\mathcal{H}$:
    \begin{equation}
    \begin{aligned}
    R_{\mathcal{D}}(h_{\rho})
    \ \le\
    \widehat{R}_{S}(h_{\rho})
    \ +\
    \sqrt{
    \frac{\mathrm{KL}(\rho \Vert \pi) + \ln \frac{2\sqrt{m}}{\delta}}{2m}
    }.
    \end{aligned}
    \label{eq:pacbayes-iid}
    \end{equation}
    Here, the KL divergence is defined as:
    \begin{equation}
    \begin{aligned}
    \mathrm{KL}(\rho \Vert \pi)
    \ :=\
    \sum_{h \in \mathcal{H}} \rho(h)\,
    \ln \frac{\rho(h)}{\pi(h)}.
    \end{aligned}
    \label{eq:kl-def}
    \end{equation}
    \end{theorem}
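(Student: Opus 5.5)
The plan is the standard change-of-measure argument, with the Gibbs risks read as $R_{\mathcal{D}}(h_\rho)=\mathbb{E}_{h\sim\rho}R_{\mathcal{D}}(h)$ and $\widehat{R}_S(h_\rho)=\mathbb{E}_{h\sim\rho}\widehat{R}_S(h)$. Throughout, the loss is assumed bounded in $[0,1]$, which the statement leaves implicit. The proof has three steps: control an exponential moment for each fixed hypothesis, average it over the prior and apply Markov's inequality, then transfer to an arbitrary posterior using the Donsker--Varadhan variational formula.

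\textbf{Step 1 (exponential moment for a fixed hypothesis).} Fix $h\in\mathcal{H}$ and write $\mathrm{kl}(q\Vert p)$ for the binary relative entropy. Following Maurer's refinement of McAllester's argument, we aim to show
\begin{equation*}
\mathbb{E}_{S\sim\mathcal{D}^m}\, e^{m\,\mathrm{kl}(\widehat{R}_S(h)\Vert R_{\mathcal{D}}(h))}\ \le\ 2\sqrt{m}.
\end{equation*}
The argument first reduces $[0,1]$-valued losses to Bernoulli losses by a convexity argument, since the expectation is largest in the Bernoulli case. It then computes the expectation directly as a sum over binomial outcomes, $\sum_k \binom{m}{k}(k/m)^k(1-k/m)^{m-k}$, and bounds that sum by $2\sqrt{m}$ using Stirling-type estimates.

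\textbf{Step 2 (averaging over the prior).} Because $\pi$ does not depend on $S$, Fubini's theorem gives
\begin{equation*}
\mathbb{E}_S\,\mathbb{E}_{h\sim\pi}\, e^{m\,\mathrm{kl}(\widehat R_S(h)\Vert R_{\mathcal D}(h))}\ \le\ 2\sqrt{m}.
\end{equation*}
Markov's inequality then yields, with probability at least $1-\delta$,
\begin{equation*}
\mathbb{E}_{h\sim\pi} e^{m\,\mathrm{kl}(\cdot)}\ \le\ \frac{2\sqrt{m}}{\delta}.
\end{equation*}

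\textbf{Step 3 (change of measure).} On that event, the Donsker--Varadhan inequality $\mathbb{E}_\rho f\le \mathrm{KL}(\rho\Vert\pi)+\ln\mathbb{E}_\pi e^{f}$ holds simultaneously for every $\rho$. Applying it with $f=m\,\mathrm{kl}(\widehat R_S(h)\Vert R_{\mathcal D}(h))$ gives
\begin{equation*}
m\,\mathbb{E}_\rho \mathrm{kl}(\cdot)\ \le\ \mathrm{KL}(\rho\Vert\pi)+\ln\frac{2\sqrt m}{\delta}.
\end{equation*}
Joint convexity of $\mathrm{kl}$ together with Jensen's inequality moves the $\rho$-average inside, giving $\mathrm{kl}(\widehat R_S(h_\rho)\Vert R_{\mathcal D}(h_\rho))$ on the left. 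Pinsker's inequality $\mathrm{kl}(q\Vert p)\ge 2(p-q)^2$ then yields
\begin{equation*}
R_{\mathcal D}(h_\rho)-\widehat R_S(h_\rho)\ \le\ \sqrt{\frac{\mathrm{KL}(\rho\Vert\pi)+\ln\frac{2\sqrt m}{\delta}}{2m}},
\end{equation*}
which is exactly the bound in the statement.

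The main obstacle is Step 1, which is the only genuinely non-routine computation: the $2\sqrt{m}$ bound on the binomial moment, including the Bernoulli reduction. I would cite Maurer (2004) for this, or reprove it through the standard Stirling bounds $\sqrt{2\pi n}(n/e)^n\le n!\le e\sqrt{n}(n/e)^n$ (valid for all $n\ge1$). With those bounds the sum is approximated by $\sum_k \sqrt{m/(2\pi k(m-k))}$, which is in turn bounded by an integral of order $\sqrt{\pi m/2}\le 2\sqrt m$. The endpoints $k=0$ and $k=m$, where Stirling does not apply, have to be handled separately. Everything else is measure-theoretic bookkeeping, provided $\mathcal{H}$ is countable (consistent with the sum in the KL definition) or the KL is interpreted as a Radon--Nikodym integral.
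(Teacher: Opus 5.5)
Your proof is correct and has the same skeleton as the paper's: a per-hypothesis exponential moment bounded by $2\sqrt m$, prior averaging with Markov, Donsker--Varadhan, then a convexity step. The only difference is that you exponentiate $m\,\mathrm{kl}(\widehat R_S(h)\Vert R_{\mathcal D}(h))$ and finish with joint convexity of $\mathrm{kl}$ plus Pinsker, whereas the paper exponentiates $2m(R_{\mathcal D}(h)-\widehat R_S(h))^2$ and finishes with Jensen; Pinsker is thus applied before rather than after the change of measure, and your Step 1 is exactly what justifies the moment bound the paper takes as given.

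One small point: Maurer's lemma is stated only for $m\ge 8$, and your Stirling estimate (the endpoints contribute exactly $2$, the interior about $\sqrt{\pi m/2}$) only yields $\le 2\sqrt m$ once $m$ is moderately large. The finitely many small cases therefore need a direct check of the binomial sum, which does hold (with equality at $m=1$).
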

    
    PAC-Bayes Domain Adaptation extends Theorem~\ref{th:PAC_Bayes} by introducing terms that account for distribution shift. Specifically, it adds a \textit{discrepancy term} measuring the \textit{mismatch between source and target input marginals}—often instantiated as posterior-weighted disagreement—along with an irreducible joint-error term \citep{germain2016pac}. A representative i.i.d. PAC-Bayes Domain Adaptation bound is given next.
    
    \begin{theorem}[i.i.d. PAC-Bayes Domain Adaptation \citep{germain2013pac,germain2016pac}]\footnote{Proof is included in Appendix~\ref{sec:proof_pac_da}.}
    \label{th:PAC_DA}
    For any hypothesis class $\mathcal{H}$, any prior $\pi$ on $\mathcal{H}$, and any $\delta\in(0,1]$, with probability at
    least $1-\delta$ over an i.i.d. sample $S\sim(\mathcal{S})^{m}$, for every posterior $\rho$ on $\mathcal{H}$,
    \begin{equation}
    \begin{aligned}
    R_{\mathcal{T}}(h_{\rho})
    &\le
    \widehat{R}_{S}(h_{\rho})
    +
    \sqrt{
    \frac{\mathrm{KL}(\rho \Vert \pi) + \ln \frac{2\sqrt{m}}{\delta}}{2m}
    }\\
    &\quad
    +\frac{1}{2}\,\mathrm{dis}_{\rho}(\mathcal{S},\mathcal{T})
    +\lambda_{\rho},
    \end{aligned}
    \label{eq:iid-pbda}
    \end{equation}
    where $d(\cdot,\cdot;w)$ is a bounded disagreement on an input window $w$, the disagreement risk on a domain
    $\mathcal{D}$ is
    \begin{equation}
    \begin{aligned}
    R_{\mathcal{D}}(h,h')
    :=
    \mathbb{E}_{w\sim \mathcal{D}_{\mathcal{X}}}\!\big[d(h,h';w)\big],
    \end{aligned}
    \label{eq:disagreement-risk}
    \end{equation}
    the posterior-weighted source\textendash target disagreement is
    \begin{equation}
    \begin{aligned}
    \mathrm{dis}_{\rho}(\mathcal{S},\mathcal{T})
    &=
    \left|
    \mathbb{E}_{h,h' \sim \rho}\, R_{\mathcal{S}}(h,h')
    -
    \mathbb{E}_{h,h' \sim \rho}\, R_{\mathcal{T}}(h,h')
    \right|,
    \end{aligned}
    \label{eq:domain-disagreement}
    \end{equation}
    and the irreducible joint-error term is defined via the joint error
    \begin{equation}
    \begin{aligned}
    e_{\mathcal{D}}(h,h')
    :=
    \mathbb{E}_{(w,y)\sim \mathcal{D}}
    \Big[
    \mathbf{1}\{h(w)\neq y\}\,\mathbf{1}\{h'(w)\neq y\}
    \Big],
    \end{aligned}
    \label{eq:joint-error-def}
    \end{equation}
    as
    \begin{equation}
    \begin{aligned}
    \lambda_{\rho}
    :=
    \left|
    \mathbb{E}_{h,h' \sim \rho}\, e_{\mathcal{T}}(h,h')
    -
    \mathbb{E}_{h,h' \sim \rho}\, e_{\mathcal{S}}(h,h')
    \right|,
    \end{aligned}
    \label{eq:lambda-rho}
    \end{equation}
    which captures mismatch in the labeling mechanism and is not estimable without target labels.
    
    \end{theorem}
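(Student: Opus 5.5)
The plan has two stages. First, establish a deterministic domain-adaptation inequality relating $R_{\mathcal{T}}(h_\rho)$ to $R_{\mathcal{S}}(h_\rho)$. Then convert the true source risk into the empirical one using Theorem~\ref{th:PAC_Bayes} applied to the source sample $S\sim\mathcal{S}^m$. This second stage is the only probabilistic step. It holds simultaneously for all $\rho$ with probability at least $1-\delta$, so the final bound inherits the same uniformity and confidence level.

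For the deterministic step, we interpret $R_{\mathcal{D}}(h_\rho)$ as the Gibbs risk $\mathbb{E}_{h\sim\rho}\mathbb{E}_{(w,y)\sim\mathcal{D}}[\mathbf{1}\{h(w)\neq y\}]$. We use the standard decomposition of the expected disagreement under 0-1 loss. For any $h,h'$ and $(w,y)$,
\begin{equation*}
\mathbf{1}\{h(w)\neq h'(w)\}=\mathbf{1}\{h(w)\neq y\}+\mathbf{1}\{h'(w)\neq y\}-2\,\mathbf{1}\{h(w)\neq y\}\mathbf{1}\{h'(w)\neq y\},
\end{equation*}
which we check by cases on whether each of $h,h'$ errs. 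Taking $d(h,h';w)=\mathbf{1}\{h(w)\neq h'(w)\}$ and averaging over $h,h'\sim\rho$ independently and $(w,y)\sim\mathcal{D}$ gives
\begin{equation*}
R_{\mathcal{D}}(h_\rho)=\tfrac12\,\mathbb{E}_{h,h'\sim\rho}R_{\mathcal{D}}(h,h')+\mathbb{E}_{h,h'\sim\rho}e_{\mathcal{D}}(h,h').
\end{equation*}
We write this identity for $\mathcal{D}=\mathcal{T}$ and for $\mathcal{D}=\mathcal{S}$ and subtract. Bounding each difference by its absolute value yields
\begin{equation*}
R_{\mathcal{T}}(h_\rho)\le R_{\mathcal{S}}(h_\rho)+\tfrac12\,\mathrm{dis}_\rho(\mathcal{S},\mathcal{T})+\lambda_\rho,
\end{equation*}
with $\mathrm{dis}_\rho$ and $\lambda_\rho$ exactly as in \eqref{eq:domain-disagreement} and \eqref{eq:lambda-rho}. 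This inequality holds for every $\rho$ deterministically.

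Finally, we invoke Theorem~\ref{th:PAC_Bayes} with $\mathcal{D}=\mathcal{S}$. With probability at least $1-\delta$ over $S$, for all $\rho$, the source risk satisfies $R_{\mathcal{S}}(h_\rho)\le \widehat{R}_S(h_\rho)+\sqrt{(\mathrm{KL}(\rho\Vert\pi)+\ln\frac{2\sqrt m}{\delta})/(2m)}$. Substituting this into the deterministic inequality gives \eqref{eq:iid-pbda}.

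The main obstacle is the identity linking disagreement and joint error. It requires $d$ to be the 0-1 disagreement and $h,h'$ to be drawn independently from $\rho$, so that $\mathbb{E}_{h,h'}$ of the individual error terms collapses to $2R_{\mathcal{D}}(h_\rho)$. For a general bounded $d$, as the statement loosely allows, this step would need $d$ to be compatible with the loss. I would therefore state that $d$ is the 0-1 disagreement, consistent with the indicator-based definition of $e_{\mathcal{D}}$ in \eqref{eq:joint-error-def}. The remaining steps are routine once this identity is in place.
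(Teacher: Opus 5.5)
Your proposal follows the paper's proof essentially step for step. Both start from the pointwise identity linking disagreement and joint error and average over $h,h'\sim\rho$ to get $R_{\mathcal{D}}(h_\rho)=\tfrac12\,\mathbb{E}_{h,h'\sim\rho}R_{\mathcal{D}}(h,h')+\mathbb{E}_{h,h'\sim\rho}e_{\mathcal{D}}(h,h')$. Both then subtract the source instance from the target instance with absolute values, and substitute Theorem~\ref{th:PAC_Bayes} for $R_{\mathcal{S}}(h_\rho)$.

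The one hypothesis you omit, which the paper states explicitly, is that the labels are binary. In your case check, when both $h$ and $h'$ err the right-hand side equals $1+1-2=0$, so you need $h(w)=h'(w)$. That holds only because with two labels both must output the unique wrong label. With three or more classes, two erring predictors can disagree and the identity fails. So you should add ``binary labels'' alongside your choice $d=\mathbf{1}\{h(w)\neq h'(w)\}$.

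A minor correction in the other direction: independence of $h$ and $h'$ is not what makes the single-hypothesis terms collapse to $2R_{\mathcal{D}}(h_\rho)$. That step only uses that both marginals equal $\rho$.
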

    In our forecasting experiments, we later replace the binary disagreement indicator $d(h,h';w)$ with a bounded prediction-discrepancy
    surrogate for shift sensing; this surrogate is used as an input-driven regularizer rather than as the exact disagreement
    indicator underlying the classification decomposition.
    
    \paragraph{PAC-Bayes Certificates as Surrogate Objectives:}
    Beyond their traditional role in post hoc certification, PAC-Bayes bounds can be directly optimized to learn a posterior distribution by treating the certificate \textit{itself} as the training objective. Fundamentally, these bounds express an explicit trade-off between empirical risk and a KL complexity penalty relative to a prior distribution, effectively defining a regularized learning rule for stochastic predictors \citep{catoni2007}. In the context of deep learning, optimizing these objectives has been shown to yield predictors with informative, non-vacuous generalization guarantees \citep{dziugaite2017}.
    
    Making this approach computationally practical, "PAC-Bayes with Backprop" derives differentiable objectives that allow probabilistic neural networks to be trained via backpropagation through the bound \citep{rivasplata2019pacbayes}. More recently, surrogate PAC-Bayes learning has introduced iterative objectives that preserve theoretical guarantees while significantly reducing optimization costs \citep{picardweibel2024learning}. This line  of research motivates our strategy for OMPB: utilizing the PAC-Bayes certificate not merely for passive online monitoring, but as an active surrogate regularizer for the deployment-time calibration of the Bayesian head.
    
    Because PAC-Bayes bounds hold uniformly over all posteriors $\rho$ with high probability, optimizing such a certificate constitutes a valid learning rule defined on bounded proxy quantities \citep{mcallester1999pac,catoni2007}. Leveraging this perspective, OMPB employs the certificate as a surrogate regularizer for online calibration to ensure theoretical stability.
    
    \section{Preliminaries}\label{sec:prelim}
    
    \subsection{Distribution Shift Categories}
    Time series OOD generalization is primarily characterized by two distinct forms of distributional instability \citep{sugiyama2012covariate,gama2014survey,kuznetsov2015learning,wu2025tsoog}:
    
    \begin{enumerate}[leftmargin=*]
        \item \textbf{Covariate shift:} The input marginal distribution $P(X)$ evolves while the conditional predictive distribution $P(Y\mid X)$ remains invariant. In time series data applications, this often manifests through sensor replacement, recalibration, seasonal fluctuations, or changes in operating conditions that alter feature magnitudes without changing the underlying data generating mechanism.
        \item \textbf{Concept shift:} The conditional distribution $P(Y\mid X)$ changes while the input marginal $P(X)$ remains stable. This corresponds to a fundamental drift in the data-generating mechanism, caused by factors such as external interventions, regime changes, or evolving system dynamics.
    \end{enumerate}
    
    \subsection{Martingale PAC-Bayes Domain Adaptation}\label{sec:martingale-pbda}
    
    The i.i.d. PAC-Bayes and PAC-Bayes domain adaptation results in Theorems~\ref{th:PAC_Bayes} and~\ref{th:PAC_DA} premised upon
    independent sampling to control the deviation between empirical and expected source risk
    \citep{mcallester1999pac,seeger2002,germain2016pac}. For time series data, the training sequence exhibits temporal dependence and may be nonstationary, so i.i.d. concentration is no longer valid and can yield overly optimistic certificates when applied directly \citep{ralaivola2010chromatic,kuznetsov2015learning}. We resolve this discrepancy by rigorously modeling the labeled source stream as a filtered stochastic process $(Z_t,\mathcal{F}_t)_{t=1}^{m}$. By substituting i.i.d. concentration with martingale concentration mechanisms, we derive dependence-aware PAC-Bayes domain adaptation bounds that remain valid under serial correlation \citep{seldin2012pac,freedman1975tail}.
    
    Let $(Z_t,\mathcal{F}_t)_{t=1}^m$ be a filtered source process generating the training sequence $S$.
    
    \begin{theorem}[Bounded Martingale PAC-Bayes Domain Adaptation]
    \label{th:bound-martingale}
    \footnote{Proof is included in Appendix~\ref{sec:proof_bound_martingale}. Derived from foundational results in \citet{germain2013pac,germain2016pac,freedman1975tail,seldin2012pac}.}

    Assume the loss is bounded such that $\ell(h,Z_t)\in[0,1]$. For any hypothesis class $\mathcal{H}$, any prior distribution $\pi$ over $\mathcal{H}$, and any confidence level $\delta\in(0,1]$, with probability at least $1-\delta$ over the sample $S$, the following inequality holds for every posterior distribution $\rho$ on $\mathcal{H}$:
    \begin{equation}
    \begin{aligned}
    R_{\mathcal{T}}(h_\rho)
    \ \le\
    \widehat{R}_{S}(h_\rho)
    \ +\
    \Gamma_m^{\mathrm{Freed}}(\rho,\pi,\delta)
    \\\ +\
    \frac{1}{2}\,\mathrm{dis}_{\rho}(\mathcal{S},\mathcal{T})
    \ +\
    \lambda_{\rho}.
    \end{aligned}
    \label{eq:martingale-da-bound}
    \end{equation}
    The complexity term $\Gamma_m^{\mathrm{Freed}}$, governed by Freedman's inequality, is defined as:
    \begin{equation}
    \begin{aligned}
    \Gamma_m^{\mathrm{Freed}}(\rho,\pi,\delta)
    &=
    \sqrt{
    \frac{
    2\,\mathbb{E}_{h\sim\rho}\!\left[V_m(h)\right]\,
    \left( \mathrm{KL}(\rho \Vert \pi) + \ln \tfrac{2\sqrt{m}}{\delta} \right)
    }{m^2}
    }
    \\
    &\quad
    +\frac{B}{3m}\left( \mathrm{KL}(\rho \Vert \pi) + \ln \tfrac{2\sqrt{m}}{\delta} \right).
    \end{aligned}
    \label{eq:gamma-freed}
    \end{equation}
    where $V_m(h)$ represents the cumulative conditional variance of the loss process:
    \begin{equation}
    \begin{aligned}
    V_m(h)
    \ :=\
    \sum_{t=1}^{m}\mathrm{Var}\left(\ell(h,Z_t)\mid\mathcal{F}_{t-1}\right).
    \end{aligned}
    \label{eq:Vm-freed}
    \end{equation}
    Here, $B$ is an upper bound on the martingale difference increments; for losses $\ell\in[0,1]$, we set $B=1$.
    \end{theorem}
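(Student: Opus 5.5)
The proof splits into a domain-adaptation step and a concentration step. The domain-adaptation step is deterministic and is inherited unchanged from Theorem~\ref{th:PAC_DA}. The concentration step replaces Hoeffding by Freedman's inequality inside the PAC-Bayes change of measure. Here $R_{\mathcal{S}}(h_\rho)$ denotes the population source risk, which for a dependent stream I interpret as the average of the one-step conditional risks $\frac1m\sum_{t}\mathbb{E}[\ell(h,Z_t)\mid\mathcal{F}_{t-1}]$.

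\textbf{Step 1 (deterministic decomposition).} I would first show, for every $\rho$, that
\[
R_{\mathcal{T}}(h_\rho)\le R_{\mathcal{S}}(h_\rho)+\tfrac12\,\mathrm{dis}_\rho(\mathcal{S},\mathcal{T})+\lambda_\rho .
\]
This is exactly the step in the proof of Theorem~\ref{th:PAC_DA} that uses no sampling. The Gibbs risk is written as half the expected disagreement plus the expected joint error. One then adds and subtracts the source counterparts and takes absolute values. Since no independence is used here, the inequality transfers verbatim.

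\textbf{Step 2 (martingale concentration).} For fixed $h$, let $X_t(h)=\mathbb{E}[\ell(h,Z_t)\mid\mathcal{F}_{t-1}]-\ell(h,Z_t)$. This is a martingale difference sequence with $|X_t|\le B=1$ and predictable variance summing to $V_m(h)$. For $\lambda\in(0,3)$, the Bernstein/Freedman supermartingale argument gives
\[
\mathbb{E}\exp\!\Big(\lambda\textstyle\sum_t X_t-\tfrac{\lambda^2}{2(1-\lambda/3)}V_m(h)\Big)\le1 .
\]
This uses $e^{x}\le1+x+\frac{x^2}{2(1-x/3)}$ for $x<3$, applied conditionally and iterated by the tower property. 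Averaging over $h\sim\pi$ (Fubini), applying Markov's inequality, and then the Donsker--Varadhan change of measure gives, with probability at least $1-\delta'$, simultaneously for all $\rho$:
\[
\lambda m\big(R_{\mathcal{S}}(h_\rho)-\widehat R_S(h_\rho)\big)\le\tfrac{\lambda^2}{2(1-\lambda/3)}\mathbb{E}_\rho V_m(h)+\mathrm{KL}(\rho\Vert\pi)+\ln\tfrac1{\delta'} .
\]
This is the PAC-Bayes–Bernstein inequality of \citet{seldin2012pac}.

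\textbf{Step 3 (choice of $\lambda$, the main obstacle).} The optimal $\lambda$ depends on $\rho$ through $\mathbb{E}_\rho V_m$ and $\mathrm{KL}$, so it cannot be chosen before seeing the data. I would take a geometric grid of $\lambda$ values, of size $O(\sqrt m)$ or $O(\log m)$ covering the range of $\mathbb{E}_\rho V_m\in[0,m/4]$, and union bound with $\delta'=\delta/(2\sqrt m)$. This absorbs the grid cost into the $\ln\frac{2\sqrt m}{\delta}$ term. For the realized $\rho$, I would then pick the grid point nearest the minimizer of the Bernstein quadratic. The standard inequality for minimizing $\frac{aV}{\lambda m}+\frac{K}{\lambda m}$ with the $(1-\lambda/3)$ correction gives $\sqrt{2VK}/m+BK/(3m)$ with $K=\mathrm{KL}(\rho\Vert\pi)+\ln\frac{2\sqrt m}{\delta}$, which is $\Gamma_m^{\mathrm{Freed}}$.

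The delicate part is making the grid discretization cost only a constant factor that can be absorbed. I would first try the Freedman-style peeling over dyadic shells of $V$, as in \citet{freedman1975tail}. If exact constants fail, I would accept a slightly inflated constant inside the logarithm. I would also note the treatment of edge cases: the case $V$ near $0$ is covered by the linear $B K/(3m)$ term. Combining Step 1 with this high-probability bound yields the claimed inequality.
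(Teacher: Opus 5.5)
Your route is the paper's. Its proof of Theorem~\ref{th:bound-martingale} likewise reuses the sampling-free Gibbs decomposition from the proof of Theorem~\ref{th:PAC_DA}. It sets $R_{\mathcal S}(h)=\frac1m\sum_t\mathbb E[\ell(h,Z_t)\mid\mathcal F_{t-1}]$, so that $R_{\mathcal S}(h)-\widehat R_S(h)=M_m(h)/m$ with increments bounded by $B=1$. It then cites, without deriving, a PAC-Bayes--Freedman inequality: exponential supermartingale plus change of measure, with a union-bound overhead. Your Step~2 is correct and more explicit than anything in the paper.

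Step~1 is fine once you state two things.
First, the decomposition rests on the pointwise identity $\mathbf 1\{h(w)\neq y\}+\mathbf 1\{h'(w)\neq y\}=\mathbf 1\{h(w)\neq h'(w)\}+2\,\mathbf 1\{h(w)\neq y\}\mathbf 1\{h'(w)\neq y\}$. So $\ell$ must be the binary $0$--$1$ loss, not an arbitrary $[0,1]$-valued loss.
Second, your $R_{\mathcal S}$ is an average of conditional risks. The source-side disagreement and joint error inside $\mathrm{dis}_\rho(\mathcal S,\mathcal T)$ and $\lambda_\rho$ must therefore be the same averages $\frac1m\sum_t\mathbb E[\,\cdot\mid\mathcal F_{t-1}]$. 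The identity survives conditioning and averaging, so this works, but otherwise Steps~1 and~2 refer to different source risks.

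The genuine gap is the last sentence of Step~3. Write $V=\mathbb E_{h\sim\rho}V_m(h)$ and let $\lambda^\star$ be the minimizer. Evaluating at the nearest grid point gives $\min_{\lambda\in\Lambda}\bigl[\tfrac{\lambda V}{2(1-\lambda/3)}+\tfrac{K}{\lambda}\bigr]$. This is strictly larger than the infimum over $\lambda\in(0,3)$, which equals $\sqrt{2VK}+K/3$. For a geometric grid the excess is a multiplicative factor $1+\epsilon$ set by the grid ratio; this is the source of the grid-dependent constant in the PAC-Bayes--Bernstein inequality of \citet{seldin2012pac}.

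Your fallback of inflating the constant inside the logarithm does not absorb this. The excess is at least $\epsilon K/\lambda^\star$, so it is proportional to $\mathrm{KL}(\rho\Vert\pi)$. Adding a constant $s$ to $K$ raises the optimum only by about $s/\lambda^\star$.

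You need an extra ingredient. One option: restrict to $\rho$ with $\Gamma_m^{\mathrm{Freed}}<1$, since otherwise the bound is trivial because $R_{\mathcal T}\le 1$. This gives $K\le\lambda^\star m$. Then use a grid with mesh of order $m^{-1/2}$, refined near $\lambda=0$ and $\lambda=3$. With these choices $O(\sqrt m)$ points suffice, and discretization costs only an additive $O(1)$ in $K$. The result is the bound with $\ln\frac{C_0\sqrt m}{\delta}$ for some numerical $C_0$, and nothing in the argument makes $C_0=2$.

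The paper's proof has exactly this looseness. It writes $\ln\frac{C_0\sqrt m}{\delta}$ with $C_0$ unspecified and attributes it to ``discretization or union-bound overhead.'' So the constant $2\sqrt m$ in the statement is asserted rather than derived, by the paper as well as by you.
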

    
    Crucially, the bounded-increment assumption in Theorem~\ref{th:bound-martingale} is frequently violated in time series forecasting, where standard regression losses are inherently unbounded. While loss clipping is a common workaround, it introduces estimation bias and yields effectively vacuous certificates in the presence of spikes or heavy-tailed noise.
    To resolve this, we replace the bounded constraint with a \textit{conditional sub-gamma} assumption. This relaxation controls the conditional moment generating function via a predictable variance proxy and a scale parameter, enabling a sharper Bernstein-type concentration. The resulting guarantee, formalized in Theorem~\ref{th:bound-subgamma}, preserves the tractable PAC-Bayes structure while remaining rigorous for unbounded forecasting objectives.
    
    \begin{theorem}[Sub-Gamma Martingale PAC-Bayes Domain Adaptation \citep{germain2013pac,germain2016pac,seldin2012pac,bercu2008}]\footnote{Proof is included in Appendix~\ref{sec:proof_bound_subgamma}.}
    \label{th:bound-subgamma}
    For each $h\in\mathcal H$, define $X_t(h):=\ell(h,Z_t)-\mathbb E[\ell(h,Z_t)\mid\mathcal F_{t-1}]$.
    Assume there exist predictable $v_t(h)\ge 0$ and $c(h)\ge 0$ such that
    \begin{equation}
    \begin{aligned}
    \mathbb E\!\left[\exp\{\lambda X_t(h)\}\mid\mathcal F_{t-1}\right]
    &\le 
    \exp\!\left(\frac{\lambda^2 v_t(h)}{2(1-c(h)\lambda)}\right),
    \qquad \\
    &\forall \lambda\in\big(0,1/c(h)\big).
    \end{aligned}
    \label{eq:subgamma-condition}
    \end{equation}
    Let $V_m(h):=\sum_{t=1}^{m} v_t(h)$ and $\bar c:=\sup_{h\in\mathcal H} c(h)$. Then for any $\mathcal{H}$, prior $\pi$,
    and $\delta\in(0,1]$, with probability at least $1-\delta$ over $S$, for every posterior $\rho$ on $\mathcal H$,
    \begin{equation}
    \begin{aligned}
    R_{\mathcal{T}}(h_\rho)
    &\le
    \widehat{R}_{S}(h_\rho)
    +
    \Gamma_m^{\mathrm{sub}\,\gamma}(\rho,\pi,\delta)
    +\tfrac{1}{2}\,\mathrm{dis}_{\rho}(\mathcal{S},\mathcal{T})
    +\lambda_{\rho}.
    \end{aligned}
    \label{eq:subgamma-da-bound}
    \end{equation}
    Moreover,
    \begin{equation}
    \begin{aligned}
    \Gamma_m^{\mathrm{sub}\,\gamma}(\rho,\pi,\delta)
    &=
    \sqrt{
    \frac{
    2\,\mathbb{E}_{h\sim\rho}\!\big[V_m(h)\big]\,
    \pbterm
    }{m^2}
    }\\
    &+
    \frac{\bar c}{m}\,\pbterm.
    \end{aligned}
    \label{eq:gamma-subgamma}
    \end{equation}
    \end{theorem}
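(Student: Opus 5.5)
The proof splits into two parts. The first is a domain-adaptation decomposition that is purely deterministic. The second is a martingale concentration argument that controls the gap between the empirical source risk and the predictable (conditional) source risk, uniformly over posteriors. For the deterministic part, reuse the argument behind Theorem~\ref{th:PAC_DA}. Write the Gibbs risk on each domain via the disagreement/joint-error identity $R_{\mathcal D}(h_\rho)=\tfrac12\mathbb E_{h,h'\sim\rho}R_{\mathcal D}(h,h')+\mathbb E_{h,h'\sim\rho}e_{\mathcal D}(h,h')$. Subtract the source and target versions to get $R_{\mathcal T}(h_\rho)\le R_{\mathcal S}(h_\rho)+\tfrac12\mathrm{dis}_\rho(\mathcal S,\mathcal T)+\lambda_\rho$. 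No sampling is involved here, so this step is unchanged from Theorem~\ref{th:PAC_DA}. Here $R_{\mathcal S}(h_\rho)$ is read as the averaged predictable risk $\frac1m\sum_t\mathbb E_{h\sim\rho}\mathbb E[\ell(h,Z_t)\mid\mathcal F_{t-1}]$, the natural source risk for a filtered process. What remains is to show $R_{\mathcal S}(h_\rho)\le \widehat R_S(h_\rho)+\Gamma_m^{\mathrm{sub}\,\gamma}$.

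For the concentration part, fix $\lambda\in(0,1/\bar c)$. Define $M_m(h):=-\sum_{t=1}^m X_t(h)$, so that $m\,(R_{\mathcal S}-\widehat R_S)$ for a single $h$ equals $M_m(h)$. Apply the sub-gamma condition \eqref{eq:subgamma-condition} to $-X_t$; the assumption is used for the lower tail, or stated symmetrically. Iterating conditional expectations then shows that
\[
W_m(h):=\exp\Big(\lambda M_m(h)-\frac{\lambda^2 V_m(h)}{2(1-\bar c\lambda)}\Big)
\]
is a nonnegative supermartingale with $\mathbb E W_m(h)\le1$. The predictability of $v_t(h)$ and $c(h)\le\bar c$ are exactly what make each conditional step valid. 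Take the expectation over the prior $\pi$ and use Fubini to get $\mathbb E_S\mathbb E_{h\sim\pi}W_m(h)\le 1$. Markov's inequality then gives $\mathbb E_{h\sim\pi}W_m(h)\le 1/\delta'$ with probability $1-\delta'$. Next apply the Donsker--Varadhan change of measure, $\mathbb E_\rho[f]\le \mathrm{KL}(\rho\Vert\pi)+\ln\mathbb E_\pi e^f$, simultaneously for all $\rho$. This yields
\[
\lambda\,\mathbb E_\rho M_m(h)\le \frac{\lambda^2\,\mathbb E_\rho V_m(h)}{2(1-\bar c\lambda)}+\mathrm{KL}(\rho\Vert\pi)+\ln\tfrac1{\delta'}.
\]

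The main obstacle is optimizing over $\lambda$. The optimal $\lambda$ depends on $\rho$ through $\mathrm{KL}$ and $\mathbb E_\rho V_m$, and both are data-dependent since $V_m$ is predictable and random. So we cannot choose $\lambda$ before seeing the data. The plan is a union bound over a geometric grid of $\lambda$ values in $(0,1/\bar c)$. The grid has cardinality on the order of $\sqrt m$, because the relevant range of $\mathbb E_\rho V_m$ spans polynomially many scales in $m$, so logarithmically many points would even suffice. Set $\delta'=\delta/(C_0\sqrt m)$; this is where the $\ln\tfrac{C_0\sqrt m}{\delta}$ in $\pbterm$ comes from, with $C_0$ absorbing the grid constant. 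Write $K:=\pbterm$ and $V:=\mathbb E_\rho V_m$. Dividing the display above by $\lambda$ gives the standard Bernstein-type form
\[
\inf_\lambda\Big\{\frac{\lambda V}{2(1-\bar c\lambda)}+\frac{K}{\lambda}\Big\}\le\sqrt{2VK}+\bar cK.
\]
This is attained at $\lambda^\star=1/(\bar c+\sqrt{V/(2K)})$. Choosing the nearest grid point costs only a constant factor, which is absorbed into $C_0$. If $\lambda^\star$ falls outside the grid range, the bound is handled separately: it is trivial or dominated by the linear term. Dividing by $m$ gives exactly $\Gamma_m^{\mathrm{sub}\,\gamma}$.

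Finally, combine the high-probability inequality $R_{\mathcal S}(h_\rho)\le\widehat R_S(h_\rho)+\Gamma_m^{\mathrm{sub}\,\gamma}(\rho,\pi,\delta)$ with the deterministic decomposition from the first step to obtain \eqref{eq:subgamma-da-bound}. The careful points are the grid and union-bound bookkeeping, and the correct handling of the random $V_m$ inside the supermartingale. Both parallel the Freedman-based proof of Theorem~\ref{th:bound-martingale}, with the sub-gamma mgf bound replacing the bounded-increment Bennett bound, which replaces $B/3$ by $\bar c$.
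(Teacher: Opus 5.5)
Your skeleton matches the paper's proof: the domain-adaptation decomposition, an exponential supermartingale for $-X_t(h)$ (using $c(h)\le\bar c$ to pass to the common factor $1/(1-\bar c\lambda)$), Markov plus Donsker--Varadhan under the prior, and the same minimizer $\lambda^\star=1/(\bar c+\sqrt{V/(2K)})$.

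You are right that the hypothesis only controls $\mathbb{E}[e^{\lambda X_t(h)}\mid\mathcal F_{t-1}]$ for $\lambda>0$. That is the tail bounding $\widehat R_S-R_{\mathcal S}$, so sub-gamma control of $-X_t(h)$ is an extra assumption. The paper asserts that $-X_t(h)$ inherits the same $(v_t(h),c(h))$, which does not follow.

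Your first step, however, is not ``unchanged'' from Theorem~\ref{th:PAC_DA}. That decomposition rests on the identity $A+A'=D+2AA'$, which holds only for the $0$-$1$ loss. For a general $\ell$ it fails outright. With a single hypothesis and continuous targets, $\mathrm{dis}_\rho(\mathcal S,\mathcal T)=0$ and $\lambda_\rho=0$, since both $0$-$1$ risks equal one. Yet the $\ell$-risk on $\mathcal T$ can be arbitrarily larger than on the source. You need $\ell$ to be the $0$-$1$ loss, or $\lambda_\rho$ redefined as a residual as in Theorem~\ref{th:reg-proxy-da}. The paper's proof has the same defect.

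The main gap is uniformity in $\lambda$, which you correctly flag as the crux but do not resolve. Write $\lambda=1/(\bar c+u)$, so the objective is $V/(2u)+Ku+\bar cK$, minimized at $u^\star=\sqrt{V/(2K)}$.

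First, discretization loss. On a geometric grid of ratio $r$, the nearest point gives up to $\tfrac{r+r^{-1}}{2}\sqrt{2VK}+\bar cK$. A factor $\kappa>1$ on the square root cannot be absorbed into $C_0$, because $C_0$ sits only inside the logarithm in $K$. You would need $\kappa^2(\mathrm{KL}(\rho\Vert\pi)+a)\le\mathrm{KL}(\rho\Vert\pi)+b$ for every $\rho$, where $a,b$ are the $\rho$-independent logarithmic terms. This fails once the KL is large.

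Second, the range issue is worse. For unbounded losses nothing confines $\mathbb{E}_{h\sim\rho}V_m(h)$ to polynomially many scales; in the bounded case you had $V_m\le m/4$. Your fallback is also unavailable:
\begin{itemize}
\item an unbounded loss has no trivial regime;
\item for $\bar c=0$, the largest grid value leaves a residual $K/\lambda_{\max}$ that does not vanish as $V\to0$;
\item for $\bar c>0$, clamping $u$ at the smallest grid value strictly exceeds $\sqrt{2VK}+\bar cK$ by AM--GM whenever $u^\star$ lies below it.
\end{itemize}

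In fact, with adaptively chosen $v_t$ the stated overhead is too small. Take one hypothesis, $\bar c=0$, and Gaussian increments with $v_t=\beta^t$ until the first $t$ with $M_t\ge x\sqrt{V_t}$, and $v_t=0$ afterwards. For large $\beta$ the $m$ normalized sums are nearly independent standard normals. So at $x=\sqrt{2\ln(C_0\sqrt m/\delta)}$ the failure probability is about $1-\Phi(x)^m$, with $\Phi$ the standard normal distribution function. This exceeds $\delta$ for large $m$, since $m(1-\Phi(x))\asymp\delta\sqrt m/(C_0\sqrt{\ln m})$.

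A grid argument can therefore at best give something like $\kappa_1\sqrt{2VK}+\kappa_2\bar cK$ with $\kappa_1>1$, under an a priori range condition on $v_t(h)$. This is in the spirit of the PAC-Bayes--Bernstein inequality of \citet{seldin2012pac}. The paper's proof does not close this gap either. It asserts the $\lambda$-indexed inequality for all $\lambda\in(0,1/\bar c)$ simultaneously at cost $\ln(C_0\sqrt m/\delta)$, citing that work, and then optimizes exactly. That is precisely the step your grid was meant to justify.
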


    \section{Online Martingale PAC-Bayes Framework (OMPB)}\label{sec:method}
    \begin{figure}
        \centering
        \includegraphics[width=0.85\linewidth]{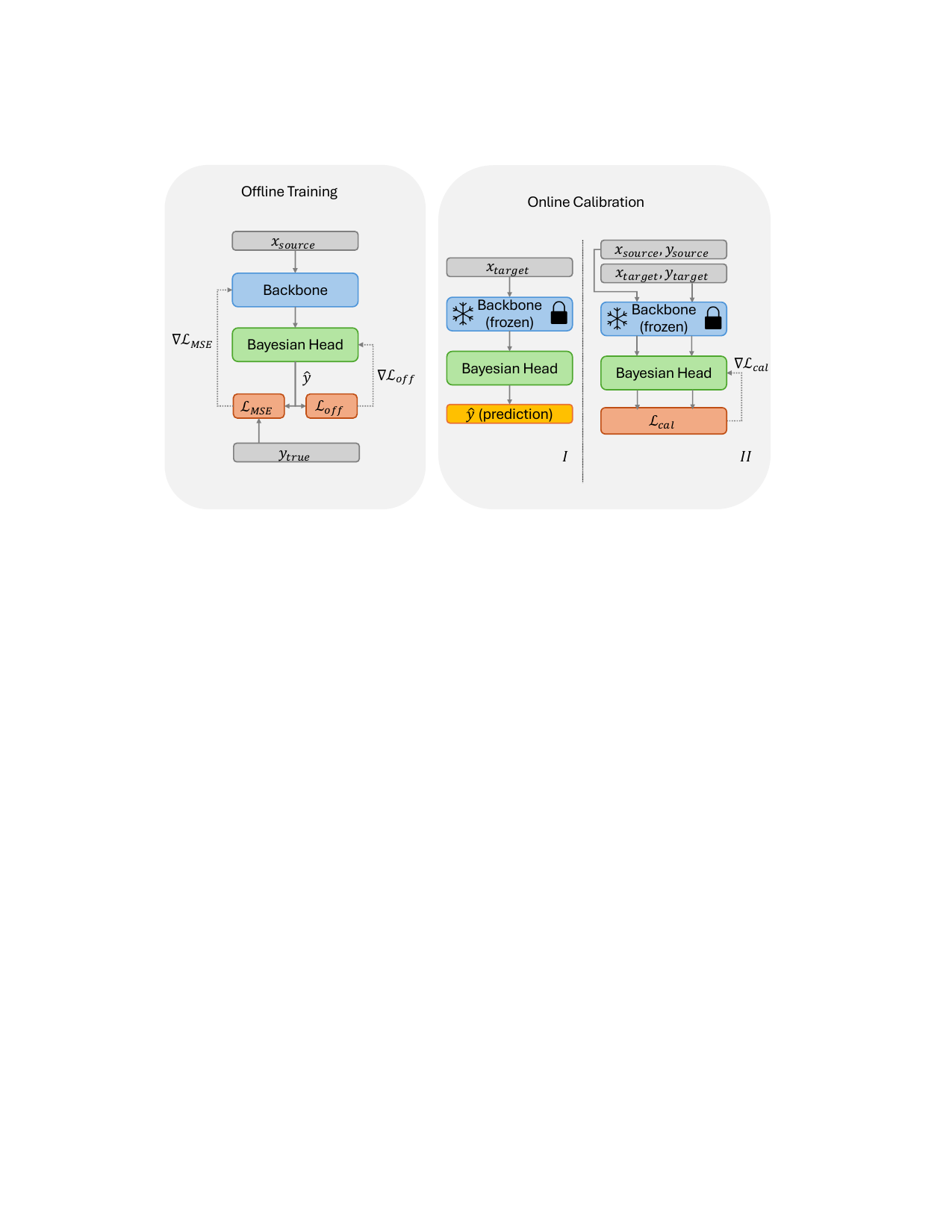}
        \caption{The left plot illustrates the offline training stage, while the right plot depicts the online calibration stage, in which predictions are generated first and the Bayesian head is updated afterward, ensuring causal validity.}
        \label{fig:pipeline}
    \end{figure}
    The OMPB framework operates via a two-stage procedure. In the offline stage, we pre-train a forecasting backbone on labeled source data and initialize a lightweight gated residual Bayesian head to yield a near-zero residual, ensuring a safe fallback to the backbone. In the online stage, the backbone remains frozen while the Bayesian head is updated recursively under a ``predict-then-update" protocol, strictly adhering to temporal causality to prevent look-ahead bias. Figure~\ref{fig:pipeline} illustrates this workflow.
    
    The remainder of this section first formalizes the gated residual Bayesian head and its offline initialization. We then detail the online adaptation process, specifically describing how OMPB estimates source-target mismatch, computes the time-indexed certificate, and updates the head within these causal constraints.
    
    \subsection{Gated Residual Bayesian Head}\label{sec:bayes-head}
    To ensure the tractability of the complexity term $\mathrm{KL}(\rho\Vert\pi)$, we treat the high-dimensional backbone as deterministic and place the posterior exclusively on the parameters of a lightweight head. This avoids the vacuous certificates often associated with applying a full Bayesian treatment to deep feature extractors.

    Let backbone map an input window $w$ to a base $H$-step forecast $\mathbf z(w)\in\mathbb{R}^{H}$. 
    The final prediction is a gated residual correction of the backbone forecast
    \begin{equation}
    \begin{aligned}
    \widehat{\mathbf y}(w)
    &=
    \mathbf z(w)
    +
    s\cdot\big(\mathbf z(w)\,\Delta W^\top + \Delta \mathbf b\big),
    \qquad \\
    & s:=\sigma(\alpha)\in(0,1),
    \end{aligned}
    \label{eq:gated-residual-head}
    \end{equation}
    where $\Delta W\in\mathbb{R}^{H\times H}$ and $\Delta \mathbf b\in\mathbb{R}^{H}$ are the core learnable correction
    parameters of the Bayesian head and $\sigma(\cdot)$ is the logistic function. This makes the backbone a stable default:
    when $s$ is small and $(\Delta W,\Delta\mathbf b)$ are near zero, $\widehat{\mathbf y}(w)\approx \mathbf z(w)$.
    
    We place an isotropic Gaussian prior on the vectorized correction parameters $\pi(\theta)=
    \mathcal N(0,\sigma_0^2 I)$ with $\theta=\mathrm{vec}(\Delta W,\Delta \mathbf b)$ and use a diagonal Gaussian posterior $\rho(\theta)=\mathcal N(\mu,\mathrm{diag}(\sigma^2))$, yielding the closed-form
    complexity term
    \begin{equation}
    \begin{aligned}
    \mathrm{KL}\!\left(\rho\;\Vert\;\pi\right)
    &=
    \frac{1}{2}\sum_{i=1}^{P}
    \left(
    \frac{\sigma_i^{2}+\mu_i^{2}}{\sigma_0^{2}}
    - 1
    - \ln\frac{\sigma_i^{2}}{\sigma_0^{2}}
    \right),
    \end{aligned}
    \label{eq:kl-diag-gaussian}
    \end{equation}
    where $P$ is the number of head parameters in $\theta$. The gate parameter $\alpha$ is optimized jointly with the head
    and is initialized and regularized to encourage a no-harm start.
    
    \subsection{Offline Training}\label{sec:offline}
    We first acquire the backbone parameters via standard empirical risk minimization on the labeled source sequence
    $(W_S,Y_S)=\{(w_t,y_t)\}_{t=1}^{m}$. Subsequently, we optimize the Bayesian head on these source windows by minimizing a composite objective consisting of the supervised forecasting loss and a KL regularization term relative to the prior distribution.
    \begin{equation}
    \begin{aligned}
    \mathcal{L}_{\mathrm{off}}
    &=
    \widehat{R}_S^{\mathrm{sup}}(\rho)
    +\lambda_{\mathrm{prior}}\,\mathrm{KL}(\rho\Vert\pi),
    \end{aligned}
    \label{eq:offline-objective}
    \end{equation}
    where $\widehat{R}_S^{\mathrm{sup}}(\rho)$ is the empirical source forecast loss of the head evaluated with the posterior
    mean predictor. In implementation, we normalize the KL term by the number of head-training windows $N$, so
    $\lambda_{\mathrm{prior}}=1/N$. We use a fixed head-training size $N$ across all datasets to keep the
    regularization scale consistent. We also compute and cache source-dependent scale statistics used to evaluate the
    martingale correction term $\Gamma_m^{\mathrm{sub}\,\gamma}(\rho,\pi,\delta)$ during online certification and calibration.
    
    \subsection{Online Calibration}\label{sec:online}
    At deployment time $t$, we maintain three distinct data pools: a sliding window of recent unlabeled target inputs $W_{T,t}$ for real-time shift sensing; a fixed replay set of labeled source windows serving as a stability anchor; and the current target instance $(w'_t,y'_t)$, where the ground-truth outcome $y'_t$ is revealed strictly after the forecast is issued.
    
    \paragraph{Bounded prediction disagreement.}
    To obtain an input-side mismatch signal compatible with the bounded proxy certificate, we define the posterior
    prediction disagreement on a target window \(w\) as
    \begin{equation}
    \begin{aligned}
    \widetilde d_{\tau_d}(h,h';w)
    :=
    \min\!\left\{
    1,\,
    \frac{\|h(w)-h'(w)\|_2^2}{HC\,\tau_d^2}
    \right\}
    \in[0,1],
    \end{aligned}
    \label{eq:d_reg_code}
    \end{equation}
    where \(H\) is the prediction horizon and \(C\) is the number of target variables. The normalization by \(HC\) makes the
    score an average posterior forecast dispersion rather than a raw dimension-dependent norm. \footnote{Because this disagreement estimate depends on posterior samples, its informativeness requires the posterior to retain
    meaningful epistemic diversity; we discuss posterior-miscalibration effects in
    Appendix~\ref{sec:posterior_miscalibration}.}

    The scale \(\tau_d\) is selected once from the source replay buffer before target adaptation:
    \begin{equation}
    \begin{aligned}
    \tau_{\mathrm{auto}}
    =
    \sqrt{
    \mathrm{Quantile}_{q}
    \left(
    \frac{\|h^{(k)}(w)-h^{(k')}(w)\|_2^2}{HC}
    \right)
    },
    \end{aligned}
    \label{eq:tau-auto}
    \end{equation}
    where the quantile is computed over posterior-sample pairs and source probe windows. We use \(q=0.5\), so
    \(\tau_{\mathrm{auto}}\) is the median source-side RMS posterior disagreement.
    
    Using posterior samples \(\{h^{(k)}\}_{k=1}^{K}\sim\rho\), we estimate per-window disagreement by unordered posterior
    pairs:
    \begin{equation}
    \widehat d_{\rho}(w)
    :=
    \frac{2}{K(K-1)}
    \sum_{1\le k<k'\le K}
    \widetilde d_{\tau_d}\!\big(h^{(k)},h^{(k')};w\big).
    \label{eq:perwindow-disagree}
    \end{equation}
    The online source--target mismatch is
    \begin{equation}
    \begin{aligned}
    \widehat{\mathrm{dis}}_\rho(t)
    =
    \left|
    \frac{1}{m}\sum_{t'=1}^{m}\widehat d_\rho(w_{t'})
    -
    \frac{1}{|W_{T,t}|}\sum_{w'\in W_{T,t}}\widehat d_\rho(w')
    \right|.
    \end{aligned}
    \label{eq:disagreement-online}
    \end{equation}
    With the default \(K=5\), OMPB uses \(\binom{5}{2}=10\) posterior-pair comparisons per online batch.
    \begin{table*}[t]
    \centering
    
    \caption{ETTh forecasting results (MAE | MSE) under In-Distribution (ID) and Out-of-Distribution (OOD) settings. Lower score values indicate better performance, and the best score is in \textbf{bold}. Additional results for more prediction horizon (H) are included in Appendix~\ref{sec:extended_ett_result}.}
    \begin{adjustbox}{max width=\textwidth}
    \label{tab:etth_results}
    \setlength{\tabcolsep}{9pt}
    \renewcommand{\arraystretch}{0.4}
    \begin{tabular}{cccccccc}
    \toprule
    \multirow{2}{*}{Backbone} &
    \multirow{2}{*}{Method} &
    \multicolumn{2}{c}{H=24} &
    \multicolumn{2}{c}{H=96} &
    \multicolumn{2}{c}{H=336} \\
    \cmidrule(lr){3-4}\cmidrule(lr){5-6}\cmidrule(lr){7-8}
    &
    & ID (ETTh1) & OOD (ETTh2)
    & ID (ETTh1) & OOD (ETTh2)
    & ID (ETTh1) & OOD (ETTh2) \\
    \midrule
    \multirow{5}{*}{TCN}
    & Original
    & 0.4588 | 0.4521 & 2.5145 | 8.9095
    & 0.5330 | 0.5715 & 2.5635 | 8.9076
    & 0.6193 | 0.7304 & 3.1380 | 13.7387 \\
    & SOLID
    & 0.4741 | 0.4333 & 0.7553 | 1.0175
    & 0.5363 | 0.5389 & 0.8886 | 1.4424
    & 0.7110 | 0.9710 & 0.9488 | 1.7629 \\
    & OneNet
    & 0.5838 | 0.6739 & 0.8234 | 1.3034
    & 0.6606 | 0.8488 & 1.0073 | 1.9592
    & 0.7082 | 0.9191 & 1.2228 | 3.3373 \\
    & PROCEED
    & 0.4917 | 0.5161 & 1.1002 | 0.7768
    & 0.5889 | 0.7086 & 0.9608 | 1.8119
    & 0.6490 | 0.8225 & 1.1293 | 2.5799 \\
    
    & OMPB
    & \textbf{0.3635} | \textbf{0.2724} & \textbf{0.5292} | \textbf{0.5659}
    & \textbf{0.3813} | \textbf{0.2800} & \textbf{0.5570} | \textbf{0.5912}
    & \textbf{0.4484} | \textbf{0.3674} & \textbf{0.5981} | \textbf{0.6782} \\
    \midrule
    \multirow{5}{*}{Autoformer}
    & Original
    & 0.5147 | 0.5605 & 1.1207 | 1.9961
    & 0.5325 | 0.5651 & 1.3298 | 2.6778
    & 0.6830 | 0.8846 & 1.1962 | 2.3453 \\
    & SOLID
    & 0.4899 | 0.4813 & 0.8410 | 1.2997
    & 0.5446 | 0.5629 & 0.9501 | 1.6828
    & 0.5937 | 0.6374 & 1.0386 | 2.0287 \\
    & OneNet
    & 0.4786 | 0.4815 & 0.7311 | 0.9580
    & 0.5571 | 0.6026 & 0.8395 | 1.2940
    & 0.6140 | 0.6861 & 0.9812 | 1.8274 \\
    & PROCEED
    & 0.5119 | 0.5967 & 0.7580 | 1.1151
    & 0.6827 | 1.0923 & 1.0509 | 2.7848
    & 0.8767 | 1.8344 & 1.5051 | 4.6450 \\
    
    & OMPB
    & \textbf{0.3814} | \textbf{0.3018} & \textbf{0.6785} | \textbf{0.8535}
    & \textbf{0.3845} | \textbf{0.2858} & \textbf{0.6033} | \textbf{0.6882}
    & \textbf{0.4373} | \textbf{0.3481} & \textbf{0.6461} | \textbf{0.7764} \\
    \midrule
    \multirow{5}{*}{GPT4TS}
    & Original
    & 0.3794 | 0.3217 & 0.5485 | 0.7119
    & 0.4473 | 0.4167 & 0.6738 | 1.0241
    & 0.5320 | 0.5588 & 0.8547 | 1.5203 \\
    & SOLID
    & 0.3763 | 0.3218 & 0.6172 | 0.7745
    & 0.4451 | 0.4160 & 0.7499 | 1.1224
    & 0.5333 | 0.5625 & 0.9131 | 1.6823 \\
    & OneNet
    & 0.3740 | 0.3184 & 0.6115 | 0.7512
    & 0.4466 | 0.4146 & 0.7489 | 1.1222
    & 0.5364 | 0.5572 & 0.8975 | 1.6239 \\
    & PROCEED
    & 0.4843 | 0.5082 & 0.7210 | 1.0268
    & 0.6237 | 0.8802 & 0.9364 | 1.8907
    & 0.6911 | 1.0597 & 1.1482 | 2.8742 \\
    
    & OMPB
    & \textbf{0.3156} | \textbf{0.2165} & \textbf{0.5242} | \textbf{0.6616}
    & \textbf{0.3535} | \textbf{0.2457} & \textbf{0.5708} | \textbf{0.7377}
    & \textbf{0.4095} | \textbf{0.3158} & \textbf{0.5719} | \textbf{0.7304} \\
    \bottomrule
    \end{tabular}
    \end{adjustbox}
    \end{table*}
    \paragraph{Online certificate:}
    OMPB forms a time-indexed certificate
    \begin{equation}
    \begin{aligned}
    \widehat{\mathrm{PB}}_{\gamma}(t)
    &=
    \widehat{R}_{S,t}(h_\rho)
    +\Gamma_m^{\mathrm{sub}\,\gamma}(\rho,\pi,\delta)
    +\tfrac{1}{2}\widehat{\mathrm{dis}}_\rho(t),
    \end{aligned}
    \label{eq:pb-online}
    \end{equation}
    where $\widehat{R}_{S,t}(h_\rho)$ is re-estimated online from the labeled source replay set to keep the certificate
    coupled to the current posterior. Equation~\eqref{eq:pb-online} is the computable, input-identifiable part of the
    PAC-Bayes domain-adaptation certificate in Theorem~\ref{th:bound-subgamma}. Specifically, the population source risk is
    replaced by its empirical source-replay estimate, the martingale correction is computed from the Bayesian-head posterior
    and source-estimated predictable variance proxy, and the source--target disagreement term is estimated from source
    windows and currently observed unlabeled target windows. We omit $\lambda_\rho$ in
    Equation~\eqref{eq:subgamma-da-bound} because it is a residual labeling-mismatch term that depends on target labels and
    is unavailable at prediction time under delayed and partial target feedback. Thus,
    $\widehat{\mathrm{PB}}_{\gamma}(t)$ is not an independent heuristic; it is the online surrogate induced by the
    input-identifiable terms of the certificate.
    
    In forecasting, this certificate is defined for bounded proxy quantities rather than the raw MAE and MSE reported in
    experiments. We therefore do not claim that Equation~\eqref{eq:iid-pbda} directly upper-bounds raw forecasting error.
    Instead, the mismatch term uses the bounded proxy discrepancy
    $\widetilde{d}(\cdot,\cdot;w)\in[0,1]$ in Equation~\eqref{eq:d_reg_code}, while MAE/MSE are evaluated separately as
    task metrics. The proxy remains informative under shift because $\widetilde{d}$ increases when posterior samples
    produce divergent forecasts on the same input. The sub-gamma correction in Theorem~\ref{th:bound-subgamma} is
    instantiated using predictable variance and scale statistics estimated from source residuals. Appendix~\ref{sec:reg_surrogate}
    formalizes the regression-proxy interpretation, and Appendix~\ref{sec:reg_proxy_diagnostics} reports post-hoc
    diagnostics comparing realized target proxy risk with full martingale and i.i.d. proxy certificates in
    Table~\ref{tab:reg_proxy_full_cert} and Figure~\ref{fig:reg_proxy_bound_curves}. These full diagnostics include the
    residual proxy mismatch term requiring target labels, so they are used only to assess numerical meaningfulness after the
    stream is observed, not for online calibration.
    \paragraph{Predict-then-update calibration:}
    At each time $t$, we first emit a forecast for $w'_t$ using $\rho_{t-1}$ and record the pre-update metrics. After the
    outcome $y'_t$ is observed, we update only the Bayesian head. The supervised feedback term is
    \begin{equation}
    \widehat{R}^{\mathrm{sup}}_{T,t}(\rho)
    :=
    \ell\!\big(h_{\rho}(w'_t),y'_t\big),
    \label{eq:target-sup-loss}
    \end{equation}
    and the online calibration objective is
    \begin{equation}
    \begin{aligned}
    \mathcal{L}_{\mathrm{cal}}(t;\rho)
    &=
    \widehat{\mathrm{PB}}_{\gamma}(t;\rho)
    +\widehat{R}^{\mathrm{sup}}_{T,t}(\rho).
    \end{aligned}
    \label{eq:cal-objective}
    \end{equation}

    \section{Performance Assessment}
    
    \subsection{Experiments Setup}
    
    \paragraph{Datasets Employed:}
    To investigate OOD generalization under complementary shift patterns, we employ three representative benchmarks: the Electricity Transformer Dataset (ETTh) \citep{ETTh} for hourly load forecasting; the U.S. Outpatient Influenza-like Illness Surveillance Network (ILI) \citep{ILI} for epidemiological tracking; and the Worldwide Weather Stations (WEATHER-5K) dataset \citep{han2024weather5k} for global climatological modeling. These datasets collectively span diverse temporal resolutions and shift types. Refer to Appendix~\ref{sec:dataset_detail} for complete usage details.
    
    \paragraph{Metrics:}
    We report MAE and MSE in the main text as the primary point-forecasting metrics for comparison with online adaptation
    baselines. Since OMPB uses a Bayesian head, Appendix~\ref{sec:probabilistic_results} further evaluates probabilistic
    forecasting quality using NLL, CRPS, 80\%/95\% interval coverage, interval width, and ECE, as summarized.
    
    \paragraph{Baselines:}
    We compare OMPB against three recent state-of-the-art online adaptation methods: SOLID \citep{chen2024solid}, which
    uses context retrieval and lightweight fine-tuning; OneNet \citep{zhang2023onenet}, which uses an online ensemble to
    track non-stationary drift; and PROCEED \citep{zhao2025proceed}, which proactively adapts through learned drift
    representations. These baselines represent leading approaches for time-series forecasting under distribution shift. All
    baseline comparisons and main experimental results use a one-step feedback delay. Additional experiments with longer
    feedback delays are reported in Appendix~\ref{sec:feedback_delay_analysis}.
    
    \paragraph{Backbone:}
    To demonstrate the model-agnostic property of OMPB, we evaluate it with three representative forecasting backbones, a
    convolutional forecaster, an attention-based forecaster, and a large language model (LLM)-based forecaster, instantiated by
    TCN~\citep{TCN}, Autoformer~\citep{autoformer}, and GPT4TS~\citep{GPT4TS}. We do not use an in-distribution (ID) validation set for early stopping 
    because we found that it can be suboptimal under distribution shift, and instead select the training epoch 
    via an epoch sensitivity study. Additional details of backbone training are  provided in the Appendix~\ref{sec:backbone_details}.
    
    \paragraph{Hyperparameter Selection.}
    All online calibration hyperparameters are fixed before target evaluation and are not tuned on target labels. The
    disagreement scale \(\tau_d\) is selected automatically from the source replay buffer using
    Equation~\eqref{eq:tau-auto} with \(q=0.5\). We use \(K=5\) posterior samples, the bounded disagreement normalization in
    Equation~\eqref{eq:d_reg_code}, and variance-proxy factor \(1.0\). Appendix~\ref{sec:sensitivity_analysis} reports
    one-at-a-time sensitivity analyses for \(\tau_d\), \(K\), disagreement normalization, runtime, and variance-proxy
    misspecification.
    \begin{table*}[t]
    \centering
    
    \caption{ILI forecasting results (MAE | MSE) under In-Distribution (ID) and Out-of-Distribution (OOD) settings. Lower score values indicate better performance, and the best score is in \textbf{bold}.}
    \label{tab:ili_results}
    \setlength{\tabcolsep}{7pt}
    \renewcommand{\arraystretch}{0.4}
    \begin{adjustbox}{max width=\textwidth}
    \begin{tabular}{cccccccc}
    \toprule
    \multirow{2}{*}{Backbone} &
    \multirow{2}{*}{Method} &
    \multicolumn{2}{c}{H=24} &
    \multicolumn{2}{c}{H=48} &
    \multicolumn{2}{c}{H=72} \\
    \cmidrule(lr){3-4}\cmidrule(lr){5-6}\cmidrule(lr){7-8}
    &
    & ID (Pre-COVID) & OOD (COVID)
    & ID (Pre-COVID) & OOD (COVID)
    & ID (Pre-COVID) & OOD (COVID) \\
    \midrule
    \multirow{5}{*}{TCN}
    & Original
    & 1.6734 | 5.8091 & 4.5372 | 38.1840
    & 1.2409 | 3.2224 & 2.6306 | 11.5707
    & 1.9092 | 8.1936 & 4.0072 | 32.0801 \\
    & SOLID
    & 1.2047 | 3.5700 & 1.7447 | 7.4170
    & 1.5927 | 5.3366 & 3.1461 | 32.3844
    & 1.5807 | 5.4009 & 2.2503 | 13.2387 \\
    & OneNet
    & 1.6995 | 6.9002 & 6.7105 | 81.2448
    & 1.8328 | 7.6513 & 7.3912 | 115.1614
    & 1.7588 | 6.7818 & 7.9340 | 143.7589 \\
    & PROCEED
    & 1.4870 | 4.6585 & 1.8515 | 10.9656
    & 1.2260 | 3.1388 & 1.7905 | 9.2732
    & 1.3271 | 3.7264 & 1.9005 | 10.6453 \\
    
    & OMPB
    & \textbf{0.7102} | \textbf{1.1751} & \textbf{1.2672} | \textbf{6.3903}
    & \textbf{0.5740} | \textbf{0.9482} & \textbf{0.7109} | \textbf{1.8331}
    & \textbf{0.5960} | \textbf{1.2583} & \textbf{0.7113} | \textbf{2.1360} \\
    \midrule
    \multirow{5}{*}{Autoformer}
    & Original
    & 1.2800 | 3.3924 & 2.7942 | 15.7648
    & 2.0135 | 8.1263 & 2.6485 | 15.4891
    & 2.0193 | 8.5962 & 2.6667 | 15.5638 \\
    & SOLID
    & 1.2591 | 3.5508 & 4.0567 | 64.7122
    & 1.2642 | 3.5153 & 2.3409 | 19.6094
    & 1.4241 | 4.4793 & 2.2399 | 14.5178 \\
    & OneNet
    & 1.5966 | 5.5776 & 2.4633 | 19.4883
    & 1.4029 | 4.6296 & 2.1933 | 16.2578
    & 1.4493 | 4.5256 & 2.2957 | 18.3962 \\
    & PROCEED
    & 1.7888 | 7.1304 & 1.9178 | 11.4278
    & 1.4210 | 4.1995 & 1.5856 | 7.5495
    & 1.3469 | 3.8370 & 2.2889 | 17.9593 \\
    
    & OMPB
    & \textbf{0.9916} | \textbf{2.3576} & \textbf{1.3498} | \textbf{3.9055}
    & \textbf{1.1288} | \textbf{3.1272} & \textbf{1.2460} | \textbf{4.2611}
    & \textbf{1.2825} | \textbf{3.9065} & \textbf{1.4256} | \textbf{4.8619} \\
    \midrule
    \multirow{5}{*}{GPT4TS}
    & Original
    & 0.9667 | 2.3908 & 1.3245 | 4.7387
    & 0.8972 | 1.9836 & 1.3013 | 4.2179
    & 1.0006 | 2.1929 & 1.4140 | 5.0265 \\
    & SOLID
    & 1.1803 | 3.5820 & 1.4930 | 6.8183
    & 1.1177 | 3.2670 & 1.8180 | 13.2240
    & 1.1957 | 3.6045 & 1.3675 | 6.7276 \\
    & OneNet
    & 1.2891 | 4.0408 & 1.9931 | 14.5083
    & 1.1923 | 3.5878 & 1.9970 | 14.2298
    & 1.3108 | 4.0760 & 2.2679 | 17.8664 \\
    & PROCEED
    & 1.6876 | 7.4868 & 2.2322 | 16.6780
    & 1.1823 | 3.3912 & 1.8764 | 10.6957
    & 1.2604 | 3.8947 & 2.1143 | 13.9342 \\
    
    & OMPB
    & \textbf{0.8543} | \textbf{1.9178} & \textbf{1.0906} | \textbf{3.3870}
    & \textbf{0.6248} | \textbf{1.0503} & \textbf{0.9263} | \textbf{2.6351}
    & \textbf{0.6909} | \textbf{1.3602} & \textbf{1.0130} | \textbf{2.9065} \\
    \bottomrule
    \end{tabular}
    \end{adjustbox}
    \end{table*}

    \begin{figure*}
        \centering
        \includegraphics[width=0.8\linewidth]{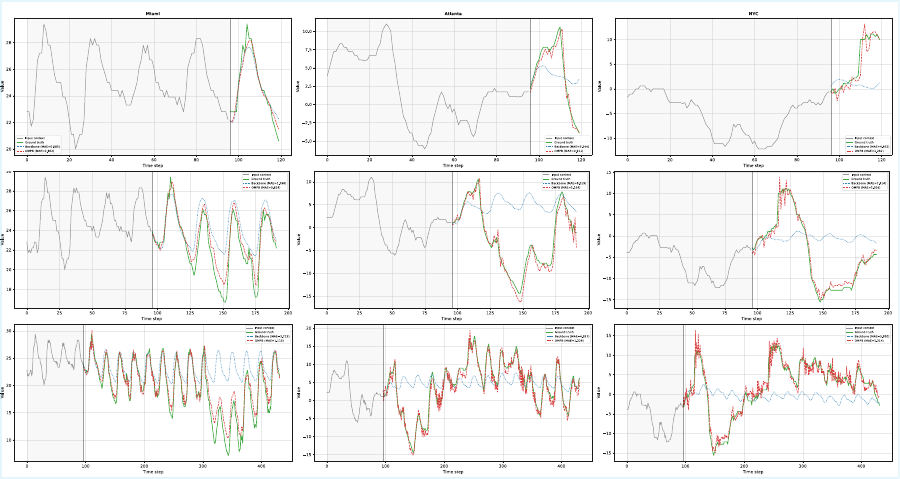}
        \caption{Visualization of WEATHER-5K forecasting with the GPT4TS backbone. Rows show horizons $H{=}24$, $H{=}96$, and $H{=}336$,
    and columns show Miami as ID, Atlanta as near OOD, and New York City (NYC) as far
    OOD. The gray curve shows the input window of length 96, the green curve shows the ground truth, the
    blue dotted curve shows the backbone prediction, and the red dashed curve shows the OMPB-calibrated prediction.}
        \label{fig:WEATHER_forecast}
    \end{figure*}
    
    \subsection{Experiments}
    
    \paragraph{A. Covariate Shift Experiment Result:}
    We use ETTh to study covariate shift because it contains hourly readings from two electricity transformers at two
    stations in different regions of the same province in China~\citep{ETTh}. This setting induces covariate shift through
    location and device-specific differences while preserving the forecasting task, analogous to sensor replacement or
    recalibration. We train on ETTh1 and evaluate on the ETTh1 test split and on ETTh2 as the ID and
    OOD test sets. Additional shift visualizations and quantification are provided in Appendix~\ref{sec:ETTh_shift}.
    
    We set the input sequence length to 96 for all methods on ETTh and evaluate horizons $H{=}24,96,336$ using MAE and MSE.
    Table~\ref{tab:etth_results} shows that covariate shift from ETTh1 to ETTh2 can severely degrade unadapted forecasters.
    For example, MAE of TCN increases from $0.4588$ to $2.5145$ at $H{=}24$ and remains large at longer horizons. While
    baselines reduce OOD error, they can sacrifice ID accuracy or provide inconsistent gains.
    In contrast, OMPB achieves the best results across all horizons and backbones, improving both ID and
    OOD metrics. The OOD MAE reductions are especially large for TCN and remain substantial
    for Autoformer. For GPT4TS, competing baselines often fail to improve and can degrade performance, whereas OMPB
    consistently improves GPT4TS across all horizons and both distributions, supporting the model-agnostic design of OMPB.
    
    \begin{table*}[t]
    \centering
    \caption{OMPB ablation results (MAE | MSE) with $H{=}24$. Lower values are better, and the best results are shown in \textbf{bold}.}
    \label{tab:ablation_result}
    
    \setlength{\tabcolsep}{6pt}
    \renewcommand{\arraystretch}{0.4}
    \begin{adjustbox}{max width=\textwidth}
    \begin{tabular}{ccccccccc}
    \toprule
    Ablation & Backbone &
    ETTh1 & ETTh2 &
    Pre-COVID & COVID &
    Miami & Atlanta & NYC \\
    \midrule
    \multirow{3}{*}{W/O gate}
    & TCN
    & 0.3853 | 0.3101
    & 0.5681 | 0.6651
    & \textbf{0.5918} | \textbf{0.8810}
    & \textbf{0.9886} | \textbf{4.4182}
    & 0.2721 | 0.2131
    & 0.3006 | 0.2359
    & 0.3499 | 0.3595 \\
    & Autoformer
    & 0.3937 | 0.3281
    & 0.7675 | 0.9474
    & 0.9944 | \textbf{2.0439}
    & 1.5084 | 5.3023
    & 0.3302 | 0.2784
    & \textbf{0.3236} | \textbf{0.2584}
    & \textbf{0.4775} | \textbf{0.6097} \\
    & GPT4TS
    & 0.3813 | 0.3263
    & \textbf{0.5214} | \textbf{0.6264}
    & \textbf{0.8488} | 2.0729
    & \textbf{0.9719} | \textbf{2.8031}
    & 0.2809 | 0.2312
    & 0.3072 | 0.2500
    & \textbf{0.3622} | 0.4022 \\
    \midrule
    \multirow{3}{*}{W/O PB}
    & TCN
    & 0.4085 | 0.3609
    & 0.6294 | 0.7812
    & 0.7406 | 1.3025
    & 1.2846 | 6.4667
    & 0.3239 | 0.2596
    & 0.4363 | 0.4468
    & 0.5555 | 0.7176 \\
    & Autoformer
    & 0.4359 | 0.4152
    & 0.6905 | 0.8798
    & 1.0997 | 2.4123
    & 1.6026 | 5.7349
    & 0.4189 | 0.3870
    & 0.3880 | 0.3396
    & 0.7216 | 1.1491 \\
    & GPT4TS
    & 0.3678 | 0.3113
    & 0.5295 | 0.6677
    & 0.9683 | 2.9854
    & 1.1392 | 3.7494
    & 0.3390 | 0.2909
    & 0.4541 | 0.4961
    & 0.5647 | 0.7872 \\
    \midrule
    \multirow{3}{*}{W/O Online}
    & TCN
    & 0.4599 | 0.4581
    & 2.5400 | 9.2468
    & 1.6623 | 5.8682
    & 4.4602 | 37.8893
    & 0.3331 | 0.2684
    & 0.5691 | 0.7435
    & 0.8092 | 1.4660 \\
    & Autoformer
    & 0.4911 | 0.5107
    & 0.9671 | 1.5330
    & 1.5403 | 4.5176
    & 2.9293 | 14.8753
    & 0.4037 | 0.3572
    & 0.6252 | 0.8325
    & 0.8183 | 1.4850 \\
    & GPT4TS
    & 0.3736 | 0.3155
    & 0.5674 | 0.7278
    & 1.1711 | 3.1064
    & 1.8671 | 7.9420
    & 0.3463 | 0.2984
    & 0.4936 | 0.5792
    & 0.6318 | 0.9540 \\
    \midrule
    \multirow{3}{*}{Original}
    & TCN
    & \textbf{0.3635} | \textbf{0.2724}
    & \textbf{0.5292} | \textbf{0.5659}
    & 0.7102 | 1.1751
    & 1.2672 | 6.3903
    & \textbf{0.2650} | \textbf{0.1965}
    & \textbf{0.2952} | \textbf{0.2274}
    & \textbf{0.3487} | \textbf{0.3388} \\
    & Autoformer
    & \textbf{0.3814} | \textbf{0.3018}
    & \textbf{0.6785} | \textbf{0.8535}
    & \textbf{0.9916} | 2.3576
    & \textbf{1.3498} | \textbf{3.9055}
    & \textbf{0.3284} | \textbf{0.2672}
    & 0.3298 | 0.2693
    & 0.5240 | 0.7036 \\
    & GPT4TS
    & \textbf{0.3156} | \textbf{0.2165}
    & 0.5242 | 0.6616
    & 0.8543 | \textbf{1.9178}
    & 1.0906 | 3.3870
    & \textbf{0.2759} | \textbf{0.2143}
    & \textbf{0.3025} | \textbf{0.2435}
    & 0.3639 | \textbf{0.3793} \\
    \bottomrule
    \end{tabular}
    \end{adjustbox}
    \end{table*}
    
    \paragraph{B. Concept Shift Experiment Result:}
    We evaluate concept shift on ILI by splitting the series into two regimes. We use 2009--2020 as the pre-COVID training
    period and 2021--2026 as the COVID test period. This setting reflects concept shift because the conditional mapping
    from past signals to future influenza-like illness levels changes due to the pandemic and related interventions, leading
    to a mismatch in $\mathbb{P}(Y\mid X)$. We train on the pre-COVID period and evaluate on both the pre-COVID test split and the
    COVID period. Additional shift visualizations and quantification are provided in Appendix~\ref{sec:ILI_shift}.
    
    Due to the limited length of the ILI time series, we set the input sequence length to 24 and evaluate horizons
    $H{=}24$ to $H{=}72$ using MAE and MSE. Table~\ref{tab:ili_results} shows that the shift
    from the pre-COVID period to the COVID period substantially increases forecasting difficulty, especially for the
    convolutional and attention-based backbones. Naive deployment exhibits large OOD errors, and OneNet
    becomes unstable under this regime change, with errors exploding across all horizons for TCN. SOLID and PROCEED mitigate
    the shift in some settings but provide inconsistent improvements and can be sensitive to the horizon and backbone. In
    contrast, OMPB achieves the best results across all backbones and horizons, consistently reducing both
    ID and OOD errors. Notably, under concept shift OMPB is also the only method that
    consistently improves GPT4TS across all horizons and both distributions, whereas competing baselines often degrade or
    fail to improve its performance. These results indicate that constraining adaptation to a gated residual Bayesian head
    and regularizing updates yields stable calibration under severe concept shift.

    \paragraph{C. Near vs. Far OOD Generalization:}

    The WEATHER-5K dataset contains 5{,}672 weather stations worldwide~\citep{han2024weather5k}. We evaluate covariate shift
    by training on Miami and testing on Atlanta as a nearby OOD location and New York City (NYC) as a farther OOD
    location. Although all three cities are on the U.S. East Coast, their climate regimes differ, inducing distribution
    shifts in variables such as temperature. We use GPT4TS as the backbone and visualize temperature forecasts in
    Figure~\ref{fig:WEATHER_forecast}.  Additional shift
    quantification and visualizations are provided in Appendix~\ref{sec:WEATHER_shift}.
    
    Figure~\ref{fig:WEATHER_forecast} shows that the forecasting error increases with the magnitude of covariate shift and
    that OMPB consistently reduces this error. Especially under the near OOD shift at Atlanta and the far OOD shift
    at NYC, the backbone becomes increasingly biased and overly smooth as the horizon grows, while OMPB produces forecasts
    that are consistently closer to the ground truth across $H{=}24,96,336$, with the largest gains in the far OOD setting.
    Additional numerical results for these locations and all backbones are reported in Appendix~\ref{sec:weather_result}.

    \paragraph{D. Ablation Study:}
    We perform ablations on OMPB by (i) removing the gate $s$ in Equation~\eqref{eq:gated-residual-head} (W/O gate) to assess
    the do-no-harm effect, (ii) removing the PAC-Bayes certificate $\widehat{\mathrm{PB}}_{\gamma}(t;\rho)$ from
    Equation~\eqref{eq:cal-objective} (W/O PB) to evaluate its contribution, and (iii) disabling the online calibration stage
    and using the offline-learned Bayesian head directly for prediction (W/O Online), which removes Stage II in
    Figure~\ref{fig:pipeline} to assess the necessity of deployment-time calibration.

    Table~\ref{tab:ablation_result} shows that removing the gate usually leads to modest degradation, supporting its
    do-no-harm role under mild shift, although it can be comparable under limited data (Pre-COVID and COVID) or stronger
    shift (NYC). In contrast, both W/O PB and W/O Online consistently degrade performance, indicating that the certificate
    is an important regularizer and that deployment-time online calibration is necessary under shift.
    \begin{figure}[ht]
        \begin{center}
        \includegraphics[width=\linewidth]{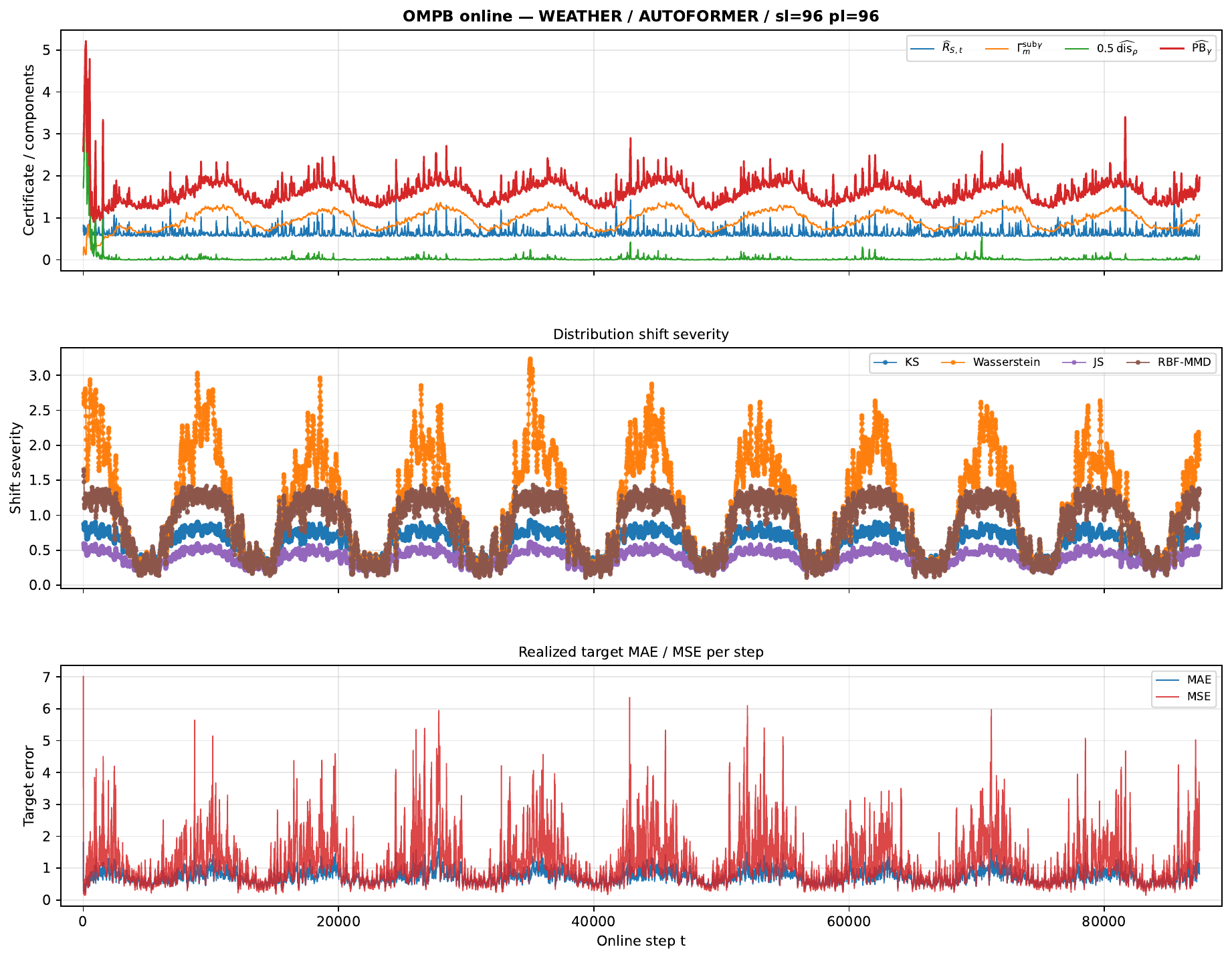}
        \caption{Deployment-time certificate diagnostics on the WEATHER-5K far-OOD stream. The figure compares the online certificate, posterior disagreement, external distribution-shift severity metrics, and realized pre-calibration target error. The certificate follows the recurring shift pattern and responds to local error spikes, supporting its use as a monitoring signal under distribution shift.}
        \label{fig:cert_shift_timeseries_main}
        \end{center}
    \end{figure}
    \section{Discussion}
    
    \paragraph{Certificate Behavior:}
    Figure~\ref{fig:cert_shift_timeseries_main} gives a deployment-time view of the WEATHER-5K far-OOD stream. The online
    certificate \(\widehat{\mathrm{PB}}_{\gamma}(t)\) combines the source-risk anchor, martingale correction, and input-side
    disagreement penalty, while the target-error curve reports the pre-calibration error observed after labels are released.
    The certificate follows the main distribution-shift pattern: it rises during periods of stronger source--target mismatch
    and recurring seasonal change, and it also responds to local error spikes. This supports its role as a monitoring signal
    rather than a direct MAE/MSE bound. Additional certificate diagnostics for ETTh and ILI are in
    Appendix~\ref{sec:timeseries_diagnostics}, and the corresponding certificate--error and disagreement--shift scatter
    analyses are in Appendix~\ref{sec:correlation_diagnostics}.
    
    \paragraph{Expressiveness of the Bayesian Head:}
    \label{sec:discussion_head_expressiveness}
    
    OMPB uses a lightweight gated residual Bayesian head instead of updating the full backbone to prioritize stable online
    calibration, lower cost, and a tractable PAC-Bayes complexity term under delayed feedback. Since delayed labels may
    reflect stale regimes, high-capacity backbone updates can overfit and destabilize adaptation; freezing the backbone
    localizes updates to the Bayesian head, while the gate preserves the backbone forecast as a safe fallback. As shown in
    Table~\ref{tab:partial_backbone_adaptation}, frozen-backbone OMPB outperforms last-layer and LoRA-style online updates
    on all three shifted streams. The partial-backbone variants add trainable parameters and degrade MAE/MSE, suggesting
    that extra online flexibility can hurt under delayed supervision, though it may still be useful when target feedback is
    denser or more reliable.
    
    \begin{table}[t]
    \centering
    
    \caption{Partial-backbone adaptation diagnostic for OMPB. ``Trainable'' reports additional backbone parameters updated online beyond the Bayesian head.}
    \begin{adjustbox}{max width=.5\textwidth}
    \label{tab:partial_backbone_adaptation}
    \renewcommand{\arraystretch}{0.4}
    \begin{tabular}{ccccc}
    \toprule
    Setting & Mode & Trainable & MAE & MSE \\
    \midrule
    \multirow{3}{*}{\begin{tabular}[c]{@{}l@{}}ETTh / TCN\\OOD ETTh2, \(H=96\)\end{tabular}}
    & Frozen     & 0        & \textbf{0.5570} & \textbf{0.5912} \\
    & Last layer & 22{,}176 & 0.7124 & 0.8834 \\
    & LoRA       & 2{,}816  & 0.7410 & 1.0474 \\
    \midrule
    \multirow{3}{*}{\begin{tabular}[c]{@{}l@{}}ILI / GPT4TS\\OOD COVID, \(H=48\)\end{tabular}}
    & Frozen     & 0         & \textbf{0.9263} & \textbf{2.6351} \\
    & Last layer & 110{,}640 & 1.4808 & 6.8875 \\
    & LoRA       & 9{,}408   & 1.6591 & 9.3536 \\
    \midrule
    \multirow{3}{*}{\begin{tabular}[c]{@{}l@{}}WEATHER-5K / Autoformer\\OOD far NYC, \(H=96\)\end{tabular}}
    & Frozen     & 0       & \textbf{0.4344} & \textbf{0.4720} \\
    & Last layer & 780     & 0.7401 & 0.9791 \\
    & LoRA       & 8{,}192 & 0.9681 & 1.5443 \\
    \bottomrule
    \end{tabular}
    \end{adjustbox}
    \end{table}
    
    
    \begin{acknowledgements} 
    The work of GM was partially supported by NSF grants ATD 2319552 and DMS 2348640.

    \end{acknowledgements}
    
    \bibliography{bib}
    
    \newpage
    \onecolumn
    \title{Supplementary Material for \\ Model-Agnostic Online Certificate-Driven Calibration \\ for Time Series Forecasting Under Distribution Shift}
    \maketitle

    \appendix
    
    \startcontents[appendix]
    
    \printappendixtoc

    \clearpage
    \section{Technical Appendix}
    
    \subsection{Proof of Theorem~\ref{th:PAC_Bayes}}
    \label{sec:proof_pacbayes}
    \begin{proof}
    Let $S=(z_1,\ldots,z_m)$ with $z_i=(x_i,y_i)\sim \mathcal D$ i.i.d. and assume $\ell(h,z)\in[0,1]$.
    For any $h\in\mathcal H$, define the empirical and population risks
    \begin{equation}
    \begin{aligned}
    \widehat R_S(h)
    :=
    \frac{1}{m}\sum_{i=1}^{m}\ell(h,z_i),
    \qquad
    R_{\mathcal D}(h)
    :=
    \mathbb E_{z\sim\mathcal D}\!\big[\ell(h,z)\big].
    \end{aligned}
    \label{eq:app_pb_emp_pop_risk}
    \end{equation}
    For any posterior $\rho$ on $\mathcal H$, the Gibbs risks are
    \begin{equation}
    \begin{aligned}
    \widehat R_S(h_\rho)
    :=
    \mathbb E_{h\sim\rho}\!\big[\widehat R_S(h)\big],
    \qquad
    R_{\mathcal D}(h_\rho)
    :=
    \mathbb E_{h\sim\rho}\!\big[R_{\mathcal D}(h)\big].
    \end{aligned}
    \label{eq:app_pb_gibbs_risk}
    \end{equation}
    
    Following the standard McAllester PAC-Bayes argument for bounded losses \citep{mcallester1999pac}, one starts from an
    exponential moment bound that holds for each fixed $h$:
    \begin{equation}
    \begin{aligned}
    \mathbb E_{S\sim\mathcal D^m}
    \Bigg[
    \exp\!\Big(
    2m\big(R_{\mathcal D}(h)-\widehat R_S(h)\big)^2
    \Big)
    \Bigg]
    \le
    2\sqrt{m}.
    \end{aligned}
    \label{eq:app_pb_moment_bound}
    \end{equation}
    Define the prior-averaged exponential moment
    \begin{equation}
    \begin{aligned}
    U(S)
    :=
    \mathbb E_{h\sim\pi}
    \Bigg[
    \exp\!\Big(
    2m\big(R_{\mathcal D}(h)-\widehat R_S(h)\big)^2
    \Big)
    \Bigg].
    \end{aligned}
    \label{eq:app_pb_U_def}
    \end{equation}
    Taking expectation over $S$ and using Equation~\eqref{eq:app_pb_moment_bound} yields
    \begin{equation}
    \begin{aligned}
    \mathbb E_{S\sim\mathcal D^m}\!\big[U(S)\big]
    \le
    2\sqrt{m}.
    \end{aligned}
    \label{eq:app_pb_U_expect}
    \end{equation}
    By Markov's inequality, with probability at least $1-\delta$ over $S$,
    \begin{equation}
    \begin{aligned}
    U(S)
    \le
    \frac{2\sqrt{m}}{\delta}.
    \end{aligned}
    \label{eq:app_pb_markov}
    \end{equation}
    
    Next, we apply the Donsker\textendash Varadhan change-of-measure inequality \citep{donsker1975asymptotic}, which states
    that for any measurable $\phi$ on $\mathcal H$,
    \begin{equation}
    \begin{aligned}
    \mathbb E_{h\sim\rho}\!\big[\phi(h)\big]
    \le
    \mathrm{KL}(\rho\Vert\pi)
    +
    \ln\mathbb E_{h\sim\pi}\!\big[\exp(\phi(h))\big].
    \end{aligned}
    \label{eq:app_pb_dv}
    \end{equation}
    On the event Equation~\eqref{eq:app_pb_markov}, apply Equation~\eqref{eq:app_pb_dv} with
    \begin{equation}
    \begin{aligned}
    \phi_S(h)
    :=
    2m\big(R_{\mathcal D}(h)-\widehat R_S(h)\big)^2.
    \end{aligned}
    \label{eq:app_pb_phi}
    \end{equation}
    Then, for every posterior $\rho$,
    \begin{equation}
    \begin{aligned}
    \mathbb E_{h\sim\rho}\!\Big[
    2m\big(R_{\mathcal D}(h)-\widehat R_S(h)\big)^2
    \Big]
    &\le
    \mathrm{KL}(\rho\Vert\pi)
    +
    \ln U(S)\\
    &\le
    \mathrm{KL}(\rho\Vert\pi)
    +
    \ln\frac{2\sqrt{m}}{\delta}.
    \end{aligned}
    \label{eq:app_pb_second_moment}
    \end{equation}
    Dividing by $2m$ gives
    \begin{equation}
    \begin{aligned}
    \mathbb E_{h\sim\rho}\!\Big[
    \big(R_{\mathcal D}(h)-\widehat R_S(h)\big)^2
    \Big]
    \le
    \frac{\mathrm{KL}(\rho\Vert\pi)+\ln\frac{2\sqrt{m}}{\delta}}{2m}.
    \end{aligned}
    \label{eq:app_pb_quad}
    \end{equation}
    Finally, Jensen's inequality implies
    \begin{equation}
    \begin{aligned}
    R_{\mathcal D}(h_\rho)-\widehat R_S(h_\rho)
    &=
    \mathbb E_{h\sim\rho}\!\big[R_{\mathcal D}(h)-\widehat R_S(h)\big]\\
    &\le
    \sqrt{
    \mathbb E_{h\sim\rho}\!\Big[
    \big(R_{\mathcal D}(h)-\widehat R_S(h)\big)^2
    \Big]
    }.
    \end{aligned}
    \label{eq:app_pb_jensen}
    \end{equation}
    Combining Equation~\eqref{eq:app_pb_quad} and Equation~\eqref{eq:app_pb_jensen} yields
    \begin{equation}
    \begin{aligned}
    R_{\mathcal D}(h_{\rho})
    \le
    \widehat{R}_{S}(h_{\rho})
    +
    \sqrt{
    \frac{\mathrm{KL}(\rho \Vert \pi) + \ln \frac{2\sqrt{m}}{\delta}}{2m}
    },
    \end{aligned}
    \label{eq:app_pb_final}
    \end{equation}
    which is Equation~\eqref{eq:pacbayes-iid}, and $\mathrm{KL}(\rho \Vert \pi)
    \ :=\ 
    \sum_{h \in \mathcal{H}} \rho(h)\,
    \ln \frac{\rho(h)}{\pi(h)}.$
    \end{proof}

    \subsection{Proof of Theorem~\ref{th:PAC_DA}}
    \label{sec:proof_pac_da}
    \begin{proof}
    We combine a PAC-Bayes control of the source estimation error \citep{mcallester1999pac} with the Gibbs domain-adaptation
    decomposition of \citet{germain2013pac,germain2016pac}.

    For any domain $\mathcal{D}$ and any pair $h,h'\in\mathcal{H}$, define the pointwise error indicators
    \begin{equation}
    \begin{aligned}
    A(w,y)
    :=
    \mathbf{1}\{h(w)\neq y\},
    \qquad
    A'(w,y)
    :=
    \mathbf{1}\{h'(w)\neq y\},
    \end{aligned}
    \label{eq:app_err_indicators}
    \end{equation}
    and the pointwise disagreement indicator
    \begin{equation}
    \begin{aligned}
    D(w)
    :=
    \mathbf{1}\{h(w)\neq h'(w)\}.
    \end{aligned}
    \label{eq:app_dis_ind}
    \end{equation}
    For binary classification, the following identity holds for every $(w,y)$
    \begin{equation}
    \begin{aligned}
    A(w,y)+A'(w,y)
    =
    D(w)
    +
    2\,A(w,y)\,A'(w,y),
    \end{aligned}
    \label{eq:app_key_identity}
    \end{equation}
    because if $h(w)\neq h'(w)$ then exactly one of $\{h,h'\}$ is wrong, while if $h(w)=h'(w)$ then either both are correct
    or both are wrong.
    Taking expectation with respect to $(w,y)\sim\mathcal{D}$ gives
    \begin{equation}
    \begin{aligned}
    R_{\mathcal{D}}(h)+R_{\mathcal{D}}(h')
    =
    R_{\mathcal{D}}(h,h')
    +
    2\,e_{\mathcal{D}}(h,h'),
    \end{aligned}
    \label{eq:app_risk_dis_joint}
    \end{equation}

    where $R_{\mathcal{D}}(h,h')
    :=
    \mathbb{E}_{w\sim \mathcal{D}_{\mathcal{X}}}\!\big[d(h,h';w)\big]$ with $d(h,h';w)=\mathbf{1}\{h(w)\neq h'(w)\}$,
    and $e_{\mathcal{D}}(h,h')
    :=
    \mathbb{E}_{(w,y)\sim \mathcal{D}}
    \Big[
    \mathbf{1}\{h(w)\neq y\}\,\mathbf{1}\{h'(w)\neq y\}
    \Big]$.
    Now take expectation over $h,h'\sim\rho$ and use symmetry
    $\mathbb{E}_{h,h'\sim\rho}[R_{\mathcal{D}}(h)]
    =
    \mathbb{E}_{h,h'\sim\rho}[R_{\mathcal{D}}(h')]
    =
    R_{\mathcal{D}}(h_\rho)$ to obtain the Gibbs decomposition
    \begin{equation}
    \begin{aligned}
    R_{\mathcal{D}}(h_\rho)
    =
    \frac{1}{2}\,
    \mathbb{E}_{h,h'\sim\rho}\,R_{\mathcal{D}}(h,h')
    +
    \mathbb{E}_{h,h'\sim\rho}\,e_{\mathcal{D}}(h,h').
    \end{aligned}
    \label{eq:app_gibbs_decomp}
    \end{equation}
    Equation~\eqref{eq:app_gibbs_decomp} is the origin of the factor $\tfrac{1}{2}$ in front of the disagreement term.
    
    Apply Equation~\eqref{eq:app_gibbs_decomp} to $\mathcal{T}$ and $\mathcal{S}$ and subtract:
    \begin{equation}
    \begin{aligned}
    R_{\mathcal{T}}(h_\rho)
    -
    R_{\mathcal{S}}(h_\rho)
    &=
    \frac{1}{2}\,
    \mathbb{E}_{h,h'\sim\rho}
    \Big[
    R_{\mathcal{T}}(h,h')-R_{\mathcal{S}}(h,h')
    \Big]\\
    &\quad
    +
    \mathbb{E}_{h,h'\sim\rho}
    \Big[
    e_{\mathcal{T}}(h,h')-e_{\mathcal{S}}(h,h')
    \Big].
    \end{aligned}
    \label{eq:app_diff_ST}
    \end{equation}
    Taking absolute values and using the definitions of $R_{\mathcal{D}}(h,h')$ and let $\lambda_{\rho}
    :=
    \left|
    \mathbb{E}_{h,h' \sim \rho}\, e_{\mathcal{T}}(h,h')
    -
    \mathbb{E}_{h,h' \sim \rho}\, e_{\mathcal{S}}(h,h')
    \right|$, which yields
    \begin{equation}
    \begin{aligned}
    R_{\mathcal{T}}(h_\rho)
    \le
    R_{\mathcal{S}}(h_\rho)
    +
    \frac{1}{2}\,\mathrm{dis}_{\rho}(\mathcal{S},\mathcal{T})
    +
    \lambda_{\rho}.
    \end{aligned}
    \label{eq:app_da_decomp_bound}
    \end{equation}
    Finally, substitute the source PAC-Bayes bound Equation~\eqref{eq:app_pb_final} into Equation~\eqref{eq:app_da_decomp_bound} to get
    Equation~\eqref{eq:iid-pbda}.
    \end{proof}
    
    \subsection{Proof of Theorem~\ref{th:bound-martingale}}
    \label{sec:proof_bound_martingale}
    \begin{proof}
    We reuse the domain-adaptation decomposition from \citet{germain2013pac,germain2016pac} that was used in the proof of
    Theorem~\ref{th:PAC_DA}. 
    Thus, it suffices to upper-bound $R_{\mathcal{S}}(h_\rho)$ by $\widehat{R}_{S}(h_\rho)$ plus a dependence-aware
    complexity term.
    
    Define the (predictable) source risk and the empirical source risk for a fixed $h$ by
    \begin{equation}
    \begin{aligned}
    R_{\mathcal{S}}(h)
    :=
    \frac{1}{m}\sum_{t=1}^{m}\mathbb{E}\!\big[\ell(h,Z_t)\mid\mathcal{F}_{t-1}\big],
    \qquad
    \widehat{R}_{S}(h)
    :=
    \frac{1}{m}\sum_{t=1}^{m}\ell(h,Z_t).
    \end{aligned}
    \label{eq:app_source_risks_def}
    \end{equation}
    Let
    \begin{equation}
    \begin{aligned}
    X_t(h)
    :=
    \mathbb{E}\!\big[\ell(h,Z_t)\mid\mathcal{F}_{t-1}\big]
    -
    \ell(h,Z_t),
    \qquad
    M_m(h)
    :=
    \sum_{t=1}^{m}X_t(h).
    \end{aligned}
    \label{eq:app_mds_def_freed}
    \end{equation}
    Then $(M_t(h),\mathcal{F}_t)_{t=1}^m$ is a martingale and
    \begin{equation}
    \begin{aligned}
    R_{\mathcal{S}}(h)-\widehat{R}_{S}(h)
    =
    \frac{1}{m}\,M_m(h).
    \end{aligned}
    \label{eq:app_gap_mart_relation}
    \end{equation}
    Since $\ell(h,Z_t)\in[0,1]$, we have $|X_t(h)|\le B$ with $B=1$. A Freedman-type exponential supermartingale
    construction \citep{freedman1975tail,bercu2008} combined with the PAC-Bayes change-of-measure inequality
    \citep{donsker1975asymptotic,catoni2007} yields the following standard PAC-Bayes-Freedman bound for martingales
    \citep{seldin2012pac,seldin2011pac}: with probability at least $1-\delta$ over the draw of $S$, for all posteriors
    $\rho$ simultaneously,
    \begin{equation}
    \begin{aligned}
    R_{\mathcal{S}}(h_\rho)
    \le
    \widehat{R}_{S}(h_\rho)
    +
    \sqrt{
    \frac{
    2\,\mathbb{E}_{h\sim\rho}\!\big[V_m(h)\big]\,
    \pbterm
    }{m^2}
    }
    +
    \frac{B}{3m}\,\pbterm,
    \end{aligned}
    \label{eq:app_pac_bayes_freed_source}
    \end{equation}
    where $V_m(h)
    :=
    \sum_{t=1}^{m}\mathrm{Var}\!\big(\ell(h,Z_t)\mid\mathcal{F}_{t-1}\big)$ and $\pbterm=\mathrm{KL}(\rho\Vert\pi)+\ln\tfrac{C_0\sqrt{m}}{\delta}$.
    The additional $\ln(C_0\sqrt{m})$ term is the usual discretization or union-bound overhead appearing in time-uniform or
    self-normalized PAC-Bayes martingale bounds \citep{seldin2012pac}.
    
    Finally, combining Equation~\eqref{eq:app_da_decomp_bound} with Equation~\eqref{eq:app_pac_bayes_freed_source} gives
    Equation~\eqref{eq:martingale-da-bound} with $\Gamma_m^{\mathrm{Freed}}(\rho,\pi,\delta)$ as in Equation~\eqref{eq:gamma-freed}.
    \end{proof}
    
    \subsection{Proof of Theorem~\ref{th:bound-subgamma}}
    \label{sec:proof_bound_subgamma}

    \begin{proof}
    As in the proof of Theorem~\ref{th:bound-martingale}, we start from the PAC-Bayes domain-adaptation decomposition of
    \citet{germain2013pac,germain2016pac}. 
    It remains to upper-bound $R_{\mathcal{S}}(h_\rho)$ by $\widehat{R}_{S}(h_\rho)$ plus a martingale PAC-Bayes term under
    the sub-gamma condition:
    \begin{equation}
    \begin{aligned}
    \mathbb E\!\left[\exp\{\lambda X_t(h)\}\mid\mathcal F_{t-1}\right]
    &\le 
    \exp\!\left(\frac{\lambda^2 v_t(h)}{2(1-c(h)\lambda)}\right),
    \qquad \\
    &\forall \lambda\in\big(0,1/c(h)\big).
    \end{aligned}
    \label{eq:subgamma-condition2}
    \end{equation}

    Define the predictable source risk and its empirical estimate by
    \begin{equation}
    \begin{aligned}
    R_{\mathcal{S}}(h)
    :=
    \frac{1}{m}\sum_{t=1}^{m}\mathbb{E}\!\big[\ell(h,Z_t)\mid\mathcal{F}_{t-1}\big],
    \qquad
    \widehat{R}_{S}(h)
    :=
    \frac{1}{m}\sum_{t=1}^{m}\ell(h,Z_t).
    \end{aligned}
    \label{eq:app_source_risks_def_subgamma}
    \end{equation}
    Then
    \begin{equation}
    \begin{aligned}
    \widehat{R}_{S}(h)-R_{\mathcal{S}}(h)
    =
    \frac{1}{m}\sum_{t=1}^{m}X_t(h).
    \end{aligned}
    \label{eq:app_gap_subgamma}
    \end{equation}
    To bound $R_{\mathcal{S}}(h)-\widehat{R}_{S}(h)$, set
    \begin{equation}
    \begin{aligned}
    \widetilde{X}_t(h)
    :=
    -\,
    X_t(h)
    =
    \mathbb{E}\!\big[\ell(h,Z_t)\mid\mathcal{F}_{t-1}\big]
    -
    \ell(h,Z_t).
    \end{aligned}
    \label{eq:app_xt_tilde_def}
    \end{equation}
    Under the standard conditional sub-gamma assumption, $\widetilde{X}_t(h)$ is also conditionally sub-gamma with the same
    $(v_t(h),c(h))$ parameters \citep{bercu2008}. Iterating Equation~\eqref{eq:subgamma-condition2} yields an exponential supermartingale
    argument for each fixed $h$ \citep{bercu2008,howard2021time}: for any $\lambda\in(0,1/c(h))$,
    \begin{equation}
    \begin{aligned}
    \mathbb{E}\!\left[
    \exp\!\left(
    \lambda\sum_{t=1}^{m}\widetilde{X}_t(h)
    -
    \frac{\lambda^2}{2(1-c(h)\lambda)}\,V_m(h)
    \right)
    \right]
    \le 1.
    \end{aligned}
    \label{eq:app_subgamma_exp_mg}
    \end{equation}
    
    Combining Equation~\eqref{eq:app_subgamma_exp_mg} with the PAC-Bayes change-of-measure inequality
    \citep{donsker1975asymptotic,catoni2007} and the PAC-Bayes martingale machinery of \citet{seldin2012pac} gives the
    following uniform high-probability bound: with probability at least $1-\delta$ over $S$, for all posteriors $\rho$ and
    all $\lambda\in(0,1/\bar c)$,
    \begin{equation}
    \begin{aligned}
    \mathbb{E}_{h\sim\rho}\!\left[\sum_{t=1}^{m}\widetilde{X}_t(h)\right]
    &\le
    \frac{\lambda}{2\big(1-\bar c\,\lambda\big)}\,
    \mathbb{E}_{h\sim\rho}\!\big[V_m(h)\big]
    +
    \frac{1}{\lambda}\,
    \pbterm.
    \end{aligned}
    \label{eq:app_pac_bayes_subgamma_lambda}
    \end{equation}
    Optimizing the right-hand side over $\lambda\in(0,1/\bar c)$ yields the standard sub-gamma (Bernstein-type) form
    \citep{bercu2008,seldin2012pac}:
    \begin{equation}
    \begin{aligned}
    \mathbb{E}_{h\sim\rho}\!\left[\sum_{t=1}^{m}\widetilde{X}_t(h)\right]
    &\le
    \sqrt{
    2\,\mathbb{E}_{h\sim\rho}\!\big[V_m(h)\big]\,
    \pbterm
    }
    +
    \bar c\,\pbterm.
    \end{aligned}
    \label{eq:app_pac_bayes_subgamma_opt}
    \end{equation}
    Dividing by $m$ and using $R_{\mathcal{S}}(h)-\widehat{R}_{S}(h)=\frac{1}{m}\sum_{t=1}^m \widetilde{X}_t(h)$ gives
    \begin{equation}
    \begin{aligned}
    R_{\mathcal{S}}(h_\rho)
    &\le
    \widehat{R}_{S}(h_\rho)
    +
    \sqrt{
    \frac{
    2\,\mathbb{E}_{h\sim\rho}\!\big[V_m(h)\big]\,
    \pbterm
    }{m^2}
    }
    +
    \frac{\bar c}{m}\,\pbterm.
    \end{aligned}
    \label{eq:app_source_subgamma_bound}
    \end{equation}
    Finally, combining Equation~\eqref{eq:app_da_decomp_bound} with Equation~\eqref{eq:app_source_subgamma_bound} yields
    Equation~\eqref{eq:subgamma-da-bound} with $\Gamma_m^{\mathrm{sub}\,\gamma}(\rho,\pi,\delta)$ as in Equation~\eqref{eq:gamma-subgamma}.
    \end{proof}

    \subsection{Regression Instantiation and Surrogate Interpretation}\label{sec:reg_surrogate}
    
    \begin{theorem}[Regression-Proxy Martingale PAC-Bayes Domain Adaptation]\label{th:reg-proxy-da}
    Let $(Z_t,\mathcal F_t)_{t=1}^{m}$ be a filtered source process with $Z_t=(w_t,y_t)$, let $\mathcal T$ be a target
    distribution over $(w,y)$, let $\pi$ be a prior over $\mathcal H$, and let $\rho$ be any posterior over $\mathcal H$.
    For thresholds $\tau_y,\tau_d>0$, define the bounded forecasting proxy loss and bounded prediction-discrepancy
    surrogate as
    \begin{equation}
    \ell_{\tau_y}(h;(w,y))
    :=
    \min\!\left\{1,\frac{\|y-h(w)\|_2^2}{\tau_y^2}\right\},
    \qquad
    \widetilde d_{\tau_d}(h,h';w)
    :=
    \min\!\left\{1,\frac{\|h(w)-h'(w)\|_2^2}{\tau_d^2}\right\}.
    \label{eq:reg_proxy_loss_dis}
    \end{equation}
    Let $R_{\mathcal T,\tau_y}(h_\rho)$ be the target risk under $\ell_{\tau_y}$, and let
    $\widehat R_{S,\tau_y}(h_\rho)$ be its empirical source estimate. Define the posterior-weighted proxy discrepancy
    \begin{equation}
    \widetilde{\mathrm{dis}}_{\rho}(\mathcal S,\mathcal T)
    :=
    \left|
    \mathbb E_{h,h'\sim\rho}\widetilde R_{\mathcal S}(h,h')
    -
    \mathbb E_{h,h'\sim\rho}\widetilde R_{\mathcal T}(h,h')
    \right|,
    \label{eq:reg_proxy_dis}
    \end{equation}
    where
    \begin{equation}
    \widetilde R_{\mathcal S}(h,h')
    :=
    \frac{1}{m}\sum_{t=1}^{m}
    \mathbb E\!\left[\widetilde d_{\tau_d}(h,h';w_t)\mid\mathcal F_{t-1}\right],
    \qquad
    \widetilde R_{\mathcal T}(h,h')
    :=
    \mathbb E_{w\sim\mathcal T_{\mathcal X}}\!\left[\widetilde d_{\tau_d}(h,h';w)\right].
    \label{eq:reg_proxy_source_target_dis}
    \end{equation}
    Define the residual proxy mismatch
    \begin{equation}
    \begin{aligned}
    \widetilde\lambda_{\rho,\tau_y,\tau_d}
    :=
    \Bigg|
    &\left(
    R_{\mathcal T,\tau_y}(h_\rho)
    -\frac{1}{2}\mathbb E_{h,h'\sim\rho}\widetilde R_{\mathcal T}(h,h')
    \right)
    -
    \left(
    R_{\mathcal S,\tau_y}(h_\rho)
    -\frac{1}{2}\mathbb E_{h,h'\sim\rho}\widetilde R_{\mathcal S}(h,h')
    \right)
    \Bigg|.
    \end{aligned}
    \label{eq:reg_proxy_lambda}
    \end{equation}
    Then, with probability at least $1-\delta$ over the source stream, for every posterior $\rho$,
    \begin{equation}
    \begin{aligned}
    R_{\mathcal T,\tau_y}(h_\rho)
    \le
    \widehat R_{S,\tau_y}(h_\rho)
    +
    \Gamma_m^{\mathrm{Freed}}(\rho,\pi,\delta)
    +
    \frac{1}{2}\widetilde{\mathrm{dis}}_{\rho}(\mathcal S,\mathcal T)
    +
    \widetilde\lambda_{\rho,\tau_y,\tau_d}.
    \end{aligned}
    \label{eq:reg_proxy_bound}
    \end{equation}
    Here, $\Gamma_m^{\mathrm{Freed}}(\rho,\pi,\delta)$ is the Freedman-style PAC-Bayes correction applied to the bounded
    proxy loss $\ell_{\tau_y}$.
    \end{theorem}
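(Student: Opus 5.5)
The proof splits into a deterministic decomposition and a probabilistic source-side bound. The point of the definition in Equation~\eqref{eq:reg_proxy_lambda} is that the residual $\widetilde\lambda_{\rho,\tau_y,\tau_d}$ is built to absorb exactly what the binary identity in Equation~\eqref{eq:app_key_identity} supplied in the proof of Theorem~\ref{th:PAC_DA}. That identity has no analogue for a truncated squared loss, so we do not try to reproduce it. Instead we write, for any posterior $\rho$,
\begin{equation*}
R_{\mathcal T,\tau_y}(h_\rho)-R_{\mathcal S,\tau_y}(h_\rho)
=\tfrac12\,\mathbb E_{h,h'\sim\rho}\big[\widetilde R_{\mathcal T}(h,h')-\widetilde R_{\mathcal S}(h,h')\big]
+\Delta_\rho,
\end{equation*}
where $\Delta_\rho$ is the difference of the two bracketed quantities in Equation~\eqref{eq:reg_proxy_lambda}. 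Here $R_{\mathcal S,\tau_y}(h_\rho):=\mathbb E_{h\sim\rho}\frac1m\sum_t\mathbb E[\ell_{\tau_y}(h;Z_t)\mid\mathcal F_{t-1}]$ is the predictable source proxy risk, as in Equation~\eqref{eq:app_source_risks_def}. This identity is pure add-and-subtract algebra. Taking absolute values, and using $|\Delta_\rho|=\widetilde\lambda_{\rho,\tau_y,\tau_d}$ together with the definition of $\widetilde{\mathrm{dis}}_\rho$ in Equation~\eqref{eq:reg_proxy_dis}, gives the deterministic inequality
\begin{equation*}
R_{\mathcal T,\tau_y}(h_\rho)\le R_{\mathcal S,\tau_y}(h_\rho)+\tfrac12\widetilde{\mathrm{dis}}_\rho(\mathcal S,\mathcal T)+\widetilde\lambda_{\rho,\tau_y,\tau_d}.
\end{equation*}
This inequality holds for every $\rho$ simultaneously and on every sample path, so it costs no probability.

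For the probabilistic part, $\ell_{\tau_y}\in[0,1]$ by construction, so $Z_t\mapsto\ell_{\tau_y}(h;Z_t)$ satisfies the bounded-loss hypothesis of Theorem~\ref{th:bound-martingale}. We take the martingale differences $X_t(h)=\mathbb E[\ell_{\tau_y}(h;Z_t)\mid\mathcal F_{t-1}]-\ell_{\tau_y}(h;Z_t)$, which satisfy $|X_t(h)|\le1$. We then invoke the source-side PAC-Bayes--Freedman inequality Equation~\eqref{eq:app_pac_bayes_freed_source} from the proof of Theorem~\ref{th:bound-martingale}, with $\ell$ replaced by $\ell_{\tau_y}$ and $V_m(h)$ taken as the cumulative conditional variance of $\ell_{\tau_y}(h;Z_t)$. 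This yields, with probability at least $1-\delta$ and uniformly over $\rho$,
\begin{equation*}
R_{\mathcal S,\tau_y}(h_\rho)\le\widehat R_{S,\tau_y}(h_\rho)+\Gamma_m^{\mathrm{Freed}}(\rho,\pi,\delta).
\end{equation*}
Substituting this into the deterministic inequality, on the same event, gives Equation~\eqref{eq:reg_proxy_bound}.

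The main point needing care is consistency of the notions of source risk, not concentration. The source risk appearing in $\widetilde\lambda$ must be the same predictable average that the martingale bound controls, and $\widetilde R_{\mathcal S}$ is defined through conditional expectations, as in Equation~\eqref{eq:reg_proxy_source_target_dis}. Both quantities are fixed functionals of $\rho$ and the process, so no additional concentration is needed for the discrepancy term. I would state explicitly that $R_{\mathcal S,\tau_y}$ is defined predictably, so that the telescoping identity is exact. I would also remark that, unlike the binary case, the bound is only meaningful to the extent that $\widetilde\lambda$ is small, because the decomposition is a definition rather than a structural identity.
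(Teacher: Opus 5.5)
Your proposal is correct and matches the paper's proof: the paper likewise applies the PAC-Bayes--Freedman source bound to the bounded proxy loss $\ell_{\tau_y}$, then uses the add-and-subtract decomposition $R_{\mathcal T,\tau_y}(h_\rho)=R_{\mathcal S,\tau_y}(h_\rho)+(B_{\mathcal T}-B_{\mathcal S})+(\Delta_{\mathcal T}-\Delta_{\mathcal S})$ with $B_{\mathcal D}=\tfrac12\mathbb E_{h,h'\sim\rho}\widetilde R_{\mathcal D}(h,h')$, and bounds the two differences by $\tfrac12\widetilde{\mathrm{dis}}_\rho$ and $\widetilde\lambda_{\rho,\tau_y,\tau_d}$. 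Your remark that $R_{\mathcal S,\tau_y}$ must be the predictable average, so that the decomposition and the martingale bound refer to the same quantity, makes explicit something the paper leaves implicit.
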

    
    \begin{proof}
    Since $\ell_{\tau_y}\in[0,1]$, the centered source increments are bounded martingale differences. Applying the
    PAC-Bayes--Freedman source-risk bound to $\ell_{\tau_y}$ gives, uniformly over posteriors,
    \begin{equation}
    R_{\mathcal S,\tau_y}(h_\rho)
    \le
    \widehat R_{S,\tau_y}(h_\rho)
    +
    \Gamma_m^{\mathrm{Freed}}(\rho,\pi,\delta).
    \label{eq:reg_proxy_source_bound}
    \end{equation}
    For $\mathcal D\in\{\mathcal S,\mathcal T\}$, define
    \[
    B_{\mathcal D}(\rho)
    :=
    \frac{1}{2}\mathbb E_{h,h'\sim\rho}\widetilde R_{\mathcal D}(h,h'),
    \qquad
    \Delta_{\mathcal D}(\rho)
    :=
    R_{\mathcal D,\tau_y}(h_\rho)-B_{\mathcal D}(\rho).
    \]
    Then
    \begin{equation}
    \begin{aligned}
    R_{\mathcal T,\tau_y}(h_\rho)
    &=
    R_{\mathcal S,\tau_y}(h_\rho)
    +
    \big(B_{\mathcal T}(\rho)-B_{\mathcal S}(\rho)\big)
    +
    \big(\Delta_{\mathcal T}(\rho)-\Delta_{\mathcal S}(\rho)\big) \\
    &\le
    R_{\mathcal S,\tau_y}(h_\rho)
    +
    \frac{1}{2}\widetilde{\mathrm{dis}}_{\rho}(\mathcal S,\mathcal T)
    +
    \widetilde\lambda_{\rho,\tau_y,\tau_d}.
    \end{aligned}
    \label{eq:reg_proxy_decomp}
    \end{equation}
    Combining Equations~\eqref{eq:reg_proxy_source_bound} and~\eqref{eq:reg_proxy_decomp} proves the result.
    \end{proof}
    
    Theorem~\ref{th:reg-proxy-da} certifies the bounded proxy risk $R_{\mathcal T,\tau_y}(h_\rho)$, not raw MAE or MSE.
    Since $\widetilde\lambda_{\rho,\tau_y,\tau_d}$ depends on target labels and is unavailable before prediction, OMPB
    optimizes the input-identifiable part of the bound:
    \[
    \widehat R_{S,\tau_y}(h_\rho)
    +
    \Gamma_m^{\mathrm{Freed}}(\rho,\pi,\delta)
    +
    \frac{1}{2}\widetilde{\mathrm{dis}}_{\rho}(\mathcal S,\mathcal T).
    \]
    This is the population analogue of the online certificate in Equation~\eqref{eq:pb-online}; the realized MAE and MSE
    improvements reported in the experiments are empirical validation of the calibration procedure rather than a direct
    consequence of a raw-regression-loss upper bound.
    
    \begin{corollary}[Tail-Risk Certificate for Forecasting]\label{cor:tail_risk}
    Fix a threshold $\tau>0$ and define the bounded truncated-square loss
    \begin{equation}
    \ell_\tau\!\big(h;(w,y)\big)
    :=
    \min\!\left\{1,\ \frac{\|y-h(w)\|_2^2}{\tau^2}\right\}
    \in[0,1].
    \label{eq:trunc_sq_loss}
    \end{equation}
    Then, for any predictor $h$ and any target distribution $\mathcal{T}$ over $(w,y)$,
    \begin{equation}
    \mathbf{1}\!\left\{\|y-h(w)\|_2\ge \tau\right\}
    \ \le\
    \ell_\tau\!\big(h;(w,y)\big),
    \label{eq:indicator_upper_trunc}
    \end{equation}
    which implies
    \begin{equation}
    \mathbb{P}_{(w,y)\sim\mathcal{T}}\!\left(\|y-h(w)\|_2\ge \tau\right)
    \ \le\
    R_{\mathcal{T},\tau}(h),
    \qquad
    R_{\mathcal{T},\tau}(h):=\mathbb{E}_{(w,y)\sim\mathcal{T}}\!\left[\ell_\tau\!\big(h;(w,y)\big)\right].
    \label{eq:tail_prob_upper}
    \end{equation}
    Consequently, any finite-sample upper bound or online certificate on the bounded proxy risk
    $R_{\mathcal{T},\tau}(h_\rho)$ yields an upper bound on the catastrophic-error probability
    $\mathbb{P}_{\mathcal{T}}(\|y-h_\rho(w)\|_2\ge \tau)$.
    
    Moreover, since $\ell_\tau\in[0,1]$, the centered increments
    \begin{equation}
    \widetilde X_t(h)
    :=
    \mathbb{E}\!\left[\ell_\tau(h,Z_t)\mid\mathcal{F}_{t-1}\right]-\ell_\tau(h,Z_t)
    \in[-1,1]
    \label{eq:centered_increments_bounded}
    \end{equation}
    admit a Bernstein/Hoeffding-type moment generating function control, providing a concrete instantiation of the
    conditional sub-gamma assumption used in Theorem~\ref{th:bound-subgamma}.
    \end{corollary}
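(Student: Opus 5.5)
The argument is a pointwise comparison followed by integration, and then a check that bounded increments give the sub-gamma condition.

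\textbf{Pointwise inequality.} Fix $h$ and $(w,y)$, and split into two cases according to the event $E:=\{\|y-h(w)\|_2\ge\tau\}$.

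If $E$ holds, then $\|y-h(w)\|_2^2/\tau^2\ge 1$. Hence the minimum in the definition of $\ell_\tau$ equals $1$, which matches the indicator.

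If $E$ fails, the indicator is $0$. The proxy loss $\ell_\tau$ is nonnegative, being a minimum of nonnegative quantities, so the inequality still holds.

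\textbf{Taking expectations.} Take expectations under $(w,y)\sim\mathcal T$. Monotonicity of expectation gives the tail bound $\mathbb{P}_{\mathcal T}(E)\le R_{\mathcal T,\tau}(h)$, where measurability is inherited from the measurability of $h$.

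For the Gibbs predictor $h_\rho$, the tail probability should be read as $\mathbb E_{h\sim\rho}\mathbb P_{\mathcal T}(\|y-h(w)\|_2\ge\tau)$. Integrating the pointwise bound over $h\sim\rho$ (Fubini/Tonelli, since everything is nonnegative and bounded) then gives this quantity $\le R_{\mathcal T,\tau}(h_\rho)$.

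\textbf{Transferring certificates.} Chaining inequalities now transfers any upper bound on $R_{\mathcal T,\tau}(h_\rho)$ to the catastrophic-error probability. In particular, Theorem~\ref{th:reg-proxy-da} with $\tau_y=\tau$ holds with probability at least $1-\delta$ uniformly over $\rho$, and on that same event it bounds the tail probability.

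\textbf{Sub-gamma instantiation.} Since $\ell_\tau\in[0,1]$, both $\ell_\tau(h,Z_t)$ and its conditional mean lie in $[0,1]$. Their difference $\widetilde X_t(h)$ is therefore in $[-1,1]$ and has conditional mean zero.

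For the moment generating function, I would use the standard Bernstein inequality for centered variables bounded above by $b=1$. For $0<\lambda<3$,
\[
\mathbb E[e^{\lambda\widetilde X_t}\mid\mathcal F_{t-1}]\le\exp\!\Big(\frac{\lambda^2 v_t}{2(1-\lambda/3)}\Big),
\qquad v_t=\mathrm{Var}(\ell_\tau(h,Z_t)\mid\mathcal F_{t-1}).
\]
This uses $e^{x}-1-x\le x^2\sum_{k\ge2}\lambda^{k-2}/k!\cdots$ together with $k!\ge 2\cdot3^{k-2}$, and finally $1+u\le e^u$. It is exactly the condition of Theorem~\ref{th:bound-subgamma} with $c(h)=1/3$ and a predictable $v_t(h)$.

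As a cruder alternative, Hoeffding's lemma gives $v_t=1/4$ and $c=0$, i.e.\ sub-Gaussianity, which is also an admissible special case.

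\textbf{Main obstacle.} The only delicate point is that Theorem~\ref{th:bound-subgamma} applies the condition to $X_t=\ell-\mathbb E[\ell\mid\mathcal F_{t-1}]$, whereas the corollary states it for $\widetilde X_t=-X_t$. This is harmless because the Bernstein argument only uses one-sided boundedness ($\le1$), which both $X_t$ and $-X_t$ satisfy. The variance $v_t$ is predictable by construction.
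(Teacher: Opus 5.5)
Your proposal is correct and follows the paper's route. The paper gives no separate proof; it justifies the corollary inline through the pointwise bound $\mathbf{1}\{\|y-h(w)\|_2\ge\tau\}\le\ell_\tau$, integration under $\mathcal{T}$, and boundedness of $\widetilde X_t(h)$ yielding Bernstein/Hoeffding MGF control. You add details the paper leaves implicit: the explicit constants $c(h)=1/3$ with the conditional variance (or $v_t=1/4$, $c=0$ via Hoeffding on the conditional range of length one), the Gibbs reading via integration over $\rho$, and the observation that boundedness covers both $X_t$ and $-X_t$. One small inconsistency: the term-by-term expansion you sketch uses $|\widetilde X_t|\le 1$ rather than one-sided boundedness, which is harmless here since both signs lie in $[-1,1]$.
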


    \paragraph{What the certificate controls:}
    The online certificate $\widehat{\mathrm{PB}}_{\gamma}(t)$ is derived for the bounded proxy quantities that appear in
    Equation~\eqref{eq:pb-online}, rather than for the raw regression metrics used for reporting. In particular, the mismatch
    component depends on the bounded disagreement score $d(\cdot,\cdot;w)\in[0,1]$ in Equation~\eqref{eq:d_reg_code}. As a
    result, we interpret $\widehat{\mathrm{PB}}_{\gamma}(t)$ as a PAC-Bayes inspired certificate for monitoring and as a
    surrogate regularizer for head calibration, while forecasting accuracy is evaluated separately using MAE and MSE.
    Corollary~\ref{cor:tail_risk} further shows that when the proxy loss is chosen as the bounded truncated-square loss
    $\ell_\tau$ in Equation~\eqref{eq:trunc_sq_loss}, any certificate on the corresponding proxy risk also upper-bounds the
    probability of threshold exceedance as in Equation~\eqref{eq:tail_prob_upper}.
    
    \paragraph{Why the proxy is informative for regression:}
    A lightweight tail interpretation is given by Corollary~\ref{cor:tail_risk}, which connects a certificate on the bounded
    proxy risk $R_{\mathcal{T},\tau}$ to an upper bound on the probability of large forecast errors in
    Equation~\eqref{eq:tail_prob_upper}. We use this relation as motivation and do not claim a tight equivalence between the
    proxy certificate and MAE/MSE.
    
    The proxy is less informative when $\tau$ is so small that most errors saturate at one, or so large that moderate errors
    barely affect the loss. Clipping also loses distinctions among errors far beyond the threshold. In addition, if the
    posterior collapses or is badly miscalibrated, posterior disagreement may no longer reflect epistemic uncertainty; and
    if the target label mechanism changes strongly while inducing little input-side disagreement, the residual proxy
    mismatch in Theorem~\ref{th:reg-proxy-da} can dominate but cannot be identified from unlabeled target windows.
    
    \paragraph{Why the sub-gamma assumption is reasonable for forecasting losses:}
    The sub-gamma condition in Theorem~\ref{th:bound-subgamma} is a conditional tail assumption on the centered loss
    increments
    \begin{equation}
    X_t(h)\ :=\ \ell(h,Z_t)\ -\ \mathbb{E}\!\big[\ell(h,Z_t)\mid\mathcal{F}_{t-1}\big].
    \label{eq:centered_increment_app}
    \end{equation}
    In forecasting, common losses such as squared error, absolute error, and negative log-likelihood are unbounded. Under
    many practical noise models, residuals become approximately light-tailed after normalization, which makes such centered
    loss increments amenable to sub-gamma type control. For instance, if the conditional residual $r_t=y_t-\widehat{y}_t$ is
    approximately conditionally sub-Gaussian, then functions such as $r_t^2$ are
    sub-exponential and their centered versions admit Bernstein-type moment generating function bounds, which can be
    expressed in the sub-gamma form used in Equation~\eqref{eq:subgamma-condition2}. This yields a variance-adaptive correction term
    that is robust to heteroskedasticity and occasional spikes.
    
    In implementation, we instantiate the predictable variance proxy and scale parameters using source-dependent
    statistics. Concretely, we maintain a predictable scale estimate and use it to construct $v_t(h)$ and $c(h)$ that enter $\Gamma_m^{\mathrm{sub}\,\gamma}$, which makes
    the certificate sensitive to the observed loss scale and more stable than clipping-based alternatives. This design does
    not require explicit estimation of mixing coefficients and is compatible with streaming deployment where the target
    distribution may drift over time.

    \subsection{Dataset Detail}
    \label{sec:dataset_detail}

    \begin{table}[htbp]
    \centering
    \caption{Summary of the datasets used in our experiments. Train and test sizes report the number of sliding-window samples.}
    \label{tab:datasets}
    \renewcommand{\arraystretch}{0.8}
    \begin{tabular}{cccccc}
    \toprule
    Dataset & Features & Frequency & Input Len & Train Samples & Test Samples \\
    \midrule
    ETTh    & 7  & Hourly & 96 & 13{,}745 & 3{,}293 \\
    ILI     & 11 & Weekly & 24 & 443      & 108      \\
    WEATHER-5K & 6  & Hourly & 96 & 78{,}764 & 8{,}646$\times 2$ \\
    \bottomrule
    \end{tabular}
    \end{table}

    \paragraph{ETTh~\citep{ETTh}:}
    We use 80\% of ETTh1 as the training set and the remaining 20\% of ETTh1 as the ID test set. To evaluate
    covariate shift, we sample the same number of windows from ETTh2 as the OOD test set.
    
    The dataset is available at: \url{https://github.com/zhouhaoyi/ETDataset}.
    
    \paragraph{ILI~\citep{ILI}:}
    Due to the limited amount of available ILI data, we use 80\% of the pre-COVID period for training and the remaining
    20\% of the pre-COVID period as the ID test set. We then sample the same number of windows from the COVID
    period as the OOD test set. This split preserves sufficient evaluation coverage for medium-term
    forecasting up to $H{=}72$.
    
    The dataset is available at: \url{https://gis.cdc.gov/grasp/fluview/fluportaldashboard.html}
    
    \paragraph{WEATHER-5K~\citep{han2024weather5k}:}
    Since WEATHER-5K is larger, we use 90\% of the Miami subset for training and the remaining 10\% as the
    ID test set. For OOD evaluation, we sample the same number of windows from Atlanta
    and from New York City. This choice provides ample training data for the backbones and avoids the severe degradation
    that can arise from insufficient source training when transferring to a new location.
    
    The dataset is available at: \url{https://github.com/taohan10200/WEATHER-5K}. We selected the following stations:
    \begin{itemize}[leftmargin=*]
        \item Miami, Florida, USA (72202012839.csv)
        \item Atlanta, Georgia, USA (72219013874.csv)
        \item New York City, New York, USA (72503014732.csv)
    \end{itemize}
    
    Table~\ref{tab:datasets} summarizes each dataset, including the number of features, recording frequency, input sequence
    length, and the numbers of training and testing samples.
    
    \subsection{Distribution Shift Verification}
    \subsubsection{ETTh}
    \label{sec:ETTh_shift}
    Table~\ref{tab:covariate_shift_ett} quantifies the covariate shift between ETTh1 and ETTh2 and confirms that the two
    stations exhibit a substantial distribution mismatch. The mean Kolmogorov--Smirnov (KS) statistic of 0.751 indicates large
    differences in marginal feature distributions, while the mean Wasserstein distance of 15.20 reflects a pronounced shift
    in feature values. The mean Jensen--Shannon (JS) divergence of 0.455 and the radial basis function maximum mean discrepancy (MMD)
    of 0.867 further indicate strong global distributional separation between ETTh1 and ETTh2, supporting the use of this
    split as a covariate-shift benchmark. Figure~\ref{fig:ETTh_shift} has the density visualization for each feature and PCA between ETTh1 and ETTh2.
    \begin{figure}
        \centering
        \includegraphics[width=\linewidth]{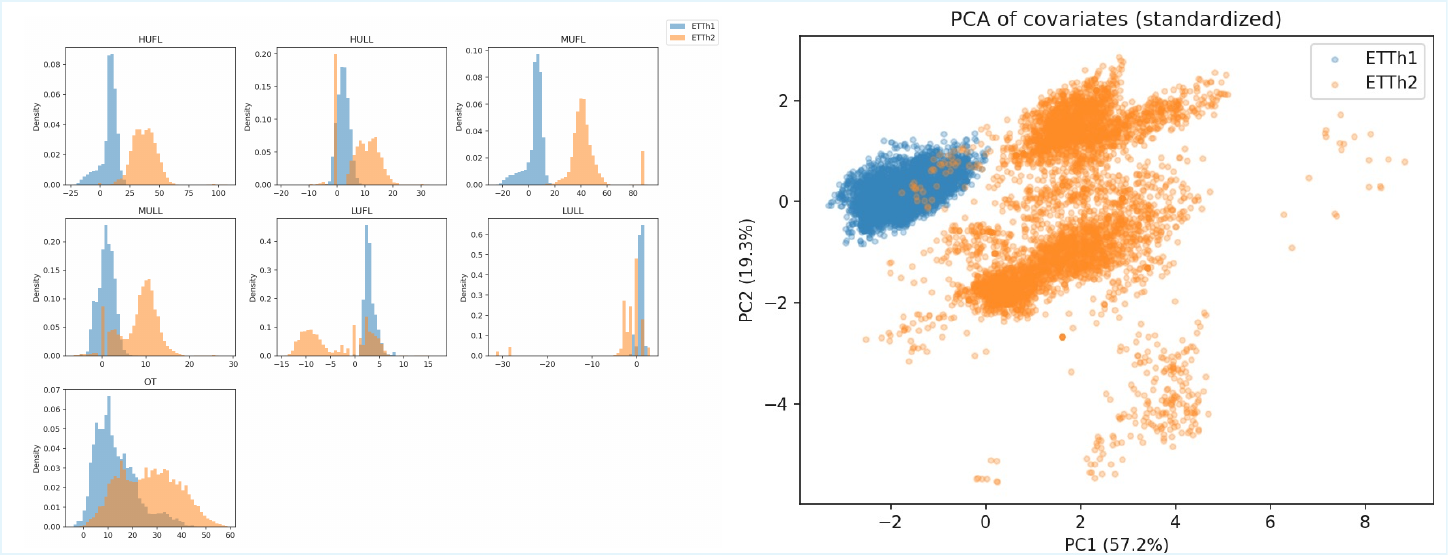}
        \caption{The left plot visualizes the marginal density distributions of each feature for ETTh1 and ETTh2, while the right plot shows a PCA projection of ETTh1 and ETTh2, illustrating the distributional separation between the two datasets.}
        \label{fig:ETTh_shift}
    \end{figure}
    
    \begin{table}[ht]
    \centering
    \caption{Pairwise covariate-shift metrics (ETTh dataset).}
    \renewcommand{\arraystretch}{0.8}
    \label{tab:covariate_shift_ett}
    \begin{tabular}{ccccc}
    \toprule
    Pair & KS (mean) & Wasserstein (mean) & JS (mean) & MMD (RBF) \\
    \midrule
    ETTh1 vs ETTh2 & 0.751 & 15.20 & 0.455 & 0.867 \\
    \bottomrule
    \end{tabular}
    \end{table}

    \subsubsection{ILI}
    \label{sec:ILI_shift}
    Table~\ref{tab:covariate_shift_ili} quantifies the distribution shift between the Pre-COVID training split and the COVID
    test split in the ILI dataset and shows a substantial mismatch. The mean Kolmogorov--Smirnov (KS) statistic of 0.644
    indicates large differences in marginal feature distributions, while the mean Wasserstein distance of 180{,}774
    suggests a pronounced shift in feature scale. The mean Jensen–Shannon (JS) divergence of 0.352 and the radial basis
    function maximum mean discrepancy (MMD) of 1.315 further confirm strong global separation between the two periods, supporting
    the use of this split as a severe distribution-shift benchmark. Figure~\ref{fig:ILI_shift} has the density visualization for each feature and PCA between Pre-COVID and COVID.
    \begin{figure}
        \centering
        \includegraphics[width=\linewidth]{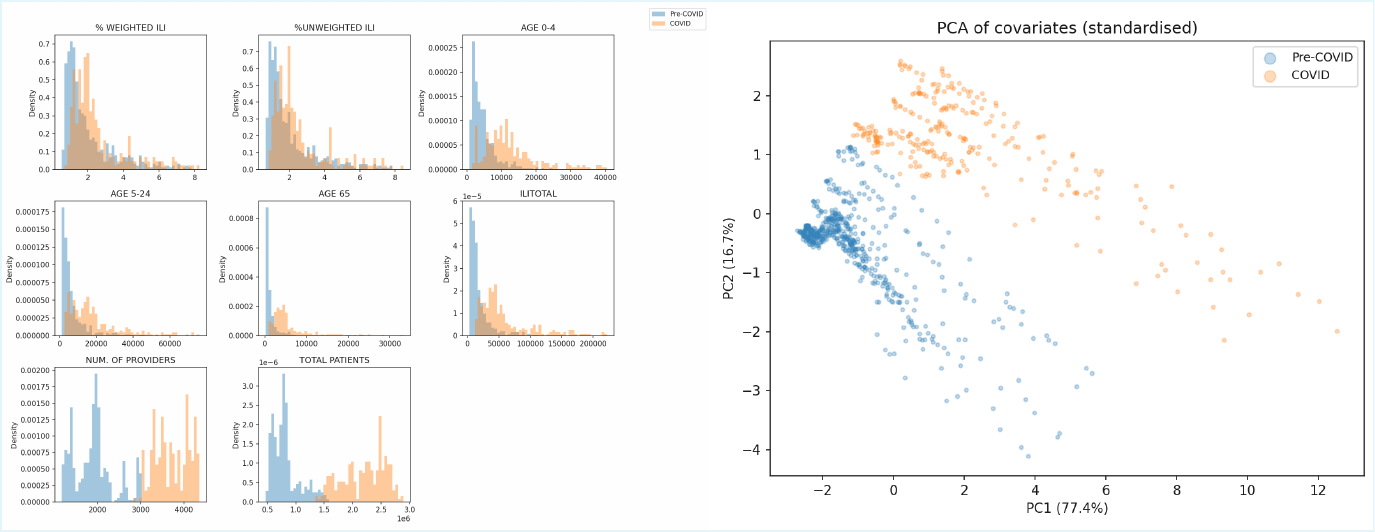}
        \caption{The left plot visualizes the marginal density distributions of each feature for pre-COVID and COVID in ILI dataset, while the right plot shows a PCA projection of pre-COVID and COVID, illustrating the distributional separation between the two datasets.}
        \label{fig:ILI_shift}
    \end{figure}
    \begin{table}[ht]
    \centering
    \caption{Pairwise covariate-shift metrics (ILI dataset).}
    \renewcommand{\arraystretch}{0.8}
    \label{tab:covariate_shift_ili}
    \begin{tabular}{ccccc}
    \toprule
    Pair & KS (mean) & Wasserstein (mean) & JS (mean) & MMD (RBF) \\
    \midrule
    Pre-COVID vs COVID & 0.644 & 180{,}774 & 0.352 & 1.315 \\
    \bottomrule
    \end{tabular}
    \end{table}

    \subsubsection{WEATHER-5K}
    \label{sec:WEATHER_shift}
    
    \begin{figure}[h]
        \centering
        \includegraphics[width=\linewidth]{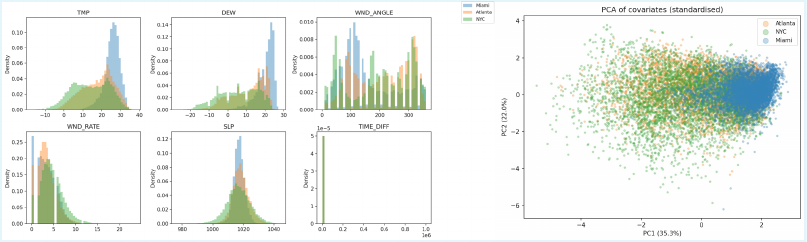}
        \caption{The left plot visualizes the marginal density distributions of each feature for the WEATHER-5K dataset, comparing the
    training location in Miami with the out-of-distribution test locations in Atlanta and New York City (NYC).
    The right plot shows a PCA projection of the three locations.}
        \label{fig:WEATHER_shift}
    \end{figure}
    \begin{table}[h]
    \centering
    \caption{Pairwise covariate-shift metrics (WEATHER-5K dataset).}
    \renewcommand{\arraystretch}{0.8}
    \label{tab:covariate_shift_weather_locations}
    \begin{tabular}{ccccc}
    \toprule
    Pair & KS (mean) & Wasserstein (mean) & JS (mean) & MMD (RBF) \\
    \midrule
    Miami vs Atlanta    & 0.363 & 845.15  & 0.080 & 0.088 \\
    Miami vs NYC        & 0.422 & 845.93  & 0.139 & 0.082 \\
    Atlanta vs NYC      & 0.255 & 4.86    & 0.116 & 0.008 \\
    \bottomrule
    \end{tabular}
    \end{table}
    
    Table~\ref{tab:covariate_shift_weather_locations} quantifies covariate shift between weather stations and shows clear distributional differences across locations. Miami versus Atlanta exhibits a moderate mismatch, with mean Kolmogorov--Smirnov (KS) statistic 0.363, mean Wasserstein distance 845.15, mean Jensen–Shannon (JS) divergence 0.080, and radial basis function  maximum mean discrepancy (MMD) 0.088. Miami versus New York City (NYC) shows the strongest shift across all metrics,
    with mean KS statistic 0.422, mean JS divergence 0.139, and radial basis function MMD 0.082, together with a large mean Wasserstein distance of 845.93. In contrast, Atlanta versus NYC is comparatively closer in this setup, with mean KS statistic 0.255, mean Wasserstein distance
    4.86, mean JS divergence 0.116, and radial basis function MMD 0.008. These results support that the selected stations induce measurable distribution shifts, which we further visualize in Figure~\ref{fig:WEATHER_shift} using feature density plots and a PCA projection.
    
    \subsection{Backbone Training Detail}
    \label{sec:backbone_details}
    
    \begin{table}[htbp]
    \centering
    
    \caption{Training hyperparameters for the three backbone models.}
    \begin{adjustbox}{max width=\textwidth}
    \renewcommand{\arraystretch}{0.8}
    \label{tab:backbone-params}
    \begin{tabular}{ccccc}
    
    \toprule
    & Parameter & TCN & Autoformer & GPT4TS \\
    \midrule
    \multirow{3}{*}{\rotatebox{90}{\textit{Training}}}
    & Optimizer          & Adam     & Adam      & Adam \\
    & Learning rate      & $10^{-3}$ & $10^{-3}$ & $10^{-4}$ \\
    & Loss function      & MSE      & MSE       & MSE \\
    \midrule
    \multirow{15}{*}{\rotatebox{90}{\textit{Architecture}}}
    & Hidden widths        & [32, 32, 32] & --   & -- \\
    & Kernel size          & 3            & --   & -- \\
    & $d_{\text{model}}$   & --           & 64   & 768 \\
    & $d_{\text{ff}}$      & --           & 256  & -- \\
    & Attention heads      & --           & 4    & -- \\
    & Encoder layers       & --           & 2    & -- \\
    & Decoder layers       & --           & 1    & -- \\
    & Moving avg.\ window  & --           & 25   & -- \\
    & Auto-corr.\ factor   & --           & 3.0  & -- \\
    & GPT-2 layers         & --           & --   & 2 \\
    & Patch size           & --           & --   & 16 \\
    & Stride               & --           & --   & 8 \\
    & Dropout              & 0.1          & 0.1  & -- \\
    & Pretrained weights   & --           & --   & GPT-2 \\
    & Fine-tuned params    & All          & All  & LN + Pos.\ Emb. \\
    \bottomrule
    \end{tabular}
    \end{adjustbox}
    \end{table}
    \paragraph{TCN:}
    We implement the Temporal Convolutional Network following \citet{TCN}, using residual blocks with causal, dilated
    one-dimensional convolutions. The backbone has three layers with 32 hidden channels, kernel size 3, and dilation factors
    1, 2, and 4, yielding a receptive field that covers the full input window. Each block uses weight normalization, ReLU
    activations, and dropout 0.1. A linear projection maps the final representation to the prediction horizon. We train all
    parameters from scratch with Adam, learning rate $10^{-3}$, and MSE.
    
    \paragraph{Autoformer:}
    We use the series-decomposition Transformer of \citet{autoformer}. The model has two encoder layers and one decoder
    layer, each combining auto-correlation attention with progressive trend and seasonal decomposition. We set the model
    dimension to 64 with four attention heads and feed-forward dimension 256. The moving-average kernel size for
    decomposition is 25, the auto-correlation top-$k$ factor is 3.0, and dropout is 0.1. The model is trained
    from scratch with Adam, learning rate $10^{-3}$, and MSE.
    
    \paragraph{GPT4TS:}
    We follow \citet{GPT4TS} and adapt a pretrained GPT-2 model for time-series forecasting via patching and lightweight
    fine-tuning. The input is tokenized into patches of size 16 with stride 8, projected to the GPT-2 embedding dimension
    768, and processed by the two GPT-2 layers. To preserve pretrained representations, we fine-tune only the
    LayerNorm and positional embedding parameters and keep the remaining GPT-2 weights frozen. A linear output layer maps
    the patch representations to the forecast horizon. We train with Adam, learning rate $10^{-4}$, and MSE.
    
    The above details are also summarized in Table~\ref{tab:backbone-params}.
    
    \paragraph{Epoch Sensitivity:}
    Since we evaluate models on distributions that differ from the training distribution, early stopping based only on an
    ID validation set may be suboptimal. To select a robust training epoch for each backbone, we run an epoch sensitivity
    study on each dataset with three training budgets, 5, 20, and 100 epochs. The results are reported in
    Tables~\ref{tab:epoch_sensitivity_etth}, \ref{tab:epoch_sensitivity_ili}, and~\ref{tab:epoch_sensitivity_weather}.
    Table~\ref{tab:epoch_selection_summary} summarizes how often each epoch attains the best result across horizons and
    splits, and marks the final selected epoch for each backbone in bold. We select the epoch with the highest count for
    each backbone and dataset, breaking ties by the overall total count. For example, for GPT4TS on ETTh, epochs 5 and 20
    each achieve six best results, so we choose epoch 20 because it has the larger total count.
    
    Table~\ref{tab:val_vs_sens_all} compares early stopping based on an ID validation set with the epoch selected by our
    sensitivity study. For early stopping, we hold out 10\% of the training data as a validation set, use patience 10, and
    set the maximum epoch to 200. For each backbone, the upper row reports the validation-selected epoch and its
    performance, while the lower row reports the sensitivity-selected epoch.

    \begin{table*}[h]
    \centering
    
    \caption{Epoch sensitivity on ETTh (MAE | MSE) for In-Distribution and Out-of-Distribution evaluation. Lower score values indicate better performance, and for each model and prediction horizon the best result is shown in \textbf{bold}.}
    \begin{adjustbox}{max width=\textwidth}
    \label{tab:epoch_sensitivity_etth}
    \renewcommand{\arraystretch}{0.7}
    \setlength{\tabcolsep}{6pt}
    \begin{tabular}{cccc|cc|cccc}
    \toprule
    \multirow{2}{*}{H} & \multirow{2}{*}{Epoch}
    & \multicolumn{2}{c|}{TCN}
    & \multicolumn{2}{c|}{Autoformer}
    & \multicolumn{2}{c}{GPT4TS} \\
    \cmidrule(lr){3-4}\cmidrule(lr){5-6}\cmidrule(lr){7-8}
    & & ID (ETTh1) & OOD (ETTh2)
      & ID (ETTh1) & OOD (ETTh2)
      & ID (ETTh1) & OOD (ETTh2) \\
    \midrule
    \multirow{3}{*}{$H{=}24$}
    & 5
    & 0.4761 | 0.4548 & 2.5387 | \textbf{8.7398}
    & 0.5147 | 0.5605 & \textbf{1.1207} | 1.9961
    & 0.3796 | 0.3275 & 0.5573 | 0.7407 \\
    & 20
    & \textbf{0.4588} | \textbf{0.4521} & \textbf{2.5145} | 8.9095
    & \textbf{0.5116} | \textbf{0.5345} & 1.1220 | \textbf{1.9614}
    & \textbf{0.3794} | \textbf{0.3217} & \textbf{0.5485} | \textbf{0.7119} \\
    & 100
    & 0.4638 | 0.4730 & 2.6279 | 10.5223
    & 0.6190 | 0.7968 & 1.3421 | 3.0684
    & 0.3851 | 0.3317 & 0.5576 | 0.7234 \\
    \midrule
    \multirow{3}{*}{$H{=}96$}
    & 5
    & 0.5489 | 0.5973 & 2.8486 | 11.1977
    & 0.5325 | 0.5651 & 1.3298 | 2.6778
    & 0.4479 | 0.4225 & \textbf{0.6700} | \textbf{1.0212} \\
    & 20
    & \textbf{0.5330} | \textbf{0.5715} & \textbf{2.5635} | \textbf{8.9076}
    & 0.5719 | 0.6328 & 1.3791 | 2.9361
    & 0.4473 | 0.4167 & 0.6738 | 1.0241 \\
    & 100
    & 0.5428 | 0.5823 & 2.7946 | 10.8517
    & \textbf{0.5296} | \textbf{0.5603} &\textbf{1.1485} | \textbf{2.0664}
    & 0.4612 | 0.4375 & 0.6926 | 1.0762 \\
    \midrule
    \multirow{3}{*}{$H{=}336$}
    & 5
    & \textbf{0.6169} | \textbf{0.7059} & 3.2290 | 14.2250
    & \textbf{0.5147} | \textbf{0.5605} & \textbf{1.1207} | \textbf{1.9961}
    & \textbf{0.5234} | \textbf{0.5447} & \textbf{0.8312} | \textbf{1.4613} \\
    & 20
    & 0.6193 | 0.7304 & \textbf{3.1380} | \textbf{13.7387}
    & 0.6704 | 0.8652 & 1.2568 | 2.5576
    & 0.5320 | 0.5588 & 0.8547 | 1.5203 \\
    & 100
    & 0.6210 | 0.7248 & 3.1463 | 14.5143
    & 0.6890 | 0.9008 & 1.2900 | 2.6016
    & 0.5453 | 0.5785 & 0.8593 | 1.5134 \\
    \bottomrule
    \end{tabular}
    \end{adjustbox}
    \end{table*}

    \begin{table*}[h]
    \centering
    
    \caption{Epoch sensitivity on ILI (MAE | MSE) for In-Distribution and Out-of-Distribution evaluation. Lower score values indicate better performance, and for each model and prediction horizon the best result is shown in \textbf{bold}.}
    \begin{adjustbox}{max width=\textwidth}
    \label{tab:epoch_sensitivity_ili}
    \renewcommand{\arraystretch}{0.5}
    \setlength{\tabcolsep}{6pt}
    \begin{tabular}{cccc|cc|cc}
    \toprule
    \multirow{2}{*}{H} & \multirow{2}{*}{Epoch}
    & \multicolumn{2}{c|}{TCN}
    & \multicolumn{2}{c|}{Autoformer}
    & \multicolumn{2}{c}{GPT4TS} \\
    \cmidrule(lr){3-4}\cmidrule(lr){5-6}\cmidrule(lr){7-8}
    & & ID (Pre-COVID) & OOD (COVID)
      & ID (Pre-COVID) & OOD (COVID)
      & ID (Pre-COVID) & OOD (COVID) \\
    \midrule
    \multirow{3}{*}{$H{=}24$}
    & 5
    & 1.8138 | 6.2710 & \textbf{3.1916} | \textbf{17.7921}
    & 1.7129 | 5.4064 & 3.4545 | 20.1964
    & 1.2870 | 3.9209 & 1.2507 | 4.1161 \\
    & 20
    & \textbf{1.4627} | \textbf{3.9348} & 3.7786 | 22.6670
    & 1.7020 | 5.4825 & 3.6535 | 27.2704
    & 1.0951 | 2.9652 & 1.3875 | 5.0460 \\
    & 100
    & 1.6734 | 5.8091 & 4.5372 | 38.1840
    & \textbf{1.2800} | \textbf{3.3924} & \textbf{2.7942} | \textbf{15.7648}
    & \textbf{0.9667} | \textbf{2.3908} & \textbf{1.3245} | \textbf{4.7387} \\
    \midrule
    \multirow{3}{*}{$H{=}48$}
    & 5
    & 2.2747 | 9.2657 & 3.5927 | 22.9180
    & \textbf{1.7946} | \textbf{6.3452} & \textbf{2.3338} | \textbf{12.8700}
    & 1.1305 | 3.1574 & \textbf{1.2570} | \textbf{3.9344} \\
    & 20
    & 1.7178 | 5.3849 & 2.8466 | 14.8282
    & 1.9458 | 7.5849 & 2.4399 | 14.2704
    & 1.0353 | 2.5726 & 1.2934 | 4.2084 \\
    & 100
    & \textbf{1.2409} | \textbf{3.2224} & \textbf{2.6306} | \textbf{11.5707}
    & 2.0135 | 8.1263 & 2.6485 | 15.4891
    & \textbf{0.8972} | \textbf{1.9836} & 1.3013 | 4.2179 \\
    \midrule
    \multirow{3}{*}{$H{=}72$}
    & 5
    & 2.0803 | \textbf{7.4882} & \textbf{3.3792} | \textbf{21.4741}
    & 2.1000 | \textbf{8.5060} & 2.9516 | 17.5582
    & 1.1591 | 2.7503 & 1.3926 | 4.8496 \\
    & 20
    & 2.3788 | 9.1391 & 4.3711 | 31.0456
    & 2.2536 | 9.7026 & 3.6342 | 27.2995
    & 1.0691 | 2.4867 & \textbf{1.3798} | \textbf{4.8574} \\
    & 100
    & \textbf{1.9092} | 8.1936 & 4.0072 | 32.0801
    & \textbf{2.0193} | 8.5962 & \textbf{2.6667} | \textbf{15.5638}
    & \textbf{1.0006} | \textbf{2.1929} & 1.4140 | 5.0265 \\
    \bottomrule
    \end{tabular}
    \end{adjustbox}
    \end{table*}
    
    \begin{table*}[h]
    \centering
    
    \caption{Epoch sensitivity on WEATHER-5K (MAE | MSE) across In-Distribution and Out-of-Distribution locations. Lower score values indicate better performance, and for each model and prediction horizon the best result is shown in \textbf{bold}.}
    \begin{adjustbox}{max width=\textwidth}
    \label{tab:epoch_sensitivity_weather}
    \renewcommand{\arraystretch}{0.5}
    \setlength{\tabcolsep}{3pt}
    \resizebox{\textwidth}{!}{%
    \begin{tabular}{ccccc| ccc| ccc}
    \toprule
    \multirow{2}{*}{H} & \multirow{2}{*}{Epoch}
    & \multicolumn{3}{c|}{TCN}
    & \multicolumn{3}{c|}{Autoformer}
    & \multicolumn{3}{c}{GPT4TS} \\
    \cmidrule(lr){3-5}\cmidrule(lr){6-8}\cmidrule(lr){9-11}
    & & ID & Near & Far & ID & Near & Far & ID & Near & Far \\
    \midrule
    \multirow{3}{*}{$H{=}24$}
    & 5
    & 0.3353 | 0.2700 & 0.3675 | 0.3212 & \textbf{0.8012} | \textbf{1.4215}
    & \textbf{0.4309} | \textbf{0.3837} & \textbf{0.4657} | \textbf{0.4550} & 0.8779 | 1.7081
    & 0.3603 | 0.3301 & 0.3938 | 0.3900 & 0.6730 | 1.1136 \\
    & 20
    & 0.3289 | \textbf{0.2619} & 0.3593 | 0.3106 & 0.8515 | 1.7190
    & 0.4929 | 0.4929 & 0.5391 | 0.5936 & 0.9340 | 1.8922
    & 0.3511 | 0.3168 & 0.3839 | 0.3768 & 0.6498 | 1.0380 \\
    & 100
    & \textbf{0.3269} | 0.2657 & \textbf{0.3535} | \textbf{0.3071} & 0.9110 | 2.0531
    & 0.4516 | 0.4274 & 0.4930 | 0.5116 & \textbf{0.8713} | \textbf{1.6349}
    & \textbf{0.3488} | \textbf{0.3101} & \textbf{0.3813} | \textbf{0.3688} & \textbf{0.6424} | \textbf{1.0096} \\
    \midrule
    \multirow{3}{*}{$H{=}96$}
    & 5
    & \textbf{0.4522} | \textbf{0.4378} & 0.4968 | 0.5285 & \textbf{1.1416} | \textbf{2.8791}
    & 0.5181 | 0.5293 & 0.5776 | 0.6596 & 0.9777 | 2.0166
    & 0.4819 | 0.5215 & 0.5430 | 0.6534 & \textbf{0.9179} | \textbf{1.9213} \\
    & 20
    & 0.4523 | 0.4405 & \textbf{0.4870} | \textbf{0.5139} & 1.2380 | 3.5844
    & \textbf{0.5027} | \textbf{0.5164} & \textbf{0.5598} | \textbf{0.6366} & 0.9791 | 2.0369
    & 0.4755 | 0.5064 & 0.5369 | 0.6411 & 0.9240 | 1.9608 \\
    & 100
    & 0.4659 | 0.4670 & 0.4895 | 0.5157 & 1.2955 | 4.0131
    & 0.5292 | 0.5700 & 0.5878 | 0.7006 & \textbf{0.9743} | \textbf{2.0091}
    & \textbf{0.4732} | \textbf{0.4995} & \textbf{0.5316} | \textbf{0.6296} & 0.9390 | 2.0504 \\
    \midrule
    \multirow{3}{*}{$H{=}336$}
    & 5
    & 0.5093 | 0.5240 & 0.5586 | 0.6339 & \textbf{1.3262} | \textbf{3.9330}
    & \textbf{0.5527} | \textbf{0.6137} & 0.6213 | 0.7597 & 1.0620 | 2.4130
    & 0.5374 | 0.6171 & 0.6086 | 0.7837 & \textbf{0.9994} | \textbf{2.2002} \\
    & 20
    & \textbf{0.5066} | \textbf{0.5215} & 0.5477 | \textbf{0.6215} & 1.4085 | 4.5465
    & 0.5532 | 0.6172 & \textbf{0.6186} | \textbf{0.7603} & 1.0474 | 2.3230
    & 0.5333 | 0.6088 & 0.6059 | 0.7824 & 1.0272 | 2.3732 \\
    & 100
    & 0.5083 | 0.5344 & \textbf{0.5464} | 0.6223 & 1.3647 | 4.1765
    & 0.5567 | 0.6227 & 0.6244 | 0.7716 & \textbf{1.0467} | \textbf{2.3192}
    & \textbf{0.5299} | \textbf{0.5968} & \textbf{0.6004} | \textbf{0.7658} & 1.0442 | 2.4666 \\
    \bottomrule
    \end{tabular}%
    }
    \end{adjustbox}
    \end{table*}

    \begin{table}[h]
    
    \centering

    \caption{Epoch selection summary. Count columns report the number of best results across horizons and splits. Bold entries mark the selected epoch for each backbone after tie-breaking by total count.}
    \begin{adjustbox}{max width=\textwidth}
    \label{tab:epoch_selection_summary}
    
    \renewcommand{\arraystretch}{0.5}
    
    \setlength{\tabcolsep}{5pt}
    
    \begin{tabular}{ccccccc}
    
    \toprule
    
    Dataset & Epoch & TCN & Autoformer & GPT4TS & Total & Selected backbone(s) \\
    
    \midrule
    
    \multirow{3}{*}{ETTh}
    
    & 5   & 3 & \textbf{5} & 6 & 14 & Autoformer \\
    
    & 20  & \textbf{9} & 3 & \textbf{6} & 18 & TCN, GPT4TS \\
    
    & 100 & 0 & 4 & 0 & 4 & -- \\
    
    \midrule
    
    \multirow{3}{*}{ILI}
    
    & 5   & 5 & 3 & 2 & 10 & -- \\
    
    & 20  & 2 & 0 & 2 & 4 & -- \\
    
    & 100 & \textbf{5} & \textbf{7} & \textbf{8} & 20 & TCN, Autoformer, GPT4TS \\
    
    \midrule
    
    \multirow{3}{*}{WEATHER-5K}
    
    & 5   & \textbf{8} & 6 & 4 & 18 & TCN \\
    
    & 20  & 6 & 6 & 0 & 12 & -- \\
    
    & 100 & 4 & \textbf{6} & \textbf{14} & 24 & Autoformer, GPT4TS \\
    
    \bottomrule
    
    \end{tabular}
    \end{adjustbox}
    \end{table}

    \begin{table*}[h]
    \centering
    
    \caption{Performance in MAE|MSE with $H{=}24$ using validation-set early stopping versus epoch sensitivity selection. For each backbone, the
    upper row reports the validation-selected epoch results and the lower row reports the sensitivity-selected epoch results. Lower values
    are better, and the better scores are shown in \textbf{bold}.}
    \begin{adjustbox}{max width=\textwidth}
    \label{tab:val_vs_sens_all}
    \renewcommand{\arraystretch}{0.5}
    \setlength{\tabcolsep}{7pt}
    \begin{tabular}{cccc|cccc}
    \toprule
    \multicolumn{4}{c|}{ETTh} & \multicolumn{4}{c}{ILI} \\
    \midrule
    Backbone & Epoch & ETTh1 & ETTh2 & Backbone & Epoch & Pre-COVID & COVID \\
    \midrule
    \multirow{2}{*}{TCN}
    & 31 & 0.4689 | 0.4690 & 2.7690 | 10.3066
    & \multirow{2}{*}{TCN}
    & 42  & 1.7914 | 7.4170 & \textbf{3.8867 | 30.3466} \\
    & 20 & \textbf{0.4588 | 0.4521} & \textbf{2.5145 | 8.9095}
    & & 100 & \textbf{1.6734 | 5.8091} & 4.5372 | 38.1840 \\
    \midrule
    \multirow{2}{*}{Autoformer}
    & 14 & 0.5270 | 0.5751 & 1.2336 | \textbf{1.9241}
    & \multirow{2}{*}{Autoformer}
    & 23  & 1.9455 | 8.0297 & 3.1915 | 21.2431 \\
    & 5  & \textbf{0.5147 | 0.5605} & \textbf{1.1207} | 1.9961
    & & 100 & \textbf{1.2800 | 3.3924} & \textbf{2.7942 | 15.7648} \\
    \midrule
    \multirow{2}{*}{GPT4TS}
    & 19 & 0.3868 | 0.3335 & 0.5614 | 0.7849
    & \multirow{2}{*}{GPT4TS}
    & 138 & 1.3128 | 4.5128 & 1.7399 | 8.2247 \\
    & 20 & \textbf{0.3796 | 0.3275} & \textbf{0.5573 | 0.7407}
    & & 100 & \textbf{0.9667 | 2.3908} & \textbf{1.3245 | 4.7387} \\
    \midrule
    \multicolumn{8}{c}{WEATHER-5K} \\
    \midrule
    Backbone & Epoch & \multicolumn{2}{c}{Miami} & \multicolumn{2}{c}{Atlanta} & \multicolumn{2}{c}{NYC} \\
    \midrule
    \multirow{2}{*}{TCN}
    & 56  & \multicolumn{2}{c}{\textbf{0.3263 | 0.2610}} & \multicolumn{2}{c}{0.5687 | 0.7530} & \multicolumn{2}{c}{0.8134 | 1.5342} \\
    & 5   & \multicolumn{2}{c}{0.3353 | 0.2700} & \multicolumn{2}{c}{\textbf{0.5652 | 0.7230}} & \multicolumn{2}{c}{\textbf{0.8012 | 1.4215}} \\
    \midrule
    \multirow{2}{*}{Autoformer}
    & 12  & \multicolumn{2}{c}{0.4442 | 0.3901} & \multicolumn{2}{c}{0.6753 | 0.9807} & \multicolumn{2}{c}{0.8837 | 1.8407} \\
    & 5   & \multicolumn{2}{c}{\textbf{0.4309 | 0.3837}} & \multicolumn{2}{c}{\textbf{0.6640 | 0.9252}} & \multicolumn{2}{c}{\textbf{0.8779 | 1.7081}} \\
    \midrule
    \multirow{2}{*}{GPT4TS}
    & 52  & \multicolumn{2}{c}{0.5327 | 0.6062} & \multicolumn{2}{c}{0.8155 | 1.3858} & \multicolumn{2}{c}{1.0283 | 2.3649} \\
    & 100 & \multicolumn{2}{c}{\textbf{0.3488 | 0.3101}} & \multicolumn{2}{c}{\textbf{0.4994 | 0.6067}} & \multicolumn{2}{c}{\textbf{0.6424 | 1.0096}} \\
    \bottomrule
    \end{tabular}
    \end{adjustbox}
    \end{table*}

    \FloatBarrier
    \section{Additional Experiment Result}
    
    \subsection{Probabilistic Forecasting Metrics}\label{sec:probabilistic_results}
    
    Tables~\ref{tab:probabilistic_forecasting_full} reports probabilistic
    forecasting metrics on ETTh. OMPB obtains the best
    NLL in all six ID/OOD settings, the best CRPS in five of six settings, and the lowest ECE in all six settings. The OOD
    results show why interval width alone is insufficient: original models often keep narrow intervals but become severely
    under-covered, while some adaptive baselines recover coverage only with very wide intervals.

    \begin{table*}[h]
    \centering
    
    \caption{Probabilistic forecasting under distribution shift on ETTh. Lower NLL, CRPS, ECE, and interval width are better, while coverage is best when closer to the nominal level. Width should therefore be interpreted jointly with coverage.}
    \begin{adjustbox}{max width=\textwidth}
    \label{tab:probabilistic_forecasting_full}
    \setlength{\tabcolsep}{3.2pt}
    \renewcommand{\arraystretch}{0.7}
    \resizebox{\textwidth}{!}{%
    \begin{tabular}{cccccccccc}
    \toprule
    Backbone & Split & Method & NLL & CRPS & Cov@80 & Cov@95 & Width@80 & Width@95 & ECE \\
    \midrule
    \multirow{10}{*}{Autoformer}
    & \multirow{5}{*}{ID ETTh1}
    & Original & 1.7718 & 0.4287 & 0.6445 & 0.8028 & 1.1327 & 1.7323 & 0.1421 \\
    & & SOLID    & 1.4954 & 0.5462 & 0.9388 & 0.9141 & 1.4870 & 2.2741 & 0.1422 \\
    & & OneNet   & 5.5852 & 0.6910 & 0.7567 & 0.8863 & 1.9091 & 2.9197 & 0.0942 \\
    & & PROCEED  & 2.1991 & 0.7426 & 0.7517 & 0.8762 & 2.3809 & 3.6413 & 0.0924 \\
    & & OMPB     & \textbf{1.0226} & \textbf{0.3703} & \textbf{0.7751} & \textbf{0.9791} & 4.2665 & 6.5250 & \textbf{0.0541} \\
    \cmidrule(lr){2-10}
    & \multirow{5}{*}{OOD ETTh2}
    & Original & 20.2276 & 1.5403 & 0.2581 & 0.3883 & 1.1327 & 1.7323 & 0.5096 \\
    & & SOLID    & 2.1651 & 1.2138 & 0.8657 & 0.9350 & 9.7381 & 14.8932 & 0.1126 \\
    & & OneNet   & 10.2486 & 0.5717 & \textbf{0.8591} & 0.9143 & 1.9091 & 2.9197 & 0.1245 \\
    & & PROCEED  & 17.3017 & 1.1846 & 0.6904 & 0.8102 & 2.3809 & 3.6413 & 0.1356 \\
    & & OMPB     & \textbf{0.7362} & \textbf{0.2420} & 0.8792 & \textbf{0.9501} & 1.4870 & 2.2741 & \textbf{0.1023} \\
    \midrule
    \multirow{10}{*}{GPT4TS}
    & \multirow{5}{*}{ID ETTh1}
    & Original & 1.3541 & \textbf{0.3433} & 0.6941 & 0.8387 & 1.0097 & 1.5442 & 0.0979 \\
    & & SOLID    & 1.1159 & 0.3886 & 0.8595 & 0.9101 & 2.3241 & 3.5543 & 0.0969 \\
    & & OneNet   & 3.4019 & 0.5798 & 0.7644 & 0.8995 & 1.8982 & 2.9031 & 0.0809 \\
    & & PROCEED  & 2.0357 & 0.7036 & 0.7544 & 0.8802 & 2.3034 & 3.5228 & 0.0902 \\
    & & OMPB     & \textbf{1.0146} & 0.3697 & \textbf{0.7670} & \textbf{0.9376} & 1.4492 & 2.2164 & \textbf{0.0593} \\
    \cmidrule(lr){2-10}
    & \multirow{5}{*}{OOD ETTh2}
    & Original & 12.3514 & 0.7294 & 0.4775 & 0.6392 & 1.0097 & 1.5442 & 0.2946 \\
    & & SOLID    & 2.5754 & 1.6636 & 0.8692 & 0.8485 & 13.3194 & 20.3703 & 0.1717 \\
    & & OneNet   & 5.5656 & 0.4346 & \textbf{0.8804} & 0.9391 & 1.8982 & 2.9031 & 0.1204 \\
    & & PROCEED  & 49.2202 & 1.2571 & 0.6883 & 0.8091 & 2.3034 & 3.5228 & 0.1355 \\
    & & OMPB     & \textbf{0.7736} & \textbf{0.2468} & 0.7454 & \textbf{0.9418} & 1.4492 & 2.2164 & \textbf{0.1001} \\
    \midrule
    \multirow{10}{*}{TCN}
    & \multirow{5}{*}{ID ETTh1}
    & Original & 1.4953 & 0.4037 & 0.6602 & 0.8192 & 1.1063 & 1.6920 & 0.1279 \\
    & & SOLID    & 1.3446 & 0.4757 & 0.9118 & 0.9060 & 3.3620 & 5.1418 & 0.1156 \\
    & & OneNet   & 1.2186 & 0.4686 & \textbf{0.7722} & 0.9127 & 1.8874 & 2.8866 & 0.0677 \\
    & & PROCEED  & 1.8723 & 0.6646 & 0.7571 & 0.8841 & 2.2259 & 3.4043 & 0.0880 \\
    & & OMPB     & \textbf{1.0067} & \textbf{0.3690} & 0.7589 & \textbf{0.9635} & 1.4114 & 2.1586 & \textbf{0.0645} \\
    \cmidrule(lr){2-10}
    & \multirow{5}{*}{OOD ETTh2}
    & Original & 54.2016 & 3.3362 & 0.0745 & 0.1149 & 1.1063 & 1.6920 & 0.7064 \\
    & & SOLID    & 2.9577 & 3.7641 & 0.9896 & 0.9955 & 37.0345 & 56.6394 & 0.1994 \\
    & & OneNet   & 0.8826 & 0.2974 & 0.9018 & \textbf{0.9639} & 1.8874 & 2.8866 & 0.1164 \\
    & & PROCEED  & 81.1387 & 1.3296 & 0.6863 & 0.8080 & 2.2259 & 3.4043 & 0.1353 \\
    & & OMPB     & \textbf{0.8110} & \textbf{0.2517} & \textbf{0.8593} & 0.9335 & 1.4114 & 2.1586 & \textbf{0.0978} \\
    \bottomrule
    \end{tabular}%
    }
    \end{adjustbox}
    \end{table*}
    \subsection{Regression-Proxy Diagnostics}
    \label{sec:reg_proxy_diagnostics}
    
    Table~\ref{tab:reg_proxy_full_cert} compares the realized target proxy risk with the full post-hoc martingale and
    i.i.d. proxy certificates. The full certificates include the residual proxy mismatch term and are therefore diagnostic
    only; OMPB uses the input-identifiable certificate in Equation~\eqref{eq:pb-online} for online calibration. The
    martingale certificate is tighter than the variance-blind i.i.d. baseline in all four representative settings. This does
    not imply that every martingale bound is tighter than every i.i.d. bound; here the gain comes from the Freedman
    variance-adaptive correction when the bounded proxy loss has low predictable variation. Figure~\ref{fig:reg_proxy_bound_curves}
    visualizes the corresponding online curves.
    
    These results should be read as certificate diagnostics rather than deployment-time objectives. The full martingale and
    i.i.d. columns include the post-hoc residual proxy mismatch term, which requires target labels and is unavailable at
    prediction time. Their purpose is to check whether the bounded proxy interpretation is numerically meaningful after the
    stream is observed. The online algorithm uses only the source proxy risk, the martingale correction, and the unlabeled
    input-side disagreement term.
    
    \begin{table}[!htbp]
    \centering
    \caption{Regression-proxy full-certificate diagnostics. Target proxy risk is the realized target proxy-risk diagnostic.}
    \label{tab:reg_proxy_full_cert}
    \begin{tabular}{ccccccc}
    \toprule
    Dataset & Backbone & $H$ & Split & Target Proxy Risk & Full Mart. Bound & Full IID Bound \\
    \midrule
    ETTh & TCN & 96 & ETTh2 & 0.9859 & 1.0853 & 1.1371 \\
    ILI & GPT4TS & 48 & COVID & 0.9462 & 1.0369 & 1.1008 \\
    WEATHER-5K & Autoformer & 96 & Close & 0.7228 & 0.7963 & 0.8132 \\
    WEATHER-5K & Autoformer & 96 & Far & 0.8439 & 0.9257 & 0.9498 \\
    \bottomrule
    \end{tabular}
    \end{table}
    
    \begin{figure}[!htbp]
    \centering
    \begin{minipage}{0.48\textwidth}
    \centering
    \includegraphics[width=\linewidth]{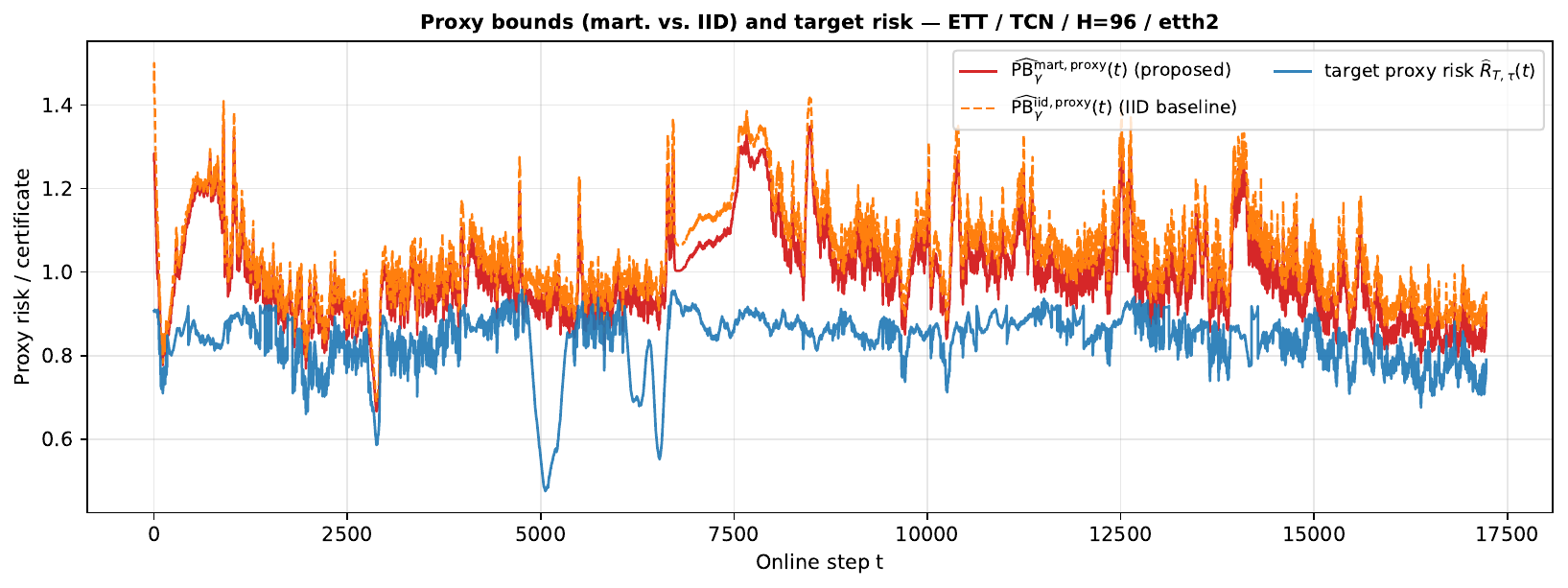}
    \end{minipage}
    \hfill
    \begin{minipage}{0.48\textwidth}
    \centering
    \includegraphics[width=\linewidth]{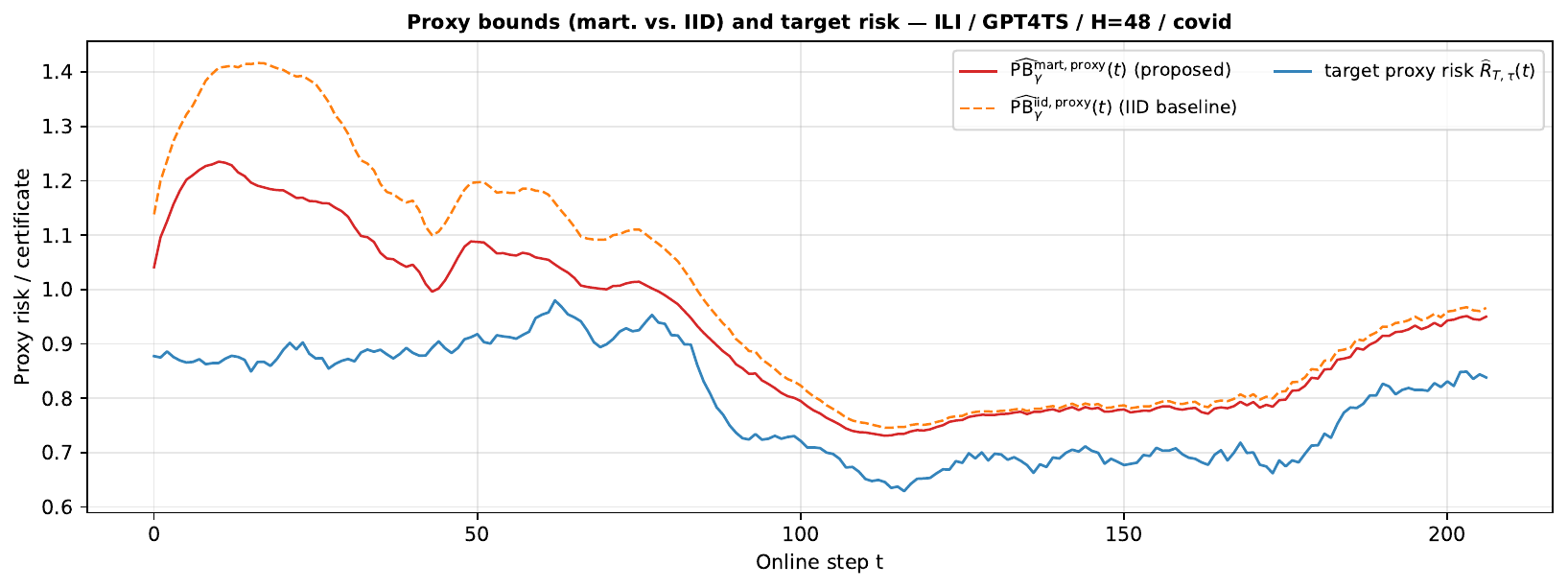}
    \end{minipage}

    \begin{minipage}{0.48\textwidth}
    \centering
    \includegraphics[width=\linewidth]{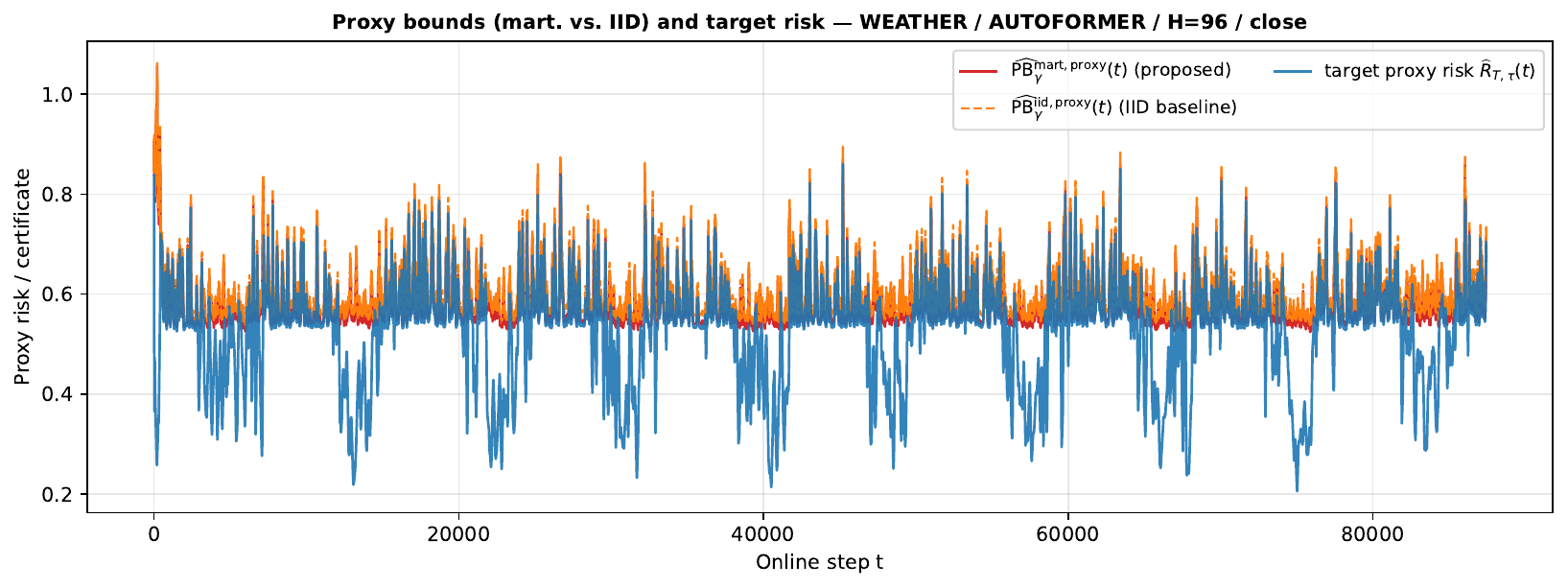}
    \end{minipage}
    \hfill
    \begin{minipage}{0.48\textwidth}
    \centering
    \includegraphics[width=\linewidth]{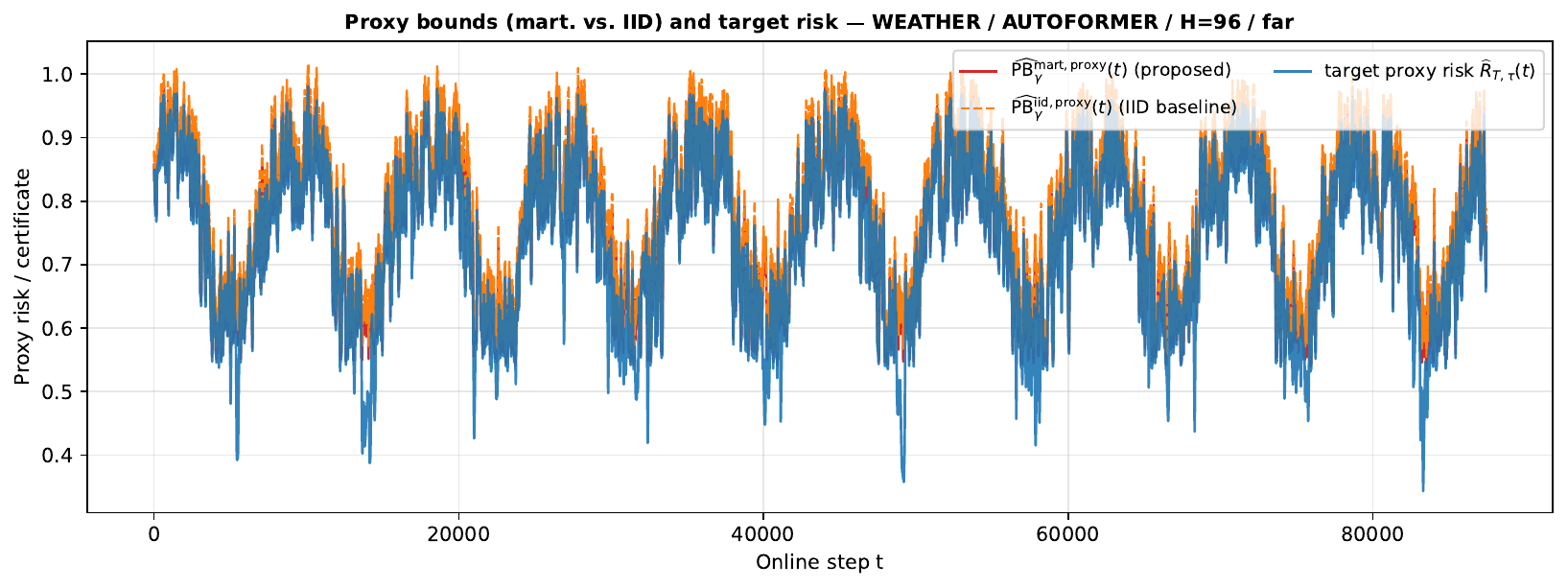}
    \end{minipage}
    \caption{Regression-proxy diagnostics comparing the realized target proxy risk, the full martingale proxy certificate, and the full i.i.d. proxy certificate on representative ETTh, ILI, and WEATHER-5K shifts.}
    \label{fig:reg_proxy_bound_curves}
    \end{figure}
    \FloatBarrier
    
    \subsection{Certificate Diagnostics}
    \label{sec:timeseries_diagnostics}
    
    The certificate diagnostics evaluate whether the certificate behaves as a meaningful deployment-time monitoring signal
    over the online stream. These plots compare the online certificate, posterior disagreement, external shift-severity
    metrics, and realized pre-calibration target error over online prediction steps. The shift-severity curves are diagnostic
    only and are not used as target labels or as part of the calibration objective.
    
    Figure~\ref{fig:cert_shift_timeseries_main} in the main text shows the WEATHER-5K far-OOD stream, where the certificate
    follows the recurring seasonal shift pattern and responds to local error spikes. Figure~\ref{fig:cert_shift_timeseries_appendix}
    shows the corresponding ETTh and ILI streams. On ETTh, the certificate exhibits an abrupt spike near a difficult shifted
    regime, consistent with a sudden increase in shift severity and target error. On ILI, the certificate increases more
    gradually through the COVID-period shift, matching the slower evolution of the target regime. Together, these trajectories
    show that the certificate reacts to different shift patterns, abrupt, gradual, and seasonal, while remaining a monitoring
    signal rather than a direct MAE/MSE bound.
    
    \begin{figure}[!htbp]
    \centering
    \begin{minipage}{0.48\textwidth}
    \centering
    \includegraphics[width=\linewidth]{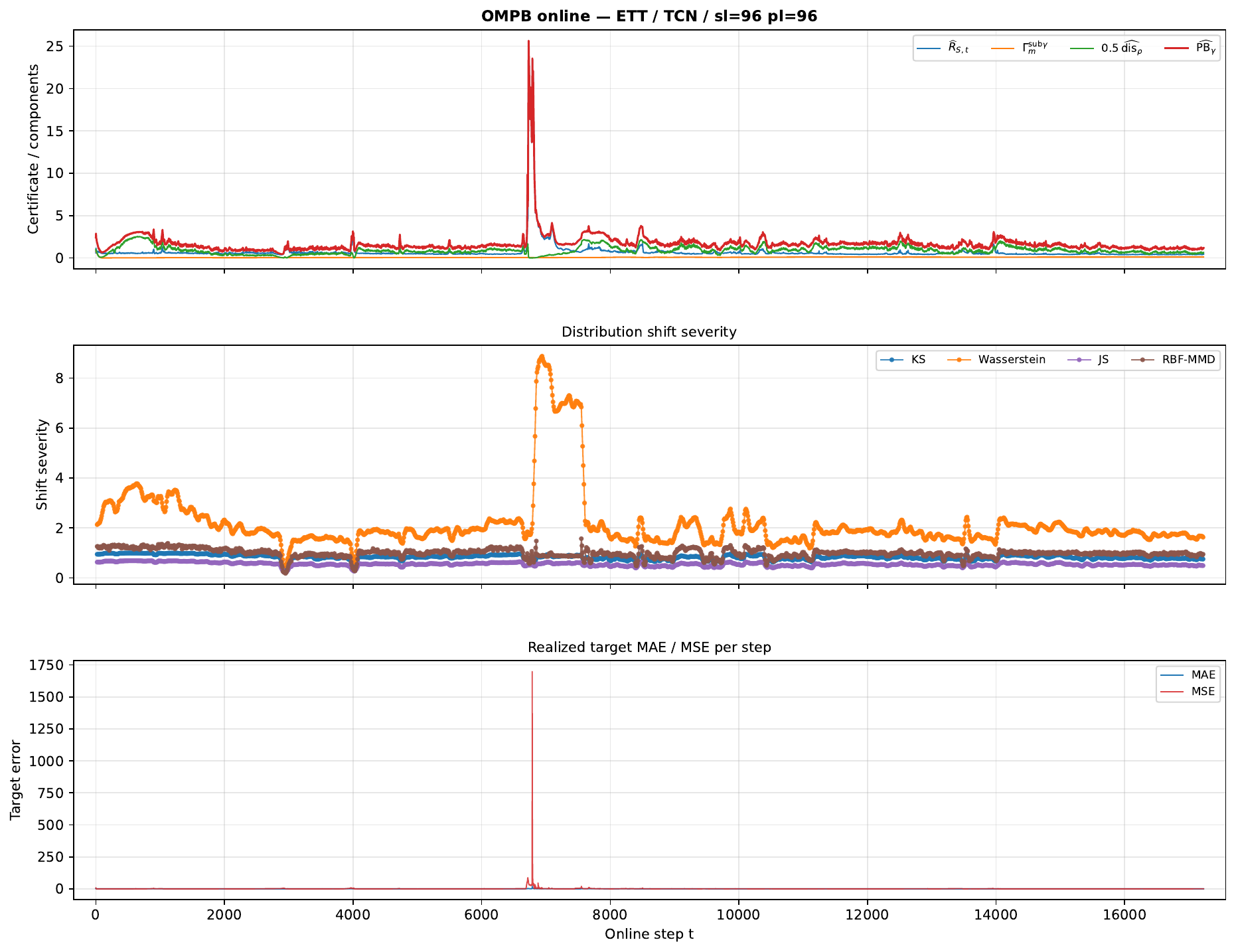}
    \end{minipage}
    \hfill
    \begin{minipage}{0.48\textwidth}
    \centering
    \includegraphics[width=\linewidth]{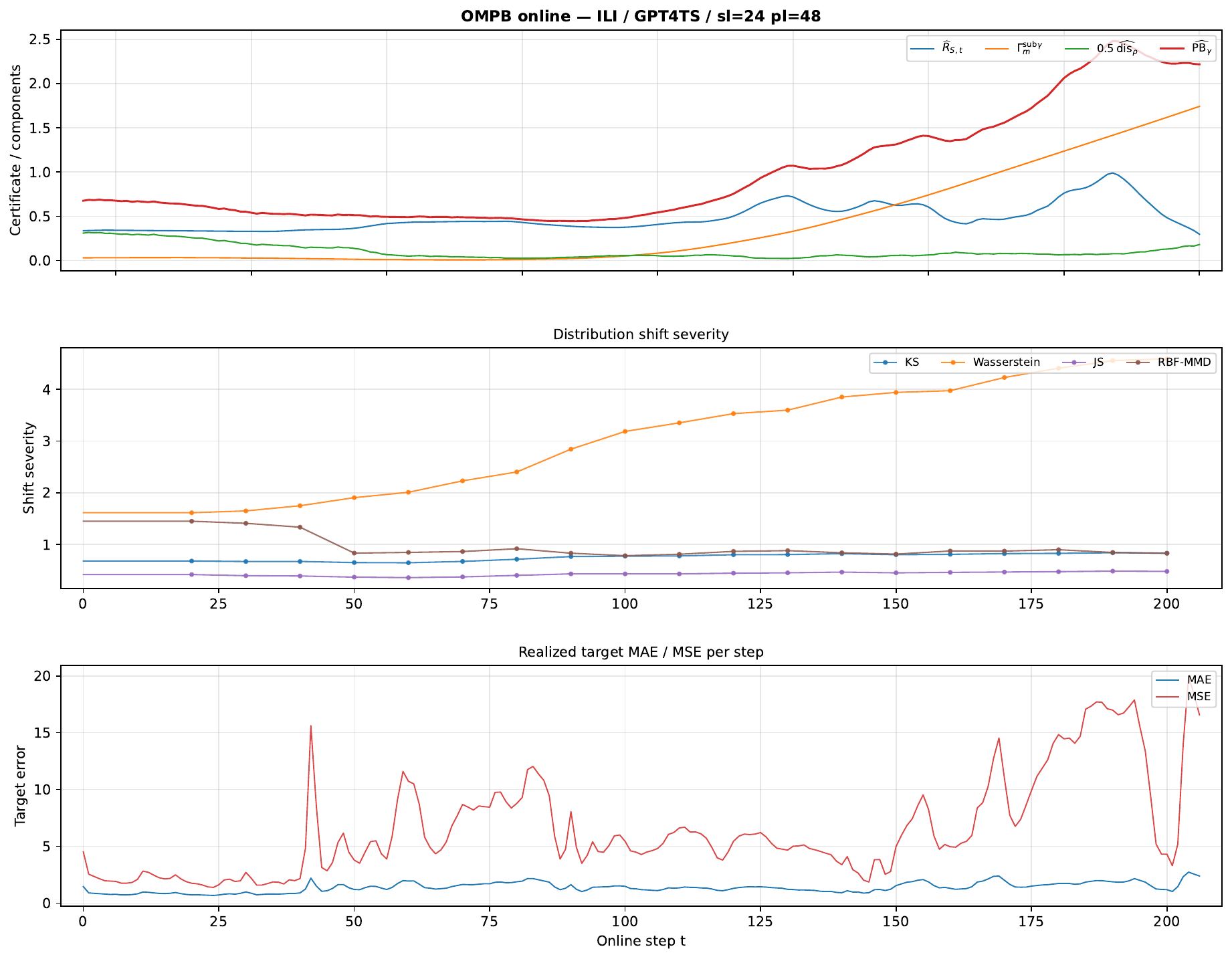}
    \end{minipage}
    \caption{Time-series certificate diagnostics for ETTh and ILI. ETTh shows an abrupt certificate spike near a difficult shifted regime, while ILI shows a gradual certificate increase through the COVID-period shift. These trajectories illustrate how the certificate responds to different temporal shift patterns.}
    \label{fig:cert_shift_timeseries_appendix}
    \end{figure}
    
    \FloatBarrier

    \subsection{Correlation and Scatter Diagnostics}
    \label{sec:correlation_diagnostics}
    
    The correlation diagnostics quantify whether the online certificate and disagreement term are associated with realized
    forecasting error and external distribution-shift severity. Tables~\ref{tab:cert_error_corr} and~\ref{tab:dis_shift_corr}
    report Pearson and Spearman correlations between the online certificate and target MAE/MSE, and between posterior
    disagreement and shift-severity metrics. Figure~\ref{fig:correlation_scatter_diagnostics} provides the corresponding
    scatter plots.
    
    The online certificate is positively correlated with realized target error across all datasets. ETTh and ILI show
    stronger Pearson correlations, while WEATHER-5K shows stronger Spearman correlations, suggesting a monotonic but
    nonlinear relationship under seasonal shift. The disagreement term is also positively associated with most external
    shift metrics, especially on ILI and in the rank correlations for WEATHER-5K. These results support the use of the
    certificate and disagreement as informative warning signals under distribution shift, without claiming that either is a
    tight bound on MAE or MSE.
    
    These diagnostics also clarify when disagreement is informative. It works best when posterior samples retain meaningful
    epistemic diversity and when the source-side disagreement scale is not saturated. If the posterior collapses, posterior
    samples become too similar and disagreement can understate uncertainty. If the posterior is excessively diffuse,
    disagreement can become less selective. Thus, the scatter diagnostics support disagreement as a deployment-time warning
    signal rather than as a standalone shift detector.
    
    \begin{table}[!htbp]
    \centering
    
    \caption{Correlation between the online certificate and realized target error. Each cell reports Pearson $r$ \(\mid\) Spearman $\rho$.}
    \begin{adjustbox}{max width=\textwidth}
    \label{tab:cert_error_corr}
    \begin{tabular}{ccc}
    \toprule
    Dataset & Target MAE & Target MSE \\
    \midrule
    ETTh & 0.592 \(\mid\) 0.278 & 0.339 \(\mid\) 0.383 \\
    ILI & 0.507 \(\mid\) 0.276 & 0.432 \(\mid\) 0.344 \\
    WEATHER-5K & 0.239 \(\mid\) 0.439 & 0.270 \(\mid\) 0.488 \\
    \bottomrule
    \end{tabular}
    \end{adjustbox}
    \end{table}
    
    \begin{table}[!htbp]
    \centering
    
    \caption{Correlation between disagreement $\widehat{\mathrm{dis}}_{\rho}(t)$ and distribution-shift severity metrics. Each cell reports Pearson $r$ \(\mid\) Spearman $\rho$.}
    \begin{adjustbox}{max width=\textwidth}
    \label{tab:dis_shift_corr}
    \setlength{\tabcolsep}{3pt}
    \begin{tabular}{ccccc}
    \toprule
    Dataset & KS & Wasserstein & JS & RBF-MMD \\
    \midrule
    ETTh & 0.247 \(\mid\) 0.180 & 0.044 \(\mid\) 0.333 & 0.278 \(\mid\) 0.181 & 0.320 \(\mid\) 0.319 \\
    ILI & 0.525 \(\mid\) 0.439 & 0.636 \(\mid\) 0.402 & 0.621 \(\mid\) 0.489 & -0.118 \(\mid\) 0.070 \\
    WEATHER-5K & 0.139 \(\mid\) 0.547 & 0.197 \(\mid\) 0.565 & 0.152 \(\mid\) 0.559 & 0.132 \(\mid\) 0.543 \\
    \bottomrule
    \end{tabular}
    \end{adjustbox}
    \end{table}
    
    \begin{figure}[!htbp]
    \centering
    \begin{minipage}{0.32\textwidth}
    \centering
    \includegraphics[width=\linewidth]{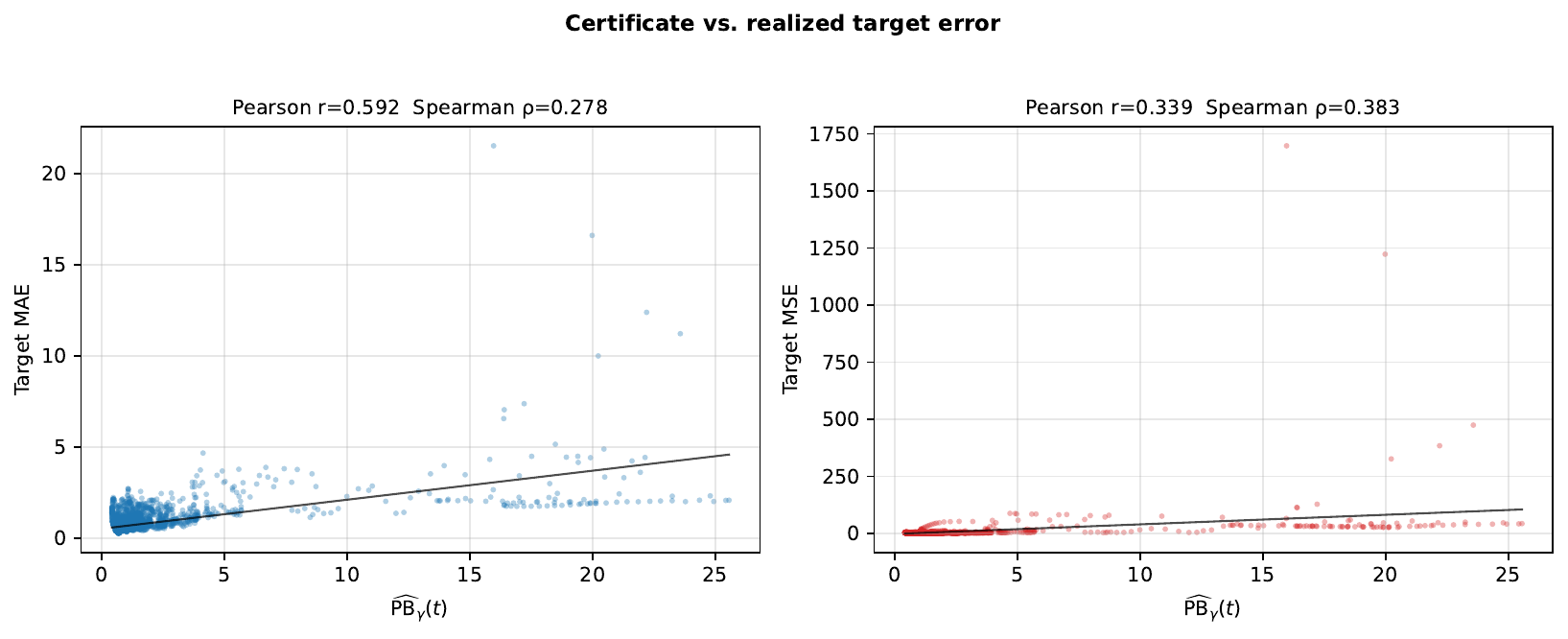}
    \end{minipage}
    \hfill
    \begin{minipage}{0.32\textwidth}
    \centering
    \includegraphics[width=\linewidth]{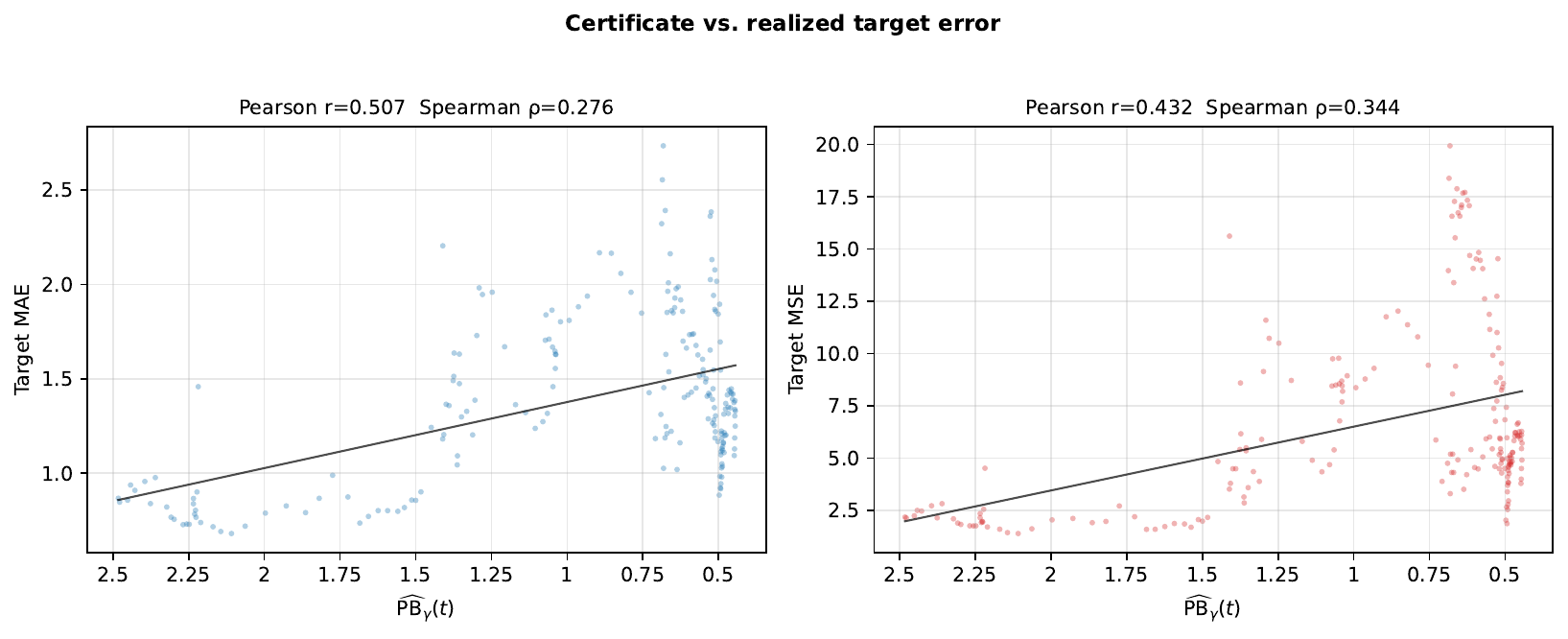}
    \end{minipage}
    \hfill
    \begin{minipage}{0.32\textwidth}
    \centering
    \includegraphics[width=\linewidth]{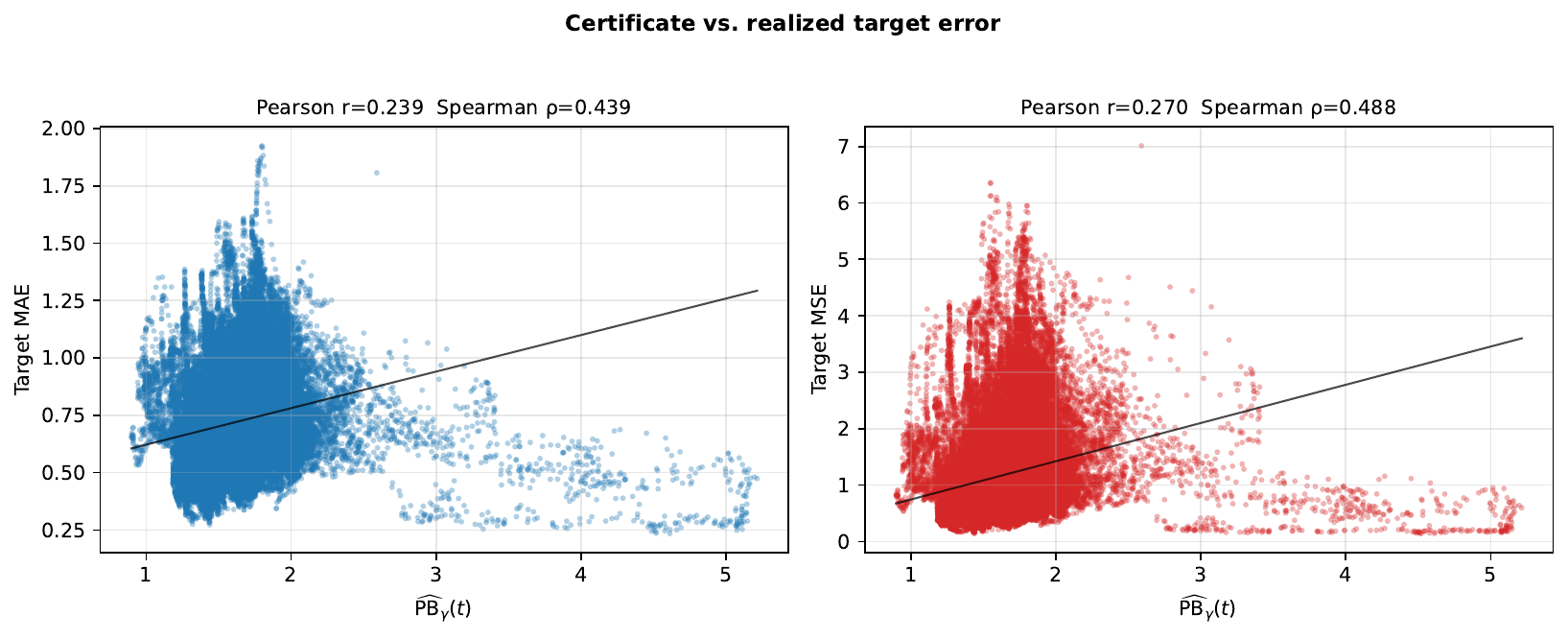}
    \end{minipage}

    \begin{minipage}{0.32\textwidth}
    \centering
    \includegraphics[width=\linewidth]{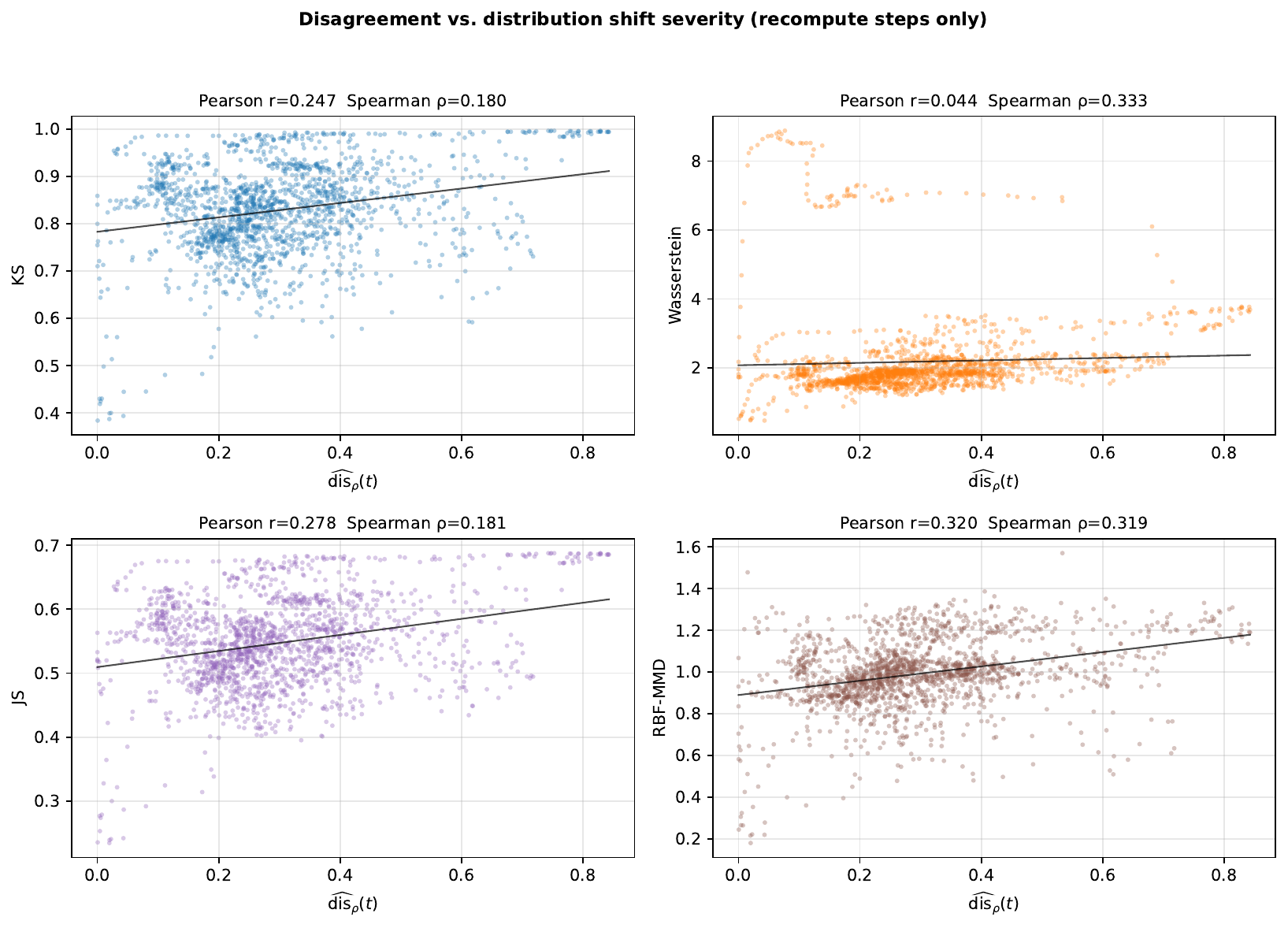}
    \end{minipage}
    \hfill
    \begin{minipage}{0.32\textwidth}
    \centering
    \includegraphics[width=\linewidth]{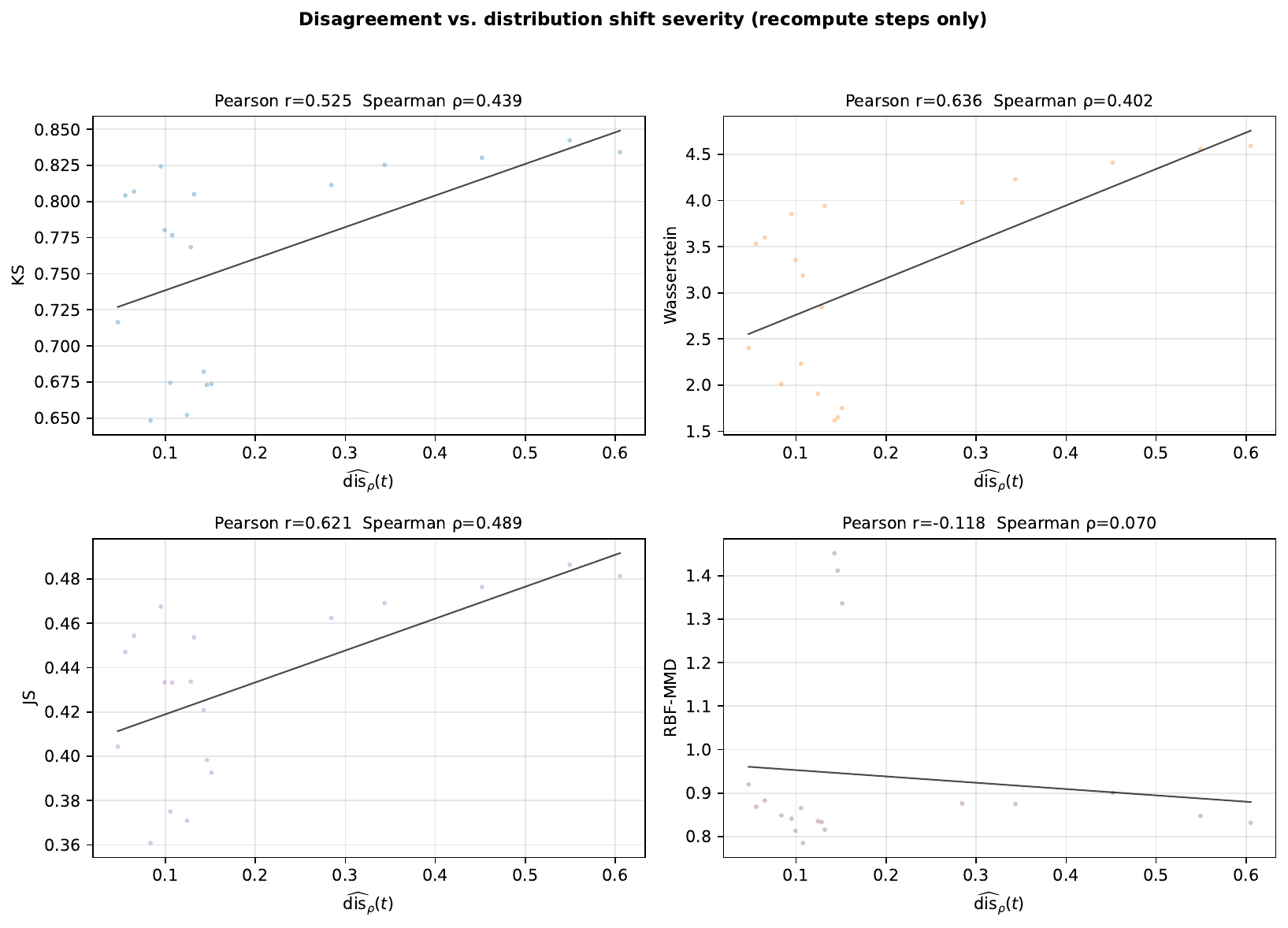}
    \end{minipage}
    \hfill
    \begin{minipage}{0.32\textwidth}
    \centering
    \includegraphics[width=\linewidth]{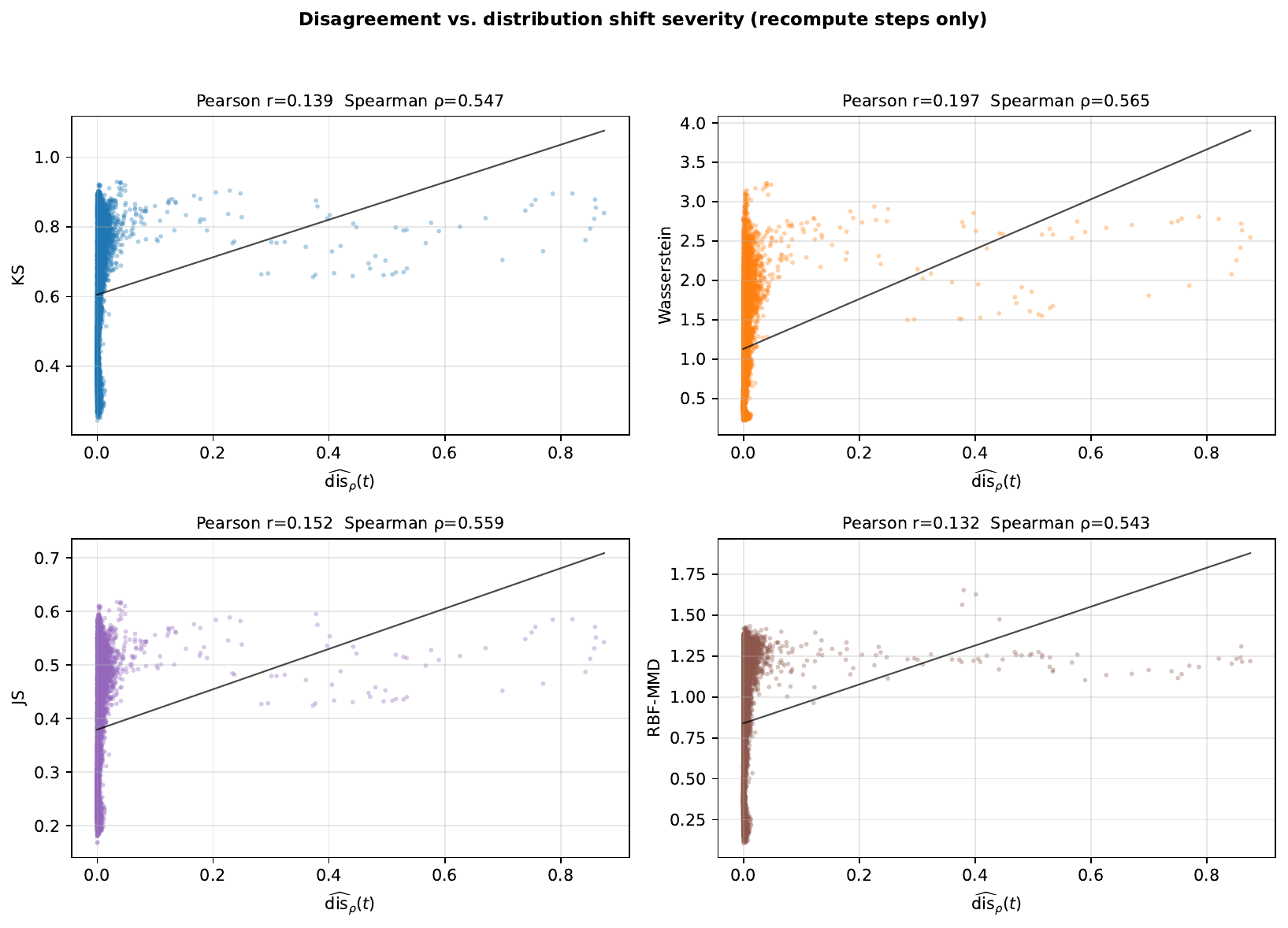}
    \end{minipage}
    \caption{Scatter diagnostics for the online monitoring signals. Top row: online certificate versus realized forecasting error for ETTh, ILI, and WEATHER-5K. Bottom row: posterior disagreement versus distribution-shift severity for the same datasets.}
    \label{fig:correlation_scatter_diagnostics}
    \end{figure}
    
    \FloatBarrier
    \subsection{Sensitivity Analysis}
    \label{sec:sensitivity_analysis}
    
    Table~\ref{tab:tau_sensitivity} shows that changing the disagreement scale mainly changes the certificate and
    disagreement scale rather than point prediction: MAE remains between $0.6728$ and $0.6737$ over factors from $0.25$ to
    $64$. Table~\ref{tab:k_sensitivity_runtime} shows that increasing $K$ from $2$ to $32$ changes MAE by only
    $4.27\times10^{-7}$ and MSE by $3.24\times10^{-6}$ in the representative ETTh/TCN/$H{=}96$ setting, while runtime
    increases at $K=32$ due to the $496$ pairwise comparisons. Figure~\ref{fig:sensitivity_curves} plots these sweeps and
    the predictable variance-proxy sweep.
    
    \begin{table}[!htbp]
    \centering
    
    \caption{Sensitivity to disagreement scale $\tau_d$ on ETTh with TCN at $H=96$ on OOD ETTh2.}
    \begin{adjustbox}{max width=\textwidth}
    \label{tab:tau_sensitivity}
    \begin{tabular}{rrrrr}
    \toprule
    $\tau_d$ factor & MAE & MSE & Cert. & Dis. \\
    \midrule
    0.25 & 0.6728 & 0.7694 & 1.4375 & 0.0000 \\
    0.5 & 0.6728 & 0.7694 & 1.4375 & 0.0000 \\
    1 & 0.6728 & 0.7694 & 1.4375 & 0.0000 \\
    2 & 0.6728 & 0.7698 & 1.4376 & 0.0005 \\
    4 & 0.6731 & 0.7793 & 1.4389 & 0.0139 \\
    8 & 0.6735 & 0.8703 & 1.4429 & 0.1555 \\
    16 & 0.6737 & 1.0314 & 1.4447 & 0.4091 \\
    32 & 0.6733 & 0.9383 & 1.4409 & 0.2571 \\
    64 & 0.6729 & 0.8488 & 1.4381 & 0.1381 \\
    \bottomrule
    \end{tabular}
    \end{adjustbox}
    \end{table}
    
    \begin{table}[!htbp]
    \centering
    
    \caption{Posterior-sample count and runtime on ETTh with TCN at $H=96$ on OOD ETTh2. Runtime is measured per online step.}
    \begin{adjustbox}{max width=\textwidth}
    \label{tab:k_sensitivity_runtime}
    \begin{tabular}{rrrrrrr}
    \toprule
    $K$ & Pairs & MAE & MSE & Cert. & Time (ms) & Steps/s \\
    \midrule
    2 & 1 & 0.672819251 & 0.769440815 & 1.437516313 & 102.15 & 9.79 \\
    4 & 6 & 0.672819032 & 0.769439362 & 1.437514619 & 103.08 & 9.70 \\
    8 & 28 & 0.672818925 & 0.769438442 & 1.437513823 & 103.83 & 9.63 \\
    16 & 120 & 0.672818874 & 0.769438162 & 1.437513388 & 96.76 & 10.33 \\
    32 & 496 & 0.672818824 & 0.769437596 & 1.437513077 & 239.59 & 4.17 \\
    \bottomrule
    \end{tabular}
    \end{adjustbox}
    \end{table}
    
    \begin{figure}[!htbp]
    \centering
    \begin{minipage}{0.32\textwidth}
    \centering
    \includegraphics[width=\linewidth]{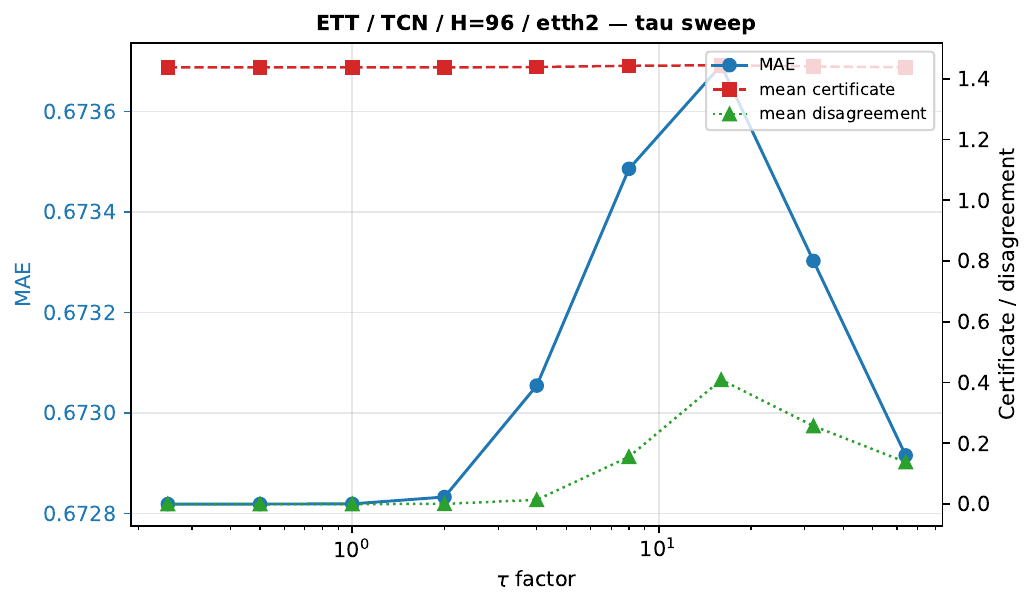}
    \end{minipage}
    \hfill
    \begin{minipage}{0.32\textwidth}
    \centering
    \includegraphics[width=\linewidth]{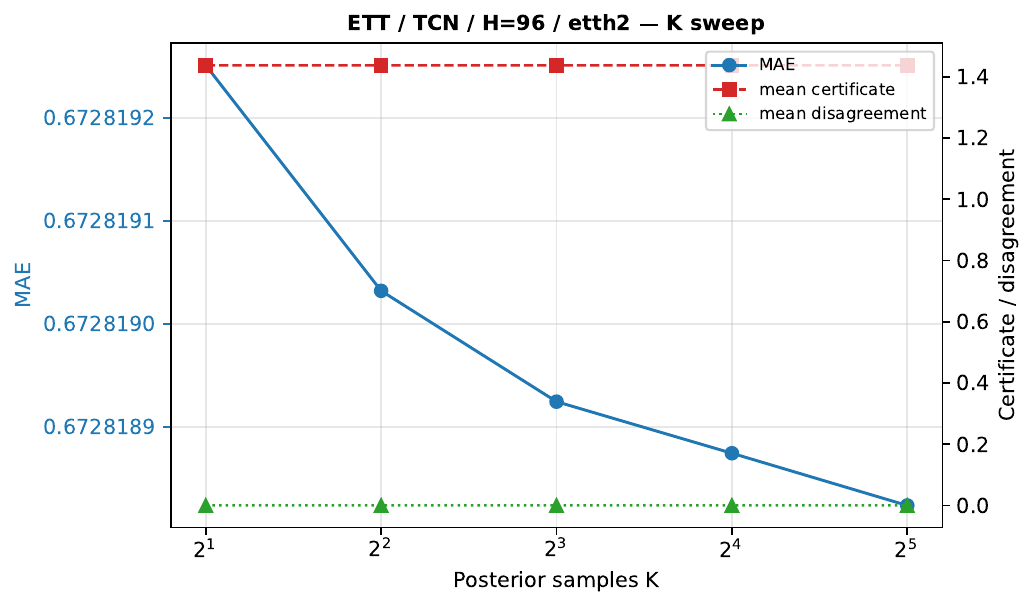}
    \end{minipage}
    \hfill
    \begin{minipage}{0.32\textwidth}
    \centering
    \includegraphics[width=\linewidth]{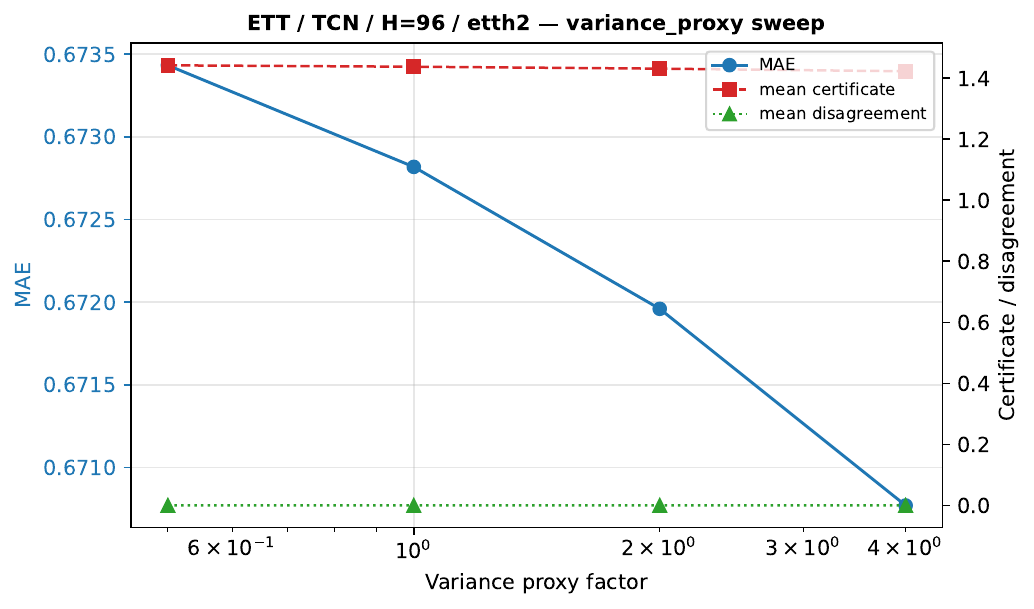}
    \end{minipage}
    \caption{Sensitivity diagnostics on ETTh with TCN at $H=96$ on OOD ETTh2. The panels summarize the disagreement-scale sweep, posterior-sample-count sweep, and predictable variance-proxy sweep.}
    \label{fig:sensitivity_curves}
    \end{figure}
    
    Table~\ref{tab:variance_proxy_sensitivity} evaluates predictable variance-proxy misspecification. Moderate
    under-estimation or over-estimation mainly changes certificate conservativeness rather than destabilizing calibrated
    forecasts in this shifted stream. Table~\ref{tab:disagreement_normalization} shows that clipped and normalized
    disagreement variants are stable, while an unclipped variant violates the bounded proxy requirement and becomes
    numerically unstable.
    
    Together, these sweeps show that OMPB is not relying on a fragile hyperparameter setting in the representative
    ETTh/TCN/$H{=}96$ stream. The disagreement scale $\tau_d$ primarily rescales monitoring; the posterior sample count $K$
    has little effect beyond computational cost once a small number of posterior pairs is available; and moderate
    misspecification of the predictable variance proxy changes the certificate level rather than the forecast trajectory.
    The boundedness of the disagreement score is the important invariant: removing clipping breaks the proxy-risk
    interpretation and produces the unstable values in Table~\ref{tab:disagreement_normalization}.
    
    \begin{table}[!htbp]
    \centering
    
    \caption{Variance-proxy misspecification on ETTh with TCN at $H=96$ on OOD ETTh2.}
    \begin{adjustbox}{max width=\textwidth}
    \label{tab:variance_proxy_sensitivity}
    \begin{tabular}{cccc}
    \toprule
    $V$ factor & MAE & MSE & Cert. \\
    \midrule
    0.5 & 0.6734 & 0.7659 & 1.4421 \\
    1 & 0.6728 & 0.7694 & 1.4375 \\
    2 & 0.6720 & 0.7743 & 1.4312 \\
    4 & 0.6708 & 0.7808 & 1.4226 \\
    \bottomrule
    \end{tabular}
    \end{adjustbox}
    \end{table}
    
    \begin{table}[!htbp]
    \centering
    
    \caption{Disagreement normalization sensitivity on ETTh with TCN at $H=96$ on OOD ETTh2. The unclipped variant is diagnostic only because it violates boundedness.}
    \begin{adjustbox}{max width=\textwidth}
    \label{tab:disagreement_normalization}
    \begin{tabular}{ccccc}
    \toprule
    Mode & MAE & MSE & Cert. & Dis. \\
    \midrule
    Clipped & 0.6728 & 0.7694 & 1.4375 & 0.0312 \\
    Normalized & 0.6728 & 0.7798 & 1.4377 & 0.0201 \\
    Unclipped & 100.6700 & 610.6500 & 601.7700 & 1216.6500 \\
    \bottomrule
    \end{tabular}
    \end{adjustbox}
    \end{table}
    \FloatBarrier
    
    \subsection{Feedback-Delay Analysis}
    \label{sec:feedback_delay_analysis}
    
    Table~\ref{tab:feedback_delay} isolates the effect of longer label delays while current target inputs remain available
    for shift sensing. Performance degrades as supervision becomes increasingly stale, supporting the predict-then-update
    design: unlabeled current inputs can be used immediately for certificate monitoring, while delayed labels are
    incorporated conservatively once available. Figure~\ref{fig:feedback_delay_curves} shows the corresponding rolling MAE
    trajectories.
    
    This analysis separates two information channels in the online protocol. Current target inputs are available before the
    forecast and can be used immediately to estimate source-target disagreement. Target labels, however, arrive only after a
    delay and may describe a regime that has already moved on. The results show that OMPB still benefits from immediate
    input-side shift sensing, but accuracy worsens as the supervised feedback delay grows from one step to $2H$. This is the
    setting where aggressive online tuning is most likely to overfit stale supervision, motivating the certificate-driven
    regularization and frozen-backbone design.
    
    \begin{table}[!htbp]
    \centering
    
    \caption{Feedback-delay analysis for OMPB. Labels are released after delay \(D\), expressed relative to prediction horizon \(H\). The one-step delay is the main baseline-comparison protocol.}
    \begin{adjustbox}{max width=\textwidth}
    \label{tab:feedback_delay}
    \begin{tabular}{cccc}
    \toprule
    Setting & Delay \(D\) & MAE & MSE \\
    \midrule
    \multirow{4}{*}{\begin{tabular}[c]{@{}l@{}}ETTh / TCN\\OOD ETTh2, \(H=96\)\end{tabular}}
    &Original & 2.5636 & 8.9076 \\
    & 1 step & 0.5570 & 0.5912 \\
    & \(H/2\) & 0.8867 & 0.8905 \\
    & \(H\)   & 1.0047 & 1.1443 \\
    & \(2H\) & 1.8608 & 2.7751 \\
    \midrule
    \multirow{4}{*}{\begin{tabular}[c]{@{}l@{}}ILI / GPT4TS\\OOD COVID, \(H=48\)\end{tabular}}
    & Original & 1.3013 & 4.2179 \\
    & 1 step & 0.9263 & 2.6351 \\
    & \(H/2\) & 1.3225 & 2.8780 \\
    & \(H\)   & 1.8200 & 4.9499 \\
    & \(2H\) & 2.0668 & 8.7554 \\
    \midrule
    \multirow{4}{*}{\begin{tabular}[c]{@{}l@{}}WEATHER-5K / Autoformer\\OOD far NYC, \(H=96\)\end{tabular}}
    &Original & 0.9743 & 2.0091 \\
    & 1 step & 0.4344 & 0.4720 \\
    & \(H/2\) & 0.5858 & 0.8004 \\
    & \(H\)   & 1.0746 & 1.0307 \\
    & \(2H\) & 1.6230 & 2.3945 \\
    \bottomrule
    \end{tabular}
    \end{adjustbox}
    \end{table}
    
    \begin{figure}[!htbp]
    \centering
    \begin{minipage}{0.32\textwidth}
    \centering
    \includegraphics[width=\linewidth]{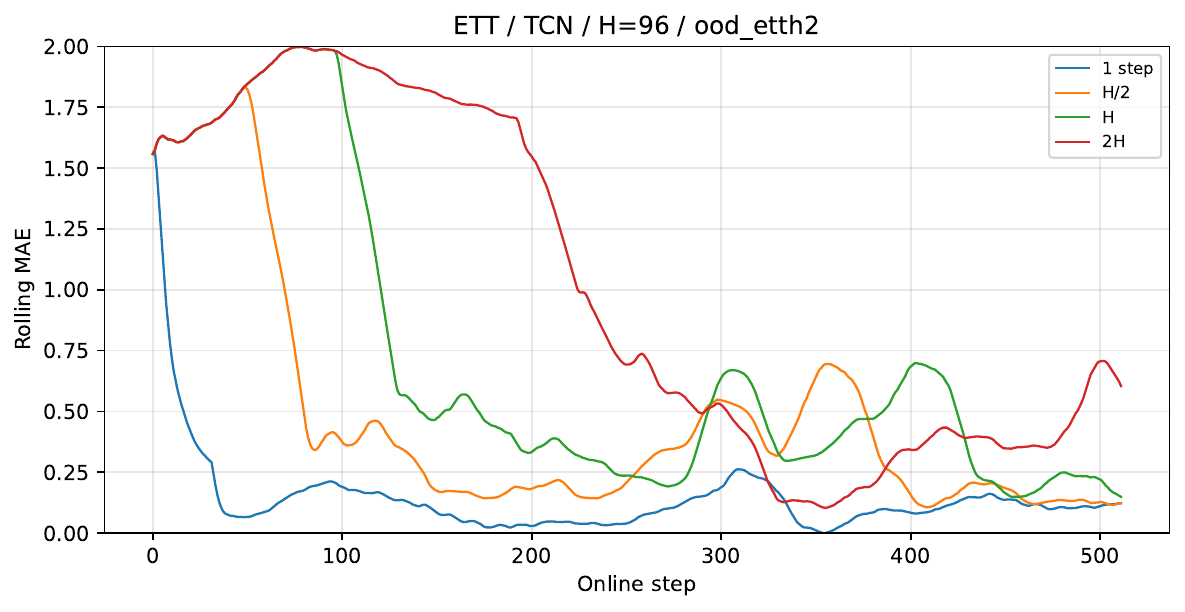}
    \end{minipage}
    \hfill
    \begin{minipage}{0.32\textwidth}
    \centering
    \includegraphics[width=\linewidth]{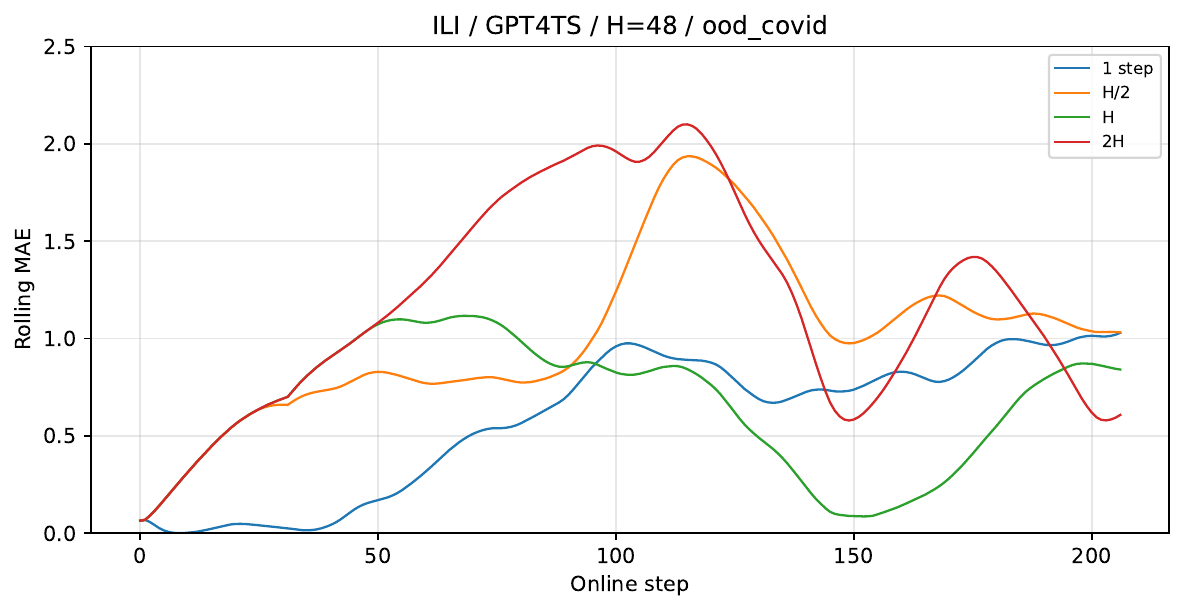}
    \end{minipage}
    \hfill
    \begin{minipage}{0.32\textwidth}
    \centering
    \includegraphics[width=\linewidth]{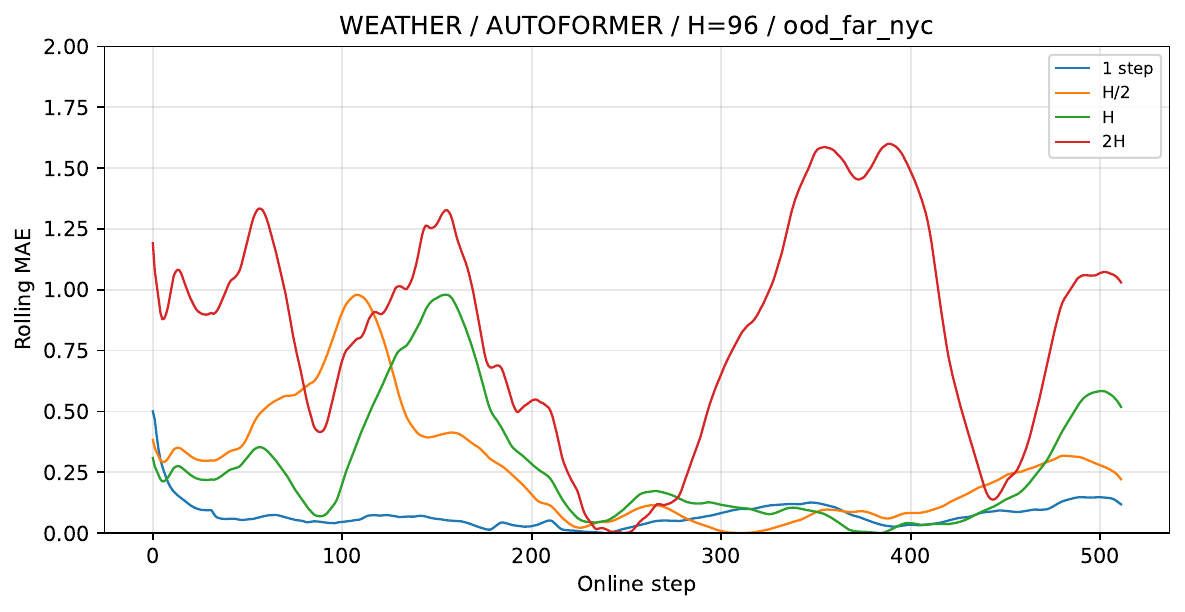}
    \end{minipage}
    \caption{Rolling MAE under artificial feedback delays for ETTh, ILI, and WEATHER-5K. Longer delays make supervision increasingly stale, while current target inputs remain available for shift sensing.}
    \label{fig:feedback_delay_curves}
    \end{figure}
    \FloatBarrier
    
    \subsection{Posterior Miscalibration}
    \label{sec:posterior_miscalibration}
    
    Posterior miscalibration changes the magnitude and selectivity of the certificate because disagreement is computed from
    posterior forecast samples. If the posterior variance is collapsed, samples can become too similar and disagreement may
    understate epistemic uncertainty. If the posterior variance is inflated, disagreement may become less selective and
    overstate uncertainty in benign periods. Figure~\ref{fig:posterior_miscalibration_diagnostics} visualizes this behavior
    by scaling the posterior variance while keeping the posterior mean fixed. The main temporal shift patterns remain
    visible, but the certificate scale changes, reinforcing that the certificate is a monitoring signal rather than a tight
    replacement for MAE/MSE.
    
    \begin{figure}[!htbp]
    \centering
    \begin{minipage}{0.32\textwidth}
    \centering
    \includegraphics[width=\linewidth]{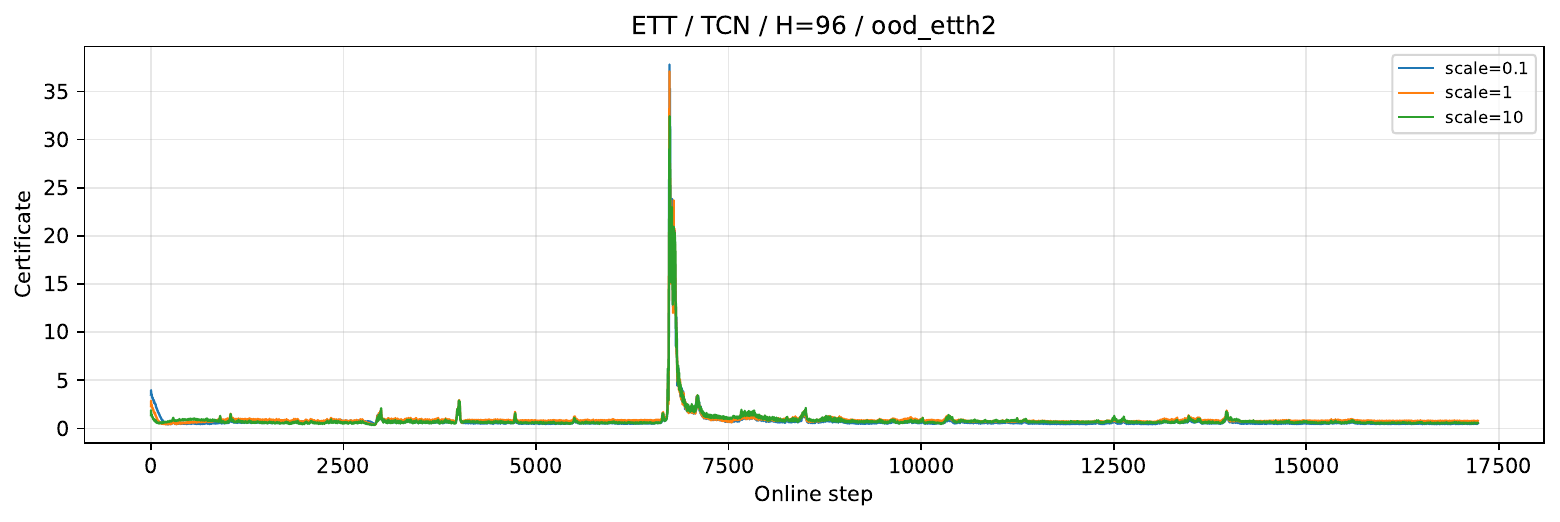}
    \end{minipage}
    \hfill
    \begin{minipage}{0.32\textwidth}
    \centering
    \includegraphics[width=\linewidth]{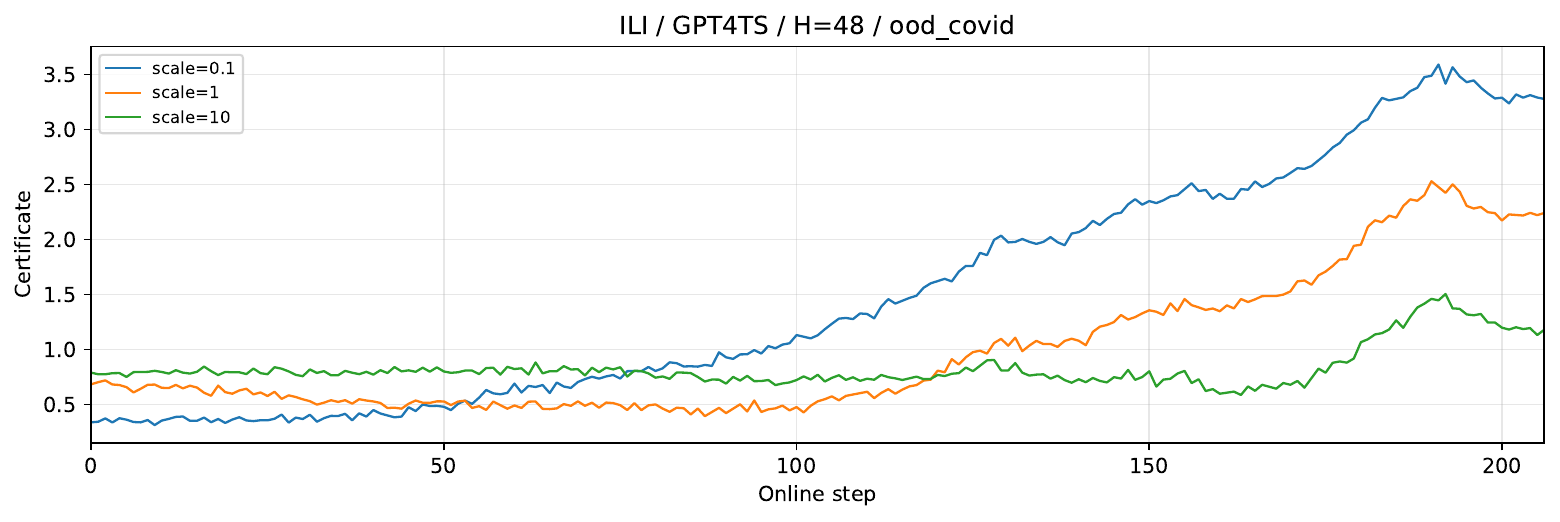}
    \end{minipage}
    \hfill
    \begin{minipage}{0.32\textwidth}
    \centering
    \includegraphics[width=\linewidth]{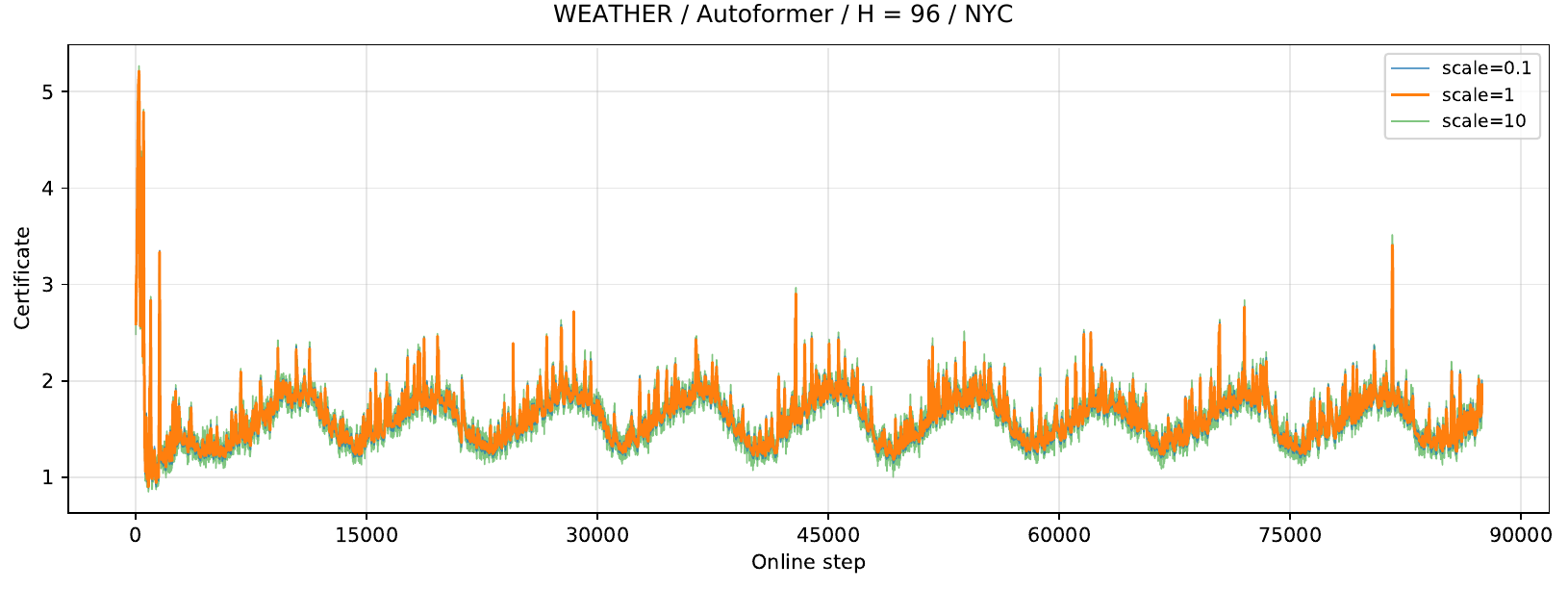}
    \end{minipage}
    \caption{Posterior-miscalibration diagnostics obtained by scaling the posterior variance while keeping the posterior mean fixed. The certificate scale and selectivity change with posterior variance, while the main temporal shift patterns remain visible.}
    \label{fig:posterior_miscalibration_diagnostics}
    \end{figure}
    \FloatBarrier

    \clearpage
    
    \subsection{Extended ETTh Experiment Result}
    \label{sec:extended_ett_result}
    
    Table~\ref{tab:etth_all_split} reports performance under covariate shift from ETTh1 to ETTh2 across
    short, medium, and long forecasting horizons. The shift causes large degradation for unadapted models, and the gap
    generally widens as the horizon increases, especially for the TCN and Autoformer backbones. Across all
    models and all horizons, OMPB achieves the lowest errors in both ID and OOD evaluations,
    indicating that the same calibration mechanism transfers reliably across backbone families. The improvements are most
    pronounced on the OOD ETTh2 split, where OMPB consistently reduces the large error increases seen in the
    original models and stabilizes long-horizon forecasting.
    \begin{table*}[h]
    \centering
    
    \caption{ETTh forecasting results (MAE | MSE) under In-Distribution (ID) and Out-of-Distribution (OOD) settings. Lower values are better, and the best results are shown in \textbf{bold}.}
    
    \label{tab:etth_all_split}
    \renewcommand{\arraystretch}{1}
    \setlength{\tabcolsep}{6pt}
    \begin{adjustbox}{max width=\textwidth}
    \begin{tabular}{cccccccc}
    \toprule
    \multirow{2}{*}{Backbone} &
    \multirow{2}{*}{Method} &
    \multicolumn{2}{c}{H=24} &
    \multicolumn{2}{c}{H=48} &
    \multicolumn{2}{c}{H=96} \\
    \cmidrule(lr){3-4}\cmidrule(lr){5-6}\cmidrule(lr){7-8}
    & & ID (ETTh1) & OOD (ETTh2) & ID (ETTh1) & OOD (ETTh2) & ID (ETTh1) & OOD (ETTh2) \\
    \midrule
    \multirow{5}{*}{TCN}
    & Original
    & 0.4588 | 0.4521 & 2.5145 | 8.9095
    & 0.5091 | 0.5286 & 2.5992 | 8.9657
    & 0.5330 | 0.5715 & 2.5635 | 8.9076 \\
    & SOLID
    & 0.4741 | 0.4333 & 0.7553 | 1.0175
    & 0.5106 | 0.4936 & 0.8347 | 1.2488
    & 0.5363 | 0.5389 & 0.8886 | 1.4424 \\
    & OneNet
    & 0.5838 | 0.6739 & 0.8234 | 1.3034
    & 0.6056 | 0.7339 & 0.8770 | 1.4688
    & 0.6606 | 0.8488 & 1.0073 | 1.9592 \\
    & PROCEED
    & 0.4917 | 0.5161 & 1.1002 | 0.7768
    & 0.5368 | 0.6213 & 0.8590 | 1.4590
    & 0.5889 | 0.7086 & 0.9608 | 1.8119 \\
    \cline{2-8}
    & OMPB
    & \textbf{0.3635} | \textbf{0.2724} & \textbf{0.5292} | \textbf{0.5659}
    & \textbf{0.3608} | \textbf{0.2629} & \textbf{0.5362} | \textbf{0.5689}
    & \textbf{0.3813} | \textbf{0.2800} & \textbf{0.5570} | \textbf{0.5912} \\
    \midrule
    \multirow{5}{*}{Autoformer}
    & Original
    & 0.5147 | 0.5605 & 1.1207 | 1.9961
    & 0.5215 | 0.5412 & 1.1473 | 2.0931
    & 0.5325 | 0.5651 & 1.3298 | 2.6778 \\
    & SOLID
    & 0.4899 | 0.4813 & 0.8410 | 1.2997
    & 0.5407 | 0.5463 & 0.8674 | 1.3298
    & 0.5446 | 0.5629 & 0.9501 | 1.6828 \\
    & OneNet
    & 0.4786 | 0.4815 & 0.7311 | 0.9580
    & 0.5288 | 0.5496 & 0.7748 | 1.0970
    & 0.5571 | 0.6026 & 0.8395 | 1.2940 \\
    & PROCEED
    & 0.5119 | 0.5967 & 0.7580 | 1.1151
    & 0.5821 | 0.7914 & 0.8628 | 1.7272
    & 0.6827 | 1.0923 & 1.0509 | 2.7848 \\
    \cline{2-8}
    & OMPB
    & \textbf{0.3814} | \textbf{0.3018} & \textbf{0.6785} | \textbf{0.8535}
    & \textbf{0.3812} | \textbf{0.2921} & \textbf{0.6738} | \textbf{0.8390}
    & \textbf{0.3845} | \textbf{0.2858} & \textbf{0.6033} | \textbf{0.6882} \\
    \midrule
    \multirow{5}{*}{GPT4TS}
    & Original
    & 0.3794 | 0.3217 & 0.5485 | 0.7119
    & 0.4093 | 0.3736 & 0.5966 | 0.8270
    & 0.4473 | 0.4167 & 0.6738 | 1.0241 \\
    & SOLID
    & 0.3763 | 0.3218 & 0.6172 | 0.7745
    & 0.4172 | 0.3855 & 0.6959 | 0.9761
    & 0.4451 | 0.4160 & 0.7499 | 1.1224 \\
    & OneNet
    & 0.3740 | 0.3184 & 0.6115 | 0.7512
    & 0.4092 | 0.3746 & 0.6800 | 0.9199
    & 0.4466 | 0.4146 & 0.7489 | 1.1222 \\
    & PROCEED
    & 0.4843 | 0.5082 & 0.7210 | 1.0268
    & 0.5506 | 0.6789 & 0.8118 | 1.3270
    & 0.6237 | 0.8802 & 0.9364 | 1.8907 \\
    \cline{2-8}
    & OMPB
    & \textbf{0.3156} | \textbf{0.2165} & \textbf{0.5242} | \textbf{0.6616}
    & \textbf{0.3283} | \textbf{0.2215} & \textbf{0.5605} | \textbf{0.7278}
    & \textbf{0.3535} | \textbf{0.2457} & \textbf{0.5708} | \textbf{0.7377} \\
    \midrule
    \end{tabular}
    \end{adjustbox}
    
    \begin{adjustbox}{max width=\textwidth}
    \begin{tabular}{cccccccc}
    \midrule
    \multirow{2}{*}{Backbone} &
    \multirow{2}{*}{Method} &
    \multicolumn{2}{c}{H=192} &
    \multicolumn{2}{c}{H=336} &
    \multicolumn{2}{c}{H=720} \\
    \cmidrule(lr){3-4}\cmidrule(lr){5-6}\cmidrule(lr){7-8}
    & & ID (ETTh1) & OOD (ETTh2) & ID (ETTh1) & OOD (ETTh2) & ID (ETTh1) & OOD (ETTh2) \\
    \midrule
    \multirow{5}{*}{TCN}
    & Original & 0.5918 | 0.6626 & 2.7849 | 10.5696 & 0.6193 | 0.7304 & 3.1380 | 13.7387 & 0.6362 | 0.7331 & 3.3334 | 14.9363 \\
    & SOLID    & 0.6933 | 0.9309 & 0.9004 | 1.5533 & 0.7110 | 0.9710 & 0.9488 | 1.7629 & 0.7789 | 1.1086 & 1.0423 | 2.0799 \\
    & OneNet   & 0.6924 | 0.8970 & 1.0933 | 2.4125 & 0.7082 | 0.9191 & 1.2228 | 3.3373 & 0.8616 | 1.2662 & 1.5163 | 4.6774 \\
    & PROCEED  & 0.6273 | 0.7819 & 0.9997 | 1.8750 & 0.6490 | 0.8225 & 1.1293 | 2.5799 & 0.8525 | 1.4724 & 1.5979 | 5.6582 \\
    \cline{2-8}
    & OMPB
    & \textbf{0.4206} | \textbf{0.3283} & \textbf{0.5766} | \textbf{0.6315}
    & \textbf{0.4484} | \textbf{0.3674} & \textbf{0.5981} | \textbf{0.6782}
    & \textbf{0.4850} | \textbf{0.4260} & \textbf{0.5802} | \textbf{0.6536} \\
    \midrule
    \multirow{5}{*}{Autoformer}
    & Original & 0.6170 | 0.7415 & 1.2245 | 2.4171 & 0.6830 | 0.8846 & 1.1962 | 2.3453 & 0.7516 | 1.0339 & 1.5399 | 3.5693 \\
    & SOLID    & 0.5640 | 0.6042 & 0.9946 | 1.8232 & 0.5937 | 0.6374 & 1.0386 | 2.0287 & 0.6616 | 0.7611 & 1.1005 | 2.2834 \\
    & OneNet   & 0.6001 | 0.6434 & 0.9121 | 1.5679 & 0.6140 | 0.6861 & 0.9812 | 1.8274 & 0.6736 | 0.7903 & 1.0546 | 2.0943 \\
    & PROCEED  & 0.7813 | 1.4344 & 1.3118 | 3.5772 & 0.8767 | 1.8344 & 1.5051 | 4.6450 & 1.2047 | 3.2827 & 1.5064 | 4.4675 \\
    \cline{2-8}
    & OMPB
    & \textbf{0.4147} | \textbf{0.3197} & \textbf{0.6308} | \textbf{0.7383}
    & \textbf{0.4373} | \textbf{0.3481} & \textbf{0.6461} | \textbf{0.7764}
    & \textbf{0.4997} | \textbf{0.4430} & \textbf{0.6085} | \textbf{0.6823} \\
    \midrule
    \multirow{5}{*}{GPT4TS}
    & Original & 0.4897 | 0.4879 & 0.7538 | 1.2570 & 0.5320 | 0.5588 & 0.8547 | 1.5203 & 0.6283 | 0.7315 & 0.8873 | 1.5625 \\
    & SOLID    & 0.4963 | 0.4980 & 0.8454 | 1.4495 & 0.5333 | 0.5625 & 0.9131 | 1.6823 & 0.6320 | 0.7414 & 1.0114 | 2.0592 \\
    & OneNet   & 0.4947 | 0.4884 & 0.8362 | 1.4102 & 0.5364 | 0.5572 & 0.8975 | 1.6239 & 0.6368 | 0.7551 & 0.9827 | 1.9062 \\
    & PROCEED  & 0.6914 | 1.0896 & 1.0830 | 2.8221 & 0.6911 | 1.0597 & 1.1482 | 2.8742 & 0.9177 | 1.7158 & 1.3659 | 3.8811 \\
    \cline{2-8}
    & OMPB
    & \textbf{0.3851} | \textbf{0.2832} & \textbf{0.5807} | \textbf{0.7453}
    & \textbf{0.4095} | \textbf{0.3158} & \textbf{0.5719} | \textbf{0.7304}
    & \textbf{0.4682} | \textbf{0.3983} & \textbf{0.5664} | \textbf{0.6837} \\
    \bottomrule
    \end{tabular}
    \end{adjustbox}
    \end{table*}
    \FloatBarrier
    \subsection{Extended WEATHER-5K Numerical Result}
    \label{sec:weather_result}
    Table~\ref{tab:weather_all_split} reports WEATHER-5K results under covariate shift from Miami to Atlanta and New York City (NYC) over multiple horizons. Across all horizons and backbones, OMPB achieves the lowest MAE and MSE on the ID Miami stream and both OOD streams, with the largest improvements appearing under the farther shift at NYC and longer horizons where errors accumulate. Competing adaptation baselines are more sensitive to model choice and shift severity, including instability for Autoformer under SOLID and PROCEED with very large MSE spikes, while OMPB remains stable. Overall, it supports that certificate-driven online calibration improves robustness under covariate shift without sacrificing ID accuracy.

    \begin{table*}[h]
    \centering
    
    \caption{WEATHER-5K forecasting results (MAE | MSE) under the In-Distribution (ID) location Miami and Out-of-Distribution (OOD) locations Atlanta and New York City (NYC). Lower values are better, and the best results are shown in \textbf{bold}.}
    
    \label{tab:weather_all_split}
    \renewcommand{\arraystretch}{1}
    \setlength{\tabcolsep}{5pt}
    \begin{adjustbox}{max width=\textwidth}
    \begin{tabular}{cccccccc}
    \toprule
    \multirow{2}{*}{Backbone} &
    \multirow{2}{*}{Method} &
    \multicolumn{3}{c}{H=24} &
    \multicolumn{3}{c}{H=48} \\
    \cmidrule(lr){3-5}\cmidrule(lr){6-8}
    & & ID (Miami) & OOD\_near (Atlanta) & OOD\_far (NYC)
      & ID (Miami) & OOD\_near (Atlanta) & OOD\_far (NYC) \\
    \midrule
    \multirow{5}{*}{TCN}
    & Original  & 0.3353 | 0.2700 & 0.5652 | 0.7230 & 0.8012 | 1.4215 & 0.3950 | 0.3520 & 0.6884 | 1.0718 & 1.0019 | 2.2040 \\
    & SOLID     & 0.3503 | 0.2979 & 0.6423 | 0.7885 & 0.8825 | 1.4050 & 0.4131 | 0.3943 & 0.7579 | 1.0732 & 1.0471 | 2.0462 \\
    & OneNet    & 0.3356 | 0.2740 & 0.6711 | 0.7662 & 0.9032 | 1.3966 & 0.4035 | 0.3651 & 0.8334 | 1.2209 & 1.1553 | 2.3156 \\
    & PROCEED   & 0.3546 | 0.2881 & 0.5658 | 0.5760 & 0.5988 | 0.6640 & 0.3656 | 0.3054 & 0.5783 | 0.5949 & 0.6099 | 0.6826 \\
    \cline{2-8}
    & OMPB      & \textbf{0.2650} | \textbf{0.1965} & \textbf{0.2952} | \textbf{0.2274} & \textbf{0.3487} | \textbf{0.3388}
                & \textbf{0.2822} | \textbf{0.2085} & \textbf{0.3077} | \textbf{0.2350} & \textbf{0.3538} | \textbf{0.3346} \\
    \midrule
    \multirow{5}{*}{Autoformer}
    & Original  & 0.4516 | 0.4274 & 0.6640 | 0.9252 & 0.8713 | 1.6349 & 0.4943 | 0.4997 & 0.7221 | 1.1036 & 0.9551 | 2.0064 \\
    & SOLID     & 0.6349 | 0.8385 & 4.0740 | 113.6623 & 5.6165 | 221.6814 & 0.6209 | 0.7811 & 1.3688 | 3.6325 & 1.7100 | 6.0316 \\
    & OneNet    & 0.4276 | 0.4094 & 0.7397 | 1.1584 & 0.9622 | 1.8210 & 0.5001 | 0.5410 & 0.8979 | 1.8387 & 1.1541 | 2.7226 \\
    & PROCEED   & 0.4011 | 1.0700 & 1.2095 | 61.4170 & 0.9569 | 8.3814 & 0.4590 | 1.8525 & 0.6699 | 4.0502 & 0.9280 | 10.1585 \\
    \cline{2-8}
    & OMPB      & \textbf{0.3284} | \textbf{0.2672} & \textbf{0.3298} | \textbf{0.2693} & \textbf{0.5240} | \textbf{0.7036}
                & \textbf{0.3229} | \textbf{0.2523} & \textbf{0.3249} | \textbf{0.2568} & \textbf{0.4148} | \textbf{0.4587} \\
    \midrule
    \multirow{5}{*}{GPT4TS}
    & Original  & 0.3488 | 0.3101 & 0.4985 | 0.6045 & 0.6424 | 1.0096 & 0.4564 | 0.4381 & 0.6332 | 0.9230 & 1.0861 | 2.5832 \\
    & SOLID     & 0.3499 | 0.3106 & 0.6218 | 0.7641 & 0.8670 | 1.4208 & 0.4156 | 0.4113 & 0.7441 | 1.0358 & 1.0435 | 2.0300 \\
    & OneNet    & 0.3406 | 0.2980 & 0.6174 | 0.7555 & 0.8401 | 1.3305 & 0.4081 | 0.3975 & 0.7444 | 1.0684 & 1.0267 | 1.9687 \\
    & PROCEED   & 0.3718 | 0.3268 & 0.5658 | 0.6219 & 0.7240 | 1.0042 & 0.4153 | 0.3909 & 0.6344 | 0.7504 & 0.7771 | 1.0887 \\
    \cline{2-8}
    & OMPB      & \textbf{0.2759} | \textbf{0.2143} & \textbf{0.3025} | \textbf{0.2435} & \textbf{0.3639} | \textbf{0.3793}
                & \textbf{0.2970} | \textbf{0.2296} & \textbf{0.3192} | \textbf{0.2568} & \textbf{0.3747} | \textbf{0.3690} \\
    \bottomrule
    \end{tabular}
    \end{adjustbox}
    \begin{adjustbox}{max width=\textwidth}
    \begin{tabular}{cccccccc}
    \toprule
    \multirow{2}{*}{Backbone} &
    \multirow{2}{*}{Method} &
    \multicolumn{3}{c}{H=96} &
    \multicolumn{3}{c}{H=192} \\
    \cmidrule(lr){3-5}\cmidrule(lr){6-8}
    & & ID (Miami) & OOD\_near (Atlanta) & OOD\_far (NYC)
      & ID (Miami) & OOD\_near (Atlanta) & OOD\_far (NYC) \\
    \midrule
    \multirow{5}{*}{TCN}
    & Original  & 0.4522 | 0.4378 & 0.8081 | 1.4961 & 1.1416 | 2.8791 & 0.4920 | 0.4984 & 0.8865 | 1.7839 & 1.2355 | 3.3662 \\
    & SOLID     & 0.4845 | 0.5072 & 0.8557 | 1.3057 & 1.1481 | 2.4192 & 0.5111 | 0.5541 & 0.9040 | 1.4319 & 1.2023 | 2.6443 \\
    & OneNet    & 0.4660 | 0.4582 & 1.0049 | 1.8004 & 1.4128 | 3.5001 & 0.5031 | 0.5196 & 1.0728 | 2.0400 & 1.5399 | 4.1293 \\
    & PROCEED   & 0.3738 | 0.3149 & 0.5960 | 0.6219 & 0.6329 | 0.7183 & 0.4869 | 0.4343 & 0.6242 | 0.6807 & 0.6999 | 0.9811 \\
    \cline{2-8}
    & OMPB      & \textbf{0.2949} | \textbf{0.2175} & \textbf{0.3168} | \textbf{0.2433} & \textbf{0.3545} | \textbf{0.3320}
                & \textbf{0.4510} | \textbf{0.4123} & \textbf{0.5649} | \textbf{0.6401} & \textbf{0.6597} | \textbf{0.8936} \\
    \midrule
    \multirow{5}{*}{Autoformer}
    & Original  & 0.5292 | 0.5700 & 0.7967 | 1.2791 & 0.9743 | 2.0091 & 0.5392 | 0.5874 & 0.8029 | 1.3280 & 1.0082 | 2.1582 \\
    & SOLID     & 0.5591 | 0.6568 & 0.9696 | 1.6412 & 1.2473 | 2.8002 & 0.5721 | 0.6874 & 1.0644 | 1.9112 & 1.3505 | 3.3024 \\
    & OneNet    & 0.5524 | 0.6476 & 0.9339 | 1.6906 & 1.2067 | 2.7802 & 0.5322 | 0.5974 & 0.9518 | 1.5729 & 1.2218 | 2.6202 \\
    & PROCEED   & 0.4829 | 1.4929 & 0.7353 | 2.4313 & 0.8631 | 2.4542 & 0.5015 | 2.3647 & 0.7083 | 2.9423 & 0.9928 | 4.7564 \\
    \cline{2-8}
    & OMPB      & \textbf{0.3476} | \textbf{0.2829} & \textbf{0.3815} | \textbf{0.3215} & \textbf{0.4344} | \textbf{0.4720}
                & \textbf{0.4880} | \textbf{0.4698} & \textbf{0.5200} | \textbf{0.5374} & \textbf{0.5905} | \textbf{0.7158} \\
    \midrule
    \multirow{5}{*}{GPT4TS}
    & Original  & 0.4732 | 0.4995 & 0.7388 | 1.2004 & 0.9390 | 2.0504 & 0.5064 | 0.5561 & 0.7973 | 1.3488 & 1.0145 | 2.3644 \\
    & SOLID     & 0.4744 | 0.5021 & 0.8504 | 1.3222 & 1.1637 | 2.5236 & 0.5094 | 0.5633 & 0.9344 | 1.5319 & 1.2622 | 3.0000 \\
    & OneNet    & 0.4635 | 0.4854 & 0.8440 | 1.3160 & 1.1198 | 2.2909 & 0.4984 | 0.5467 & 0.9183 | 1.4836 & 1.1933 | 2.5482 \\
    & PROCEED   & 0.4392 | 0.4322 & 0.6993 | 0.8949 & 0.8822 | 1.3816 & 0.4884 | 0.4663 & 0.7031 | 0.9022 & 0.8816 | 1.3714 \\
    \cline{2-8}
    & OMPB      & \textbf{0.3209} | \textbf{0.2534} & \textbf{0.3368} | \textbf{0.2694} & \textbf{0.3769} | \textbf{0.3553}
                & \textbf{0.4560} | \textbf{0.4372} & \textbf{0.5421} | \textbf{0.5984} & \textbf{0.6289} | \textbf{0.8003} \\
    \bottomrule
    \end{tabular}
    \end{adjustbox}
    
    \begin{adjustbox}{max width=\textwidth}
    \begin{tabular}{cccccccc}
    \toprule
    \multirow{2}{*}{Backbone} &
    \multirow{2}{*}{Method} &
    \multicolumn{3}{c}{H=336} &
    \multicolumn{3}{c}{H=720} \\
    \cmidrule(lr){3-5}\cmidrule(lr){6-8}
    & & ID (Miami) & OOD\_near (Atlanta) & OOD\_far (NYC)
      & ID (Miami) & OOD\_near (Atlanta) & OOD\_far (NYC) \\
    \midrule
    \multirow{5}{*}{TCN}
    & Original  & 0.5093 | 0.5240 & 0.9353 | 2.0136 & 1.3262 | 3.9330 & 0.5132 | 0.5364 & 0.9719 | 2.1831 & 1.3843 | 4.2720 \\
    & SOLID     & 0.5376 | 0.6089 & 0.9120 | 1.4499 & 1.2156 | 2.6454 & 0.5540 | 0.6342 & 0.9196 | 1.4763 & 1.2202 | 2.6069 \\
    & OneNet    & 0.5160 | 0.5493 & 1.0949 | 2.1728 & 1.5602 | 4.3384 & 0.5204 | 0.5536 & 1.1584 | 2.4916 & 1.6788 | 5.1249 \\
    & PROCEED   & 0.4975 | 0.4500 & 0.6416 | 0.7416 & 0.7074 | 1.3665 & 0.5086 | 0.5674 & 0.6667 | 0.8648 & 0.7342 | 1.9637 \\
    \cline{2-8}
    & OMPB      & \textbf{0.4638} | \textbf{0.4300} & \textbf{0.5854} | \textbf{0.6892} & \textbf{0.6791} | \textbf{0.9463}
                & \textbf{0.4787} | \textbf{0.4592} & \textbf{0.6266} | \textbf{0.8021} & \textbf{0.7163} | \textbf{1.0459} \\
    \midrule
    \multirow{5}{*}{Autoformer}
    & Original  & 0.5567 | 0.6227 & 0.8481 | 1.4951 & 1.0467 | 2.3192 & 0.5611 | 0.6281 & 0.8727 | 1.5566 & 1.0858 | 2.4740 \\
    & SOLID     & 0.5914 | 0.7236 & 1.4654 | 6.8197 & 1.9158 | 12.5096 & 0.6559 | 0.9700 & 1.2685 | 5.1476 & 1.6346 | 9.4656 \\
    & OneNet    & 0.6020 | 0.7549 & 1.0413 | 2.5055 & 1.3047 | 3.2655 & 0.5929 | 0.7233 & 1.0089 | 1.7897 & 1.3071 | 2.9676 \\
    & PROCEED   & 0.5366 | 2.8874 & 0.7090 | 1.6006 & 0.9550 | 2.2657 & 0.5575 | 2.8859 & 0.8005 | 2.1923 & 1.0430 | 3.9744 \\
    \cline{2-8}
    & OMPB      & \textbf{0.5043} | \textbf{0.5027} & \textbf{0.5402} | \textbf{0.5751} & \textbf{0.5965} | \textbf{0.7217}
                & \textbf{0.5406} | \textbf{0.5662} & \textbf{0.5873} | \textbf{0.6737} & \textbf{0.6485} | \textbf{0.8298} \\
    \midrule
    \multirow{5}{*}{GPT4TS}
    & Original  & 0.5299 | 0.5968 & 0.8238 | 1.4287 & 1.0442 | 2.4666 & 0.5603 | 0.6480 & 0.8760 | 1.5996 & 1.0967 | 2.6505 \\
    & SOLID     & 0.5340 | 0.6081 & 0.9351 | 1.5195 & 1.2599 | 2.8735 & 0.5581 | 0.6474 & 0.9803 | 1.6520 & 1.2957 | 2.8822 \\
    & OneNet    & 0.5243 | 0.5919 & 0.9394 | 1.5612 & 1.2302 | 2.6965 & 0.5444 | 0.6212 & 0.9607 | 1.5904 & 1.2453 | 2.6588 \\
    & PROCEED   & 0.5018 | 0.5205 & 0.7090 | 0.9175 & 0.9098 | 1.4921 & 0.5205 | 0.5570 & 0.7146 | 0.9361 & 0.9351 | 1.6773 \\
    \cline{2-8}
    & OMPB      & \textbf{0.4721} | \textbf{0.4654} & \textbf{0.5589} | \textbf{0.6342} & \textbf{0.6480} | \textbf{0.8550}
                & \textbf{0.4945} | \textbf{0.5039} & \textbf{0.6038} | \textbf{0.7484} & \textbf{0.6751} | \textbf{0.9105} \\
    \bottomrule
    \end{tabular}
    \end{adjustbox}
    \end{table*}
    
    \FloatBarrier

    \end{document}